\pgfplotsset{width=10cm,compat=1.9}
\newcommand{\R}{\mathbb{R}}
\renewcommand{\phi}{\varphi}
\newtheorem{lemma}{Lemma}
\newcommand\blfootnote[1]{%
  \begingroup
  \renewcommand\thefootnote{}\footnote{#1}%
  \addtocounter{footnote}{-1}%
  \endgroup
}
\title{Separation Results between Fixed-Kernel and Feature-Learning Probability Metrics}
\author[a]{Carles Domingo-Enrich}
\author[b]{Youssef Mroueh}
\affil[a]{Courant Institute of Mathematical Sciences, New York University}
\affil[b]{IBM Research AI}
\begin{document}

\maketitle

\begin{abstract}
Several works in implicit and explicit generative modeling empirically observed that feature-learning discriminators  outperform  fixed-kernel discriminators in terms of the sample quality
of the models. We provide separation results between probability metrics with fixed-kernel and feature-learning discriminators using the function classes $\mathcal{F}_2$  and $\mathcal{F}_1$ respectively, which were developed to study overparametrized two-layer neural networks.
In particular, we construct pairs of distributions over hyper-spheres that can not be discriminated by  fixed kernel $(\mathcal{F}_2)$ integral probability metric (IPM) and Stein discrepancy (SD) in high dimensions, but that can be discriminated by their feature learning ($\mathcal{F}_1$) counterparts.   
To further study the separation we provide links between the $\mathcal{F}_1$ and $\mathcal{F}_2$ IPMs with sliced Wasserstein distances. Our work suggests that fixed-kernel discriminators perform worse than their feature learning counterparts because their corresponding metrics are weaker.
\end{abstract}

\section{Introduction} \label{sec:intro}
The field of generative modeling, whose aim is to generate artificial samples of some target distribution given some true samples of it, is broadly divided into two types of models \citep{mohamed2017learning}: explicit generative models, which involve learning an estimate of the log-density of the target distribution which is then sampled (e.g. energy-based models), and implicit generative models, where samples are generated directly by transforming some latent variable (e.g. generative adversarial networks \citep{GANoriginal}, normalizing flows \citep{rezende2015variational}). \blfootnote{Correspondence to: \textsuperscript{a}\url{cd2754@nyu.edu}, \textsuperscript{b}\url{mroueh@us.ibm.com}}

Several works have observed experimentally that both in implicit and in explicit generative models, using `adaptive' or `feature-learning' function classes as discriminators yields better generative performance than `lazy' or `kernel' function classes. Within implicit models, \cite{li2017mmd} show that generative moment matching networks (GMMN) generate significantly better samples for the CIFAR-10 and MNIST datasets when using maximum mean discrepancy (MMD) with learned instead of fixed features. For a related method and in a similar spirit, \cite{santos2019learning2} show that for image generation, fixed-feature discriminators are only successful when we take an amount of features exponential in the intrinsic dimension of the dataset. \cite{genevay2018learning} study implicit generative models with the Sinkhorn divergence as discriminator, 
and they also show that other than for simple datasets like MNIST, learning the Wasserstein cost is crucial for good performance.

As to explicit models, \cite{grathwohl2020learning} train energy-based models with a Stein discrepancy based on neural networks and show improved performance with respect to kernel classes. 
\cite{chang2020kernel} show that Stein variational gradient descent (SVGD) fails in high dimensions, and that learning the kernel helps. 
Given the abundant experimental evidence, the aim of this work is to provide some theoretical results that showcase the advantages of feature-learning over kernel discriminators. For the sake of simplicity, we compare the discriminative behavior of two function classes $\mathcal F_1$ and~$\mathcal F_2$, arising from infinite-width two-layer neural networks with different norms penalties on its weights \citep{bach2017breaking}. $\mathcal F_1$ displays an adaptive behavior, while $\mathcal F_2$ is an RKHS which consequently has a lazy behavior. Namely, our main contributions are:
\begin{enumerate}[label=(\roman*),leftmargin=0.5cm]
    \item We construct a sequence of pairs of distributions over hyperspheres $\mathbb{S}^{d-1}$ of increasing dimensions, such that the $\mathcal F_2$ integral probability metric (IPM) between the pair decreases exponentially in the dimension, while the $\mathcal F_1$ IPM remains high.
    \item We construct a sequence of pairs of distributions over $\mathbb{S}^{d-1}$ such that the $\mathcal F_2$ Stein discrepancy (SD) between the pair decreases exponentially in the dimension, while the $\mathcal F_1$ SD remains high.
    \item We prove polynomial upper and lower bounds between the $\mathcal F_1$ IPM and the max-sliced Wasserstein distance for distributions over Euclidean balls. For a class $\tilde{\mathcal{F}}_2$ related to $\mathcal{F}_2$, we prove similar upper and lower bounds between the $\tilde{\mathcal{F}}_2$ IPM and the sliced Wasserstein distance for distributions over Euclidean balls. 
\end{enumerate}
Our findings reinforce the idea that generative models with kernel discriminators have worse performance because their corresponding metrics are weaker and thus unable to distinguish between different distributions, especially in high dimensions.


\section{Related work}
A recent line of research has studied the question of how neural networks compare to kernel methods, with a focus on supervised learning problems.
\citet{bach2017breaking} 
shows the approximation benefits of the~$\mathcal F_1$ space for adapting to low-dimensional structures compared to the (kernel) space~$\mathcal F_2$; an analysis that we leverage.
The function space~$\mathcal F_1$ was also studied by~\citet{ongie2019function,savarese2019infinite,williams2019gradient}, which focus on the ReLU activation function. 
More recently, 
several works showed that wide neural networks trained with gradient methods may behave like kernel methods in certain regimes~\citep[see, e.g.,][]{jacot2018neural}.
Examples of works that compare `active/feature-learning' and `kernel/lazy' regimes for supervised learning include~\cite{chizat2020implicit,ghorbani2019limitations,wei2020regularization,woodworth2020kernel}, and~\cite{domingoenrich2021energybased} for energy-based models.
We are not aware of any works that study how feature-learning function classes and kernel classes differ as discriminators for IPMs or Stein discrepancies. 

It turns out that the $\mathcal{F}_2$ integral probability metric that we study is in fact 
MMD for certain kernels that often admit a closed form \citep{leroux2007continuous, cho2009kernel,bach2017breaking}. MMDs are probability metrics that were first introduced by \cite{gretton2007kernel, gretton2012akernel} for kernel two-sample tests, and that have enjoyed ample success with the advent of deep-learning-based generative modeling as discriminating metrics: \cite{li2015generative} and \cite{dziugaite2015training} introduced  
GMMN,
which differ from GANs in that the discriminator network is replaced by a fixed-kernel MMD. 
\cite{li2017mmd} introduces an improvement on GMMN by using the MMD loss on learned features. From this viewpoint, our separation results in \autoref{sec:sep_f1_f2_ipm} can be interpreted as instances in which a given fixed-kernel MMD provably has less discriminative power than adaptive discriminators.

Other related work includes the Stein discrepancy literature. Stein's method \citep{stein1972abound} dates to the 1970s.
\citet{gorham2015measuring} introduced a computational approach to compute the Stein discrepancy in order to assess sample quality. Later, \citet{chwialkowski2016gretton}, \citet{liu2016akernelized} and \citet{gorham2017measuring} introduced the more practical kernelized Stein discrepancy (KSD) for goodness-of-fit tests. \citet{liu2016stein} introduced SVGD, the first method to use the KSD to obtain samples from a distribution. \citet{barp2019minimum} employed KSD to train parametric generative models, and \citet{grathwohl2020learning} trained models replacing KSD by a neural-network-based SD. 

Our work also touches on sliced and spiked Wasserstein distances. Sliced Wasserstein distances were introduced first by \citet{kolouri2016sliced, kolouri2019generalized}. Spiked Wasserstein distances, which are a generalization, were studied later by \cite{paty2019subspace}, and they also appear in \cite{nilesweed2019estimation} as a good statistical estimator. \cite{nadjahi2020statistical} and \cite{lin2021projection} have studied statistical properties of sliced and spiked Wasserstein distances, respectively.

\section{Framework}

\subsection{Notation}
If $V$ is a normed vector space, we use $\mathcal{B}_V(\beta)$ to denote the closed ball of $V$ of radius $\beta$, and $\mathcal{B}_V := \mathcal{B}_V(1)$ for the unit ball. If $K$ denotes a subset of the Euclidean space, $\mathcal{P}(K)$ is the set of Borel probability measures, $\mathcal{M}(K)$ is the space of finite signed Radon measures and $\mathcal{M}^{+}(K)$ is the set of finite positive Radon measures. If $\gamma$ is a signed Radon measure over $K$, then ${\|\gamma\|}_{\text{TV}}$ is the total variation (TV) norm of $\gamma$. Throughout the paper, and unless otherwise specified, $\sigma : \R \to \R$ denotes a generic non-linear activation function. We use $(\cdot)_{+} : \R \rightarrow \R$ to denote the ReLu activation, defined as $(x)_{+} = \max \{x, 0\}$. $\tau$ denotes the uniform probability measure over a space that depends on the context. We use $\mathbb{S}^d$ for the $d$-dimensional hypersphere and $\log$ for the natural logarithm.

\subsection{Overparametrized two-layer neural network spaces} \label{subsec:framework_overparametrized}
\paragraph{Feature-learning regime.} We define $\mathcal{F}_1$ as the Banach space of functions $f : K \rightarrow \R$ such that for some $\gamma \in \mathcal{M}(\mathbb{S}^d)$, for all $x \in K$ we have $f(x) = \int_{\mathbb{S}^d} \sigma(\langle \theta, x \rangle) \ d\gamma(\theta)$ for some signed Radon measure $\gamma$ \citep{bach2017breaking}. The norm of $\mathcal{F}_1$ is defined as
    $\|f\|_{\mathcal{F}_1} = \inf \left\{ {\|\gamma\|}_{\text{TV}} \ | f(\cdot) = \int_{\mathbb{S}^d} \sigma(\langle \theta, \cdot \rangle) \ d\gamma(\theta) \right\}.$
    
\paragraph{Kernel regime.} We define $\mathcal{F}_2$ as the (reproducing kernel) Hilbert space of functions $f : K \rightarrow \R$ such that for some absolutely continuous $\rho \in \mathcal{M}(\mathbb{S}^d)$ with $\frac{d\rho}{d\tau} \in \mathcal{L}^2(\mathbb{S}^d)$ (where $\tau$ is the uniform probability measure over $\mathbb{S}^d$), we have that for all $x \in K$,  $f(x) = \int_{\mathbb{S}^d} \sigma(\langle \theta, x \rangle) \ d\rho(\theta)$. The norm of $\mathcal{F}_2$ is defined as $\|f\|_{\mathcal{F}_2}^2 = \inf \left\{ \int_{\mathbb{S}^d} h(\theta)^2 \ d\tau(\theta) \ | \ f(\cdot) = \int_{\mathbb{S}^d} \sigma(\langle \theta, \cdot \rangle) \ h(\theta) \ d\tau(\theta) \right\}$. As an RKHS, the kernel of $\mathcal{F}_2$ is 
\begin{align} \label{eq:f2_kernel}
k(x,y) = \int_{\mathbb{S}^d} \sigma(\langle x, \theta \rangle) \sigma(\langle y, \theta \rangle) d\tau(\theta).
\end{align}
Such kernels admit closed form expressions for different choices of activation functions, among which ReLu \citep{leroux2007continuous, cho2009kernel,bach2017breaking}.

Remark that since $\int |h(\theta)| d\tau(\theta) \leq (\int h(\theta)^2 \ d\tau(\theta))^{1/2}$ by the Cauchy-Schwarz inequality, we have $\mathcal{F}_2 \subset \mathcal{F}_1$. 
In particular, when $\sigma$ is the ReLu unit, \cite{bach2017breaking} shows that two-layer networks with a single neuron belong to $\mathcal{F}_1$ but not to $\mathcal{F}_2$, and their $L^2$ approximations in $\mathcal{F}_2$ have exponentially high norm in the dimension. Informally, one should understand $\mathcal{F}_1$ as the space of two-layer networks where both the input layer and output layer parameters are trained, in the limit of an infinite number of neurons. On the other hand, $\mathcal{F}_2$ is the space of infinite-width two-layer networks where only the output layer parameters are trained while the input layer parameters are sampled uniformly on the sphere and kept fixed.

\section{$\mathcal{F}_1$ and $\mathcal{F}_2$ Integral Probability Metrics}
Let $K$ be a subset of $\R^{d+1}$. Integral probability metrics (IPM) are pseudometrics on $\mathcal{P}(K)$ of the form
\begin{align} \label{eq:def_F_ipm}
d_{\mathcal{F}}(\mu,\nu) = \sup_{f\in \mathcal{F}}  \mathbb{E}_{x\sim \mu} f(x) - \mathbb{E}_{x\sim \nu} f(x),
\end{align}
where $\mathcal{F}$ is a class of functions from $K$ to $\R$.

\paragraph{$\mathcal{F}_2$ IPM or $\mathcal{F}_2$ MMD.} One possible choice for $\mathcal{F}$ is the unit ball $\mathcal{B}_{\mathcal{F}_2}$ of $\mathcal{F}_2$. Since $\mathcal{F}_2$ is an RKHS with kernel $k$, the corresponding IPM is in fact a maximum mean discrepancy (MMD) \citep{gretton2007kernel} and it can be shown (\autoref{lem:f1_exp} in \autoref{sec:app_f1_f2}) to take the form 
\begin{align} \label{eq:def_f2_ipm}
d^2_{\mathcal{B}_{\mathcal{F}_2}}(\mu,\nu) = \int_{\mathbb{S}^d} \left( \int_K \sigma(\langle x, \theta \rangle) d(\mu-\nu)(x) \right)^2 d\tau(\theta).
\end{align}
Notice that for any feature $\theta \in \mathbb{S}^d$, $\mathbb{E}_{x \sim p} \sigma(\langle x, \theta \rangle)$ can be seen as a generalized moment of $p$. 
$d_{\mathcal{B}_{\mathcal{F}_2}}$ can be seen as the $L^2$ distance between generalized moments of $\mu$ and $\nu$ as functions of $\theta \in \mathbb{S}^d$.

\textbf{$\mathcal{F}_1$ IPM.} An alternative choice for $\mathcal{F}$ is the unit ball $\mathcal{B}_{\mathcal{F}_1}$ of $\mathcal{F}_1$. The IPM for the unit ball of $\mathcal{F}_1$ can be developed (\autoref{lem:f2_exp} in \autoref{sec:app_f1_f2}) into 
\begin{align} \label{eq:def_f1_ipm}
d_{\mathcal{B}_{\mathcal{F}_1}}(\mu,\nu) = \sup_{\theta \in \mathbb{S}^d} \left| \int_K \sigma(\langle x, \theta \rangle) d(\mu-\nu)(x) \right|.
\end{align}
\vskip -0.2in
Observe that $d_{\mathcal{B}_{\mathcal{F}_1}}$ is the $L^\infty$ distance between generalized moments of $\mu$ and $\mu$ as functions of $\theta \in \mathbb{S}^d$. That is, instead of averaging over features, all the weight is allocated to the feature at which the generalized moment difference is larger.

We will provide separate results for two interesting choices for $K$: (i) for $K = \mathbb{S}^d$, we obtain neural network discriminators without bias term which are amenable to analysis using the theory of spherical harmonics; and (ii) for $K = \R^d \times \{1\}$, we obtain neural networks discriminators with a bias~term which is encoded by the last component (notice that probability measures over $\R^d$ can be mapped trivially to probability measures over $\R^d \times \{1\}$). We will write $\mathcal{F}_1(K)$ or $\mathcal{F}_2(K)$ for specific $K$ when it is not clear by the context.

A function $f : \R \rightarrow \R$ is $\alpha$-positive homogeneous function if for all $r \geq 0, x \in \R$, $f (r x) = r^\alpha f(x)$. One-dimensional $\alpha$-positive homogeneous functions can be written in a general form as \begin{align} \label{eq:alpha_positive}
    f(x) = a (x)_{+}^{\alpha} + b (-x)_{+}^{\alpha}.
\end{align}
where $a, b \in \R$ are arbitrary. When the activation function $\sigma$ is $\alpha$-positive homogeneous, \autoref{thm:distances} shows that the $\mathcal{F}_1$ and $\mathcal{F}_2$ IPMs are distances when $K = \R^d \times \{1\}$ if $a, b$ fulfill a certain condition which is satisfied by the ReLu activation, but they are \textit{not} distances when $K = \mathbb{S}^d$. See \autoref{thm:ipmdistbias} and \autoref{thm:ipmdistsphere} in \autoref{sec:distances} for the proof.

\begin{restatable}{thm}{thmdistances} \label{thm:distances}
For any non-negative integer $\alpha$, let $\sigma: \R \rightarrow \R$ be an $\alpha$-positive homogeneous activation function of the form \eqref{eq:alpha_positive}. If $(-1)^{\alpha} a - b \neq 0$ and $K = \R^d \times \{1\}$, both the $\mathcal{F}_1$ and $\mathcal{F}_2$ IPMs are distances on $\mathcal{P}(K)$. If $K = \mathbb{S}^d$, both the $\mathcal{F}_1$ and $\mathcal{F}_2$ IPMs are \textit{not} distances on $\mathcal{P}(K)$, as there exist pairs of different measures for which the IPMs evaluate to zero. 
\end{restatable}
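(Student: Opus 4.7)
The plan is to reduce both statements to a single analytic question about the linear functional $g_\gamma(\theta) := \int_K \sigma(\langle x,\theta\rangle)\,d\gamma(x)$ where $\gamma = \mu-\nu$. Indeed, formula \eqref{eq:def_f1_ipm} gives $d_{\mathcal{B}_{\mathcal{F}_1}}(\mu,\nu)=0 \iff g_\gamma(\theta)=0$ for all $\theta\in\mathbb{S}^d$, and formula \eqref{eq:def_f2_ipm} gives $d_{\mathcal{B}_{\mathcal{F}_2}}(\mu,\nu)=0 \iff g_\gamma(\theta)=0$ for $\tau$-a.e.\ $\theta$. Since $\theta\mapsto g_\gamma(\theta)$ is continuous for $\alpha\geq 1$, the two vanishing conditions coincide, so the two parts of \autoref{thm:distances} reduce to asking whether vanishing of $g_\gamma$ on $\mathbb{S}^d$ forces $\gamma=0$. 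Under this reduction, $\mathcal{F}_1$ and $\mathcal{F}_2$ are handled by the same argument.

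For the affine case $K=\R^d\times\{1\}$, I would exploit the positive homogeneity of $\sigma$ to extend the vanishing from $\mathbb{S}^d$ to all nonzero $\theta\in\R^{d+1}$; writing $\theta=(w,t)$ this becomes $\int_{\R^d} \sigma(\langle \tilde x,w\rangle + t)\,d\gamma(\tilde x)=0$ for every $(w,t)\in\R^d\times\R$. Using that the $(\alpha+1)$-th distributional derivative of $t\mapsto a(t)_+^\alpha + b(-t)_+^\alpha$ equals $\alpha!\,(a-(-1)^\alpha b)\,\delta_0 = \pm\alpha!\,((-1)^\alpha a - b)\,\delta_0$, I differentiate in $t$ to obtain
\begin{equation*}
\alpha!\,((-1)^\alpha a - b)\int_{\R^d} \delta_0(\langle \tilde x,w\rangle + t)\,d\gamma(\tilde x) \;=\; 0,
\end{equation*}
which says the Radon transform of $\gamma$ vanishes identically. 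The non-degeneracy hypothesis $(-1)^\alpha a - b\neq 0$ is exactly what makes this coefficient nonzero. Injectivity of the Radon transform on signed Radon measures then gives $\gamma=0$, i.e.\ $\mu=\nu$. The main obstacle here is making the distributional differentiation rigorous and invoking Radon injectivity in the appropriate class (one can alternatively apply a Fourier transform in $t$ and use that $\hat\sigma$ is nonzero on an open set, then conclude $\hat\gamma\equiv 0$).

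For the sphere case $K=\mathbb{S}^d$, I would produce an explicit counterexample via Funk--Hecke. Decompose $\sigma$ into its even and odd parts in $z$; for $\alpha$ odd this gives $\frac{a+b}{2}|z|^\alpha + \frac{a-b}{2}z^\alpha$, while for $\alpha$ even it gives $\frac{a+b}{2}z^\alpha + \frac{a-b}{2}z|z|^{\alpha-1}$. In each case one of the two summands is a polynomial of degree $\alpha$, while the other has a fixed parity in $z$. By Funk--Hecke, the spherical harmonic expansion of $\sigma(\langle x,\theta\rangle)$ (in $x$) has nonzero coefficients only at degrees of matching parity, with the polynomial summand contributing only degrees $\leq\alpha$. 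Hence for every $d\geq 1$ there exist infinitely many degrees $k>\alpha$ of the "missing" parity for which the Funk--Hecke coefficient vanishes. Picking any nonzero spherical harmonic $Y_k$ of such a degree and setting $d\mu=(1+\epsilon Y_k)\,d\tau$, $\nu=\tau$ with $\epsilon>0$ small enough that $\mu\in\mathcal{P}(\mathbb{S}^d)$, orthogonality of spherical harmonics yields $g_{\mu-\nu}(\theta)=\epsilon\int Y_k(x)\sigma(\langle x,\theta\rangle)\,d\tau(x)=0$ for all $\theta$, while $\mu\neq\nu$. The delicate step is verifying that a missing degree of the required parity always exists; but this is immediate from the parity-preserving structure of Funk--Hecke together with the fact that the polynomial summand truncates at degree $\alpha$, so the argument goes through with no further assumption on $a,b$.
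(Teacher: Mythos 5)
Your approach is correct but genuinely different from the paper's in both halves, so let me compare. For $K=\R^d\times\{1\}$ the paper extends $g$ to a homogeneous function on $\R^{d+1}$, takes the full $(d+1)$-dimensional Fourier transform via \autoref{lem:ft_g}, computes $\hat\sigma$ explicitly as a combination of $\delta^{(\alpha)}$ and a derivative of $\mathrm{p.v.}[1/\omega]$, and then exhibits a hand-crafted Schwartz test function $\psi$ (supported away from the origin along each ray) that kills the $\delta$-term and reduces the $\mathrm{p.v.}$-term to $\int\phi\,d(\mu-\nu)\neq 0$. You instead fix $w$, differentiate $\alpha+1$ times in $t$ to produce a multiple of $\delta_0$, and invoke Cram\'er--Wold/Radon-transform injectivity on finite signed measures; this replaces the explicit test-function construction with a standard black-box lemma and is conceptually tidier, but does require making the Fubini-plus-distributional-derivative step rigorous (which is the same kind of technical work the paper does with $\hat g$). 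For $K=\mathbb{S}^d$ the paper uses a soft dimension count: restricted to measures of fixed parity, the linear map $\gamma\mapsto\int\sigma(\langle x,\cdot\rangle)d\gamma$ lands in polynomials of degree $\leq\alpha$, hence has infinite-dimensional kernel. Your Funk--Hecke argument is the constructive counterpart of the same fact — you locate a degree $k>\alpha$ of the polynomial summand's parity whose Funk--Hecke coefficient vanishes and perturb $\tau$ by $\epsilon Y_k$. Both work; the paper's is shorter while yours makes the failure mode explicit. One caveat: your reduction "the two vanishing conditions coincide" uses continuity of $\theta\mapsto g_\gamma(\theta)$, which fails for $\alpha=0$. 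This is easily patched — since $\mathcal{F}_2\subset\mathcal{F}_1$, it suffices to prove the $\mathcal{F}_2$ case, and the Radon argument goes through from a.e.\ vanishing via Fubini (using continuity of $\hat\gamma$ to upgrade a.e.\ to everywhere) — but as written your proof does not cover $\alpha=0$, whereas the theorem claims all non-negative integers $\alpha$ and the paper's tempered-distribution machinery handles this case uniformly.
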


In other words, \autoref{thm:distances} states that certain fixed-kernel and feature-learning infinite neural networks  with RELU or leaky RELU non-linearity, yield distances when we include a bias term, but not when the inputs lie in a hypersphere. This result sheds light on when the ``neural net distance'' introduced by \cite{arora2017generalization} is indeed a distance.

\section{Separation between the $\mathcal{F}_1$ and $\mathcal{F}_2$ IPMs} \label{sec:sep_f1_f2_ipm}
In this section for each dimension $d \geq 2$, we construct a pair of probability measures $\mu_d,\nu_d$ over $\mathcal{P}(\mathbb{S}^{d-1})$ such that the $\mathcal{F}_1$ IPM between $\mu_d$ and $\nu_d$ stays constant along the dimension, while the $\mathcal{F}_2$ IPM decreases exponentially.

\textbf{Legendre harmonics and Legendre polynomials.} Let $e_d \in \R^d$ be the $d$-th vector of the canonical basis. There is a unique homogeneous harmonic polynomial $L_{k,d}$ of degree $k$ over $\R^d$ such that: (i) $L_{k,d}(Ax) = L_{k,d}(x)$ for all orthogonal matrices that leave $e_d$ invariant, and (ii) $L_{k,d}(e_d) = 1$. This polynomial receives the name of \textit{Legendre harmonic}, and its restriction to $\mathbb{S}^{d-1}$ is indeed a spherical harmonic of order $k$.
If we express an arbitrary $\xi_{(d)} \in \mathbb{S}^{d-1}$ as $\xi_{(d)} = t e_d + (1-t^2)^{1/2} \xi_{(d-1)}$, where $\xi_{(d-1)} \perp e_d$, we can define the \textit{Legendre polynomial} of degree $k$ in dimension $d$ as $P_{k,d}(t) := L_{k,d}(\xi_{(d)})$ by the invariance of $L_{k,d}$ (it is not straightforward that $P_{k,d}(t)$ is a polynomial on $t$, see Sec.~2.1.2 of \cite{atkinson2012spherical}). Conversely, $L_{k,d}(x) = P_{k,d}(\langle e_d, x \rangle)$ for any $x \in \mathbb{S}^{d-1}$, and by homogeneity, $L_{k,d}(x) = \|x\|^k P_{k,d}(\langle e_d, x \rangle/\|x\|)$ for any $x \in \R^d$. Legendre polynomials can also be characterized as the orthogonal sequence of polynomials on $[-1,1]$ such that $P_{k,d}(1) = 1$ and $\int_{-1}^1 P_{k,d}(t) P_{l,d}(t) (1-t^2)^{\frac{d-3}{2}} dt = 0,$ for $k \neq l$.


\paragraph{The pair $\mu_d$ and $\nu_d$.} We define $\mu_d$ and $\nu_d$ as the probability measures over $\mathbb{S}^{d-1}$ with densities
\begin{align}
\begin{split} \label{eq:densities_mu_nu}
    \frac{d\mu_d}{d\lambda} &= 
    \begin{cases}
        \frac{\gamma_{k,d} L_{k,d}(x)}{|\mathbb{S}^{d-1}|} &\text{if } L_{k,d}(x) > 0  \\
        0 &\text{if } L_{k,d}(x) \leq 0
    \end{cases}, \\
    \frac{d\nu_d}{d\lambda} &= 
    \begin{cases}
        0 &\text{if } L_{k,d}(x) > 0  \\
        \frac{-\gamma_{k,d} L_{k,d}(x)}{|\mathbb{S}^{d-1}|} &\text{if } L_{k,d}(x) \leq 0
    \end{cases}.
\end{split}    
\end{align}
for some $k \geq 2$ and some
$\gamma_{k,d} \geq 0$, where $\lambda$ is the Hausdorff measure over $\mathbb{S}^{d-1}$. Namely,
\begin{restatable}{prop}{propqdprob} \label{prop:qd_prob}
If we choose $\gamma_{k,d} = 2 \left(\int_{\mathbb{S}^{d-1}} |L_{k,d}(x)| \ d\tau(x) \right)^{-1}$, then $\mu_d$ and $\nu_d$ are probability measures.
\end{restatable}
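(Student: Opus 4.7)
The plan is to verify that each density is non-negative and integrates to one; non-negativity is immediate from the definitions, so the real content is to check that the integrals of $L_{k,d}^+ := \max(L_{k,d},0)$ and $L_{k,d}^- := \max(-L_{k,d},0)$ against the uniform probability measure $\tau$ both equal $\tfrac12 \int |L_{k,d}|\,d\tau$, which will exactly cancel the chosen constant $\gamma_{k,d}$.

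First I would rewrite the normalization integral in terms of $\tau$. Since $\tau = \lambda/|\mathbb{S}^{d-1}|$, the mass of $\mu_d$ is
\begin{equation}
\int_{\mathbb{S}^{d-1}} \frac{\gamma_{k,d}\, L_{k,d}^+(x)}{|\mathbb{S}^{d-1}|}\, d\lambda(x) = \gamma_{k,d}\int_{\mathbb{S}^{d-1}} L_{k,d}^+(x)\, d\tau(x),
\end{equation}
and analogously for $\nu_d$ with $L_{k,d}^-$. So it remains to evaluate $\int L_{k,d}^\pm\, d\tau$.

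The key step is to use the fact that $L_{k,d}$ is a non-constant spherical harmonic of degree $k\geq 2$, which by the standard orthogonality of spherical harmonics of distinct degrees against the constant function $L_{0,d}\equiv 1$ gives
\begin{equation}
\int_{\mathbb{S}^{d-1}} L_{k,d}(x)\, d\tau(x) = 0.
\end{equation}
Writing $L_{k,d} = L_{k,d}^+ - L_{k,d}^-$ and $|L_{k,d}| = L_{k,d}^+ + L_{k,d}^-$, this identity forces $\int L_{k,d}^+\, d\tau = \int L_{k,d}^-\, d\tau = \tfrac12 \int |L_{k,d}|\, d\tau$.

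Plugging this into the mass computation and substituting $\gamma_{k,d}=2\bigl(\int_{\mathbb{S}^{d-1}} |L_{k,d}(x)|\, d\tau(x)\bigr)^{-1}$ gives total mass one for both $\mu_d$ and $\nu_d$, finishing the proof. The only non-routine ingredient is the mean-zero property of $L_{k,d}$, but this is a standard consequence of spherical harmonic orthogonality once one recalls that $L_{k,d}$ restricted to $\mathbb{S}^{d-1}$ is a spherical harmonic of order $k$, as stated in the excerpt. Thus I do not anticipate any serious obstacle.
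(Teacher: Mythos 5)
Your argument is correct and matches the paper's proof in substance: both reduce to showing $\int_{\mathbb{S}^{d-1}} L_{k,d}\,d\tau = 0$, and then use that identity to split the mass of $|L_{k,d}|$ equally between the positive and negative parts so that the chosen $\gamma_{k,d}$ normalizes each to one. The only cosmetic difference is that you invoke orthogonality of spherical harmonics against the constant directly, whereas the paper passes to the Legendre-polynomial parametrization and uses $\int_{-1}^1 P_{k,d}(t)(1-t^2)^{(d-3)/2}\,dt=0$; these are equivalent routes to the same zero-mean fact.
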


\begin{figure}
    \centering
    \includegraphics[width=0.49\textwidth]{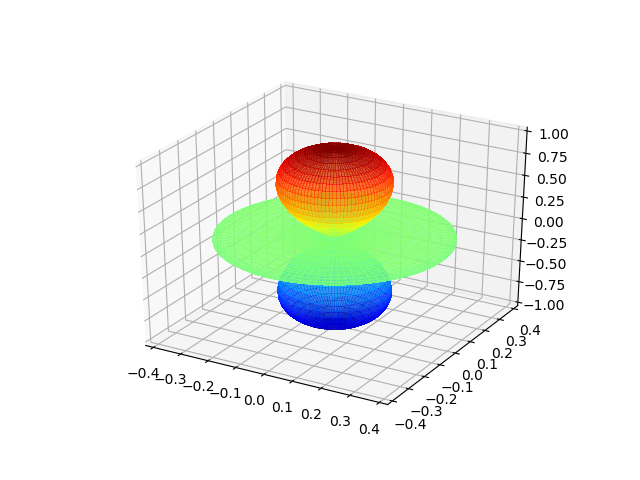}%
    \includegraphics[width=0.49\textwidth]{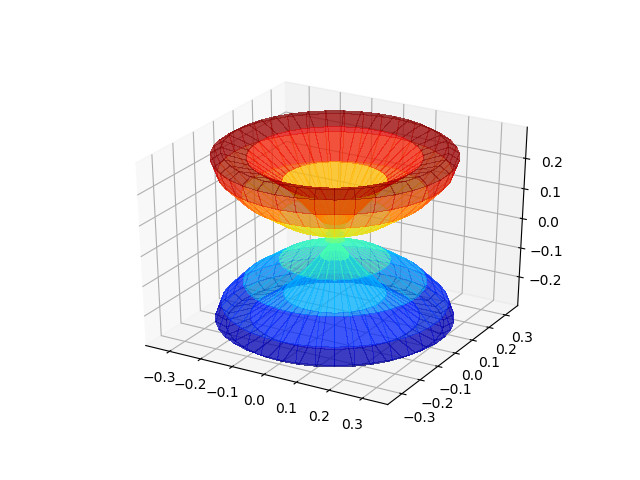}
    \vspace{-5pt}
    \caption{3D polar plot representing the densities of the measures $\mu_d$ (left) and $\nu_d$ (right), for the choices $d=3$, $k=4$. In each direction, the distance from the origin to the surface is proportional to the density of the measure.}
    \label{fig:mu_d_nu_d_polar_plot}
\end{figure}

\autoref{fig:mu_d_nu_d_polar_plot} shows a representation of the measures $\mu_d, \nu_d$ for $d=3,k=4$, where one can see that they allocate mass in different regions of the sphere. We are now ready to state our separation result, which is proved in \autoref{sec:proof_sep}.
\begin{restatable}{thm}{thmsepipm} \label{thm:separation_ipm}
Let $\sigma : \R \to \R$ be an activation function that is bounded in $[-1,1]$. 
For any $d\geq 2$ and $k \geq 1$, if we set $\gamma_{k,d}$ as in \autoref{prop:qd_prob} we have that
\begin{align} \label{eq:d_f1_separation}
d_{\mathcal{B}_{\mathcal{F}_1}}(\mu_d,\nu_d) = 
\frac{2 \left|\int_{-1}^{1} P_{k,d}(t) \sigma(t) (1-t^2)^{\frac{d-3}{2}} \ dt \right|}{\int_{-1}^{1} |P_{k,d}(t)| (1-t^2)^{\frac{d-3}{2}} dt},
\end{align}
and
\begin{align} \label{eq:distances_ratio0}
    \frac{d_{\mathcal{B}_{\mathcal{F}_1}}(\mu_d,\nu_d)}{d_{\mathcal{B}_{\mathcal{F}_2}}(\mu_d,\nu_d)} = \sqrt{N_{k,d}} = \sqrt{\frac{(2k + d - 2)(k + d -3)!}{k! (d-2)!}},
\end{align}
where $N_{k,d}$ is the dimension of the space of spherical harmonics of order $k$ over $\mathbb{S}^{d-1}$. That is, 
\begin{align} \label{eq:log_ratio}
    \log \left( \frac{d_{\mathcal{B}_{\mathcal{F}_1}}(\mu_d,\nu_d)}{d_{\mathcal{B}_{\mathcal{F}_2}}(\mu_d,\nu_d)} \right) 
    &= \frac{1}{2} \left( k \log\left(\frac{k+d-3}{k} \right) + (d-2) \log \left(\frac{k+d-3}{d-2} \right) \right) + O(\log(k+d)).
\end{align}
\end{restatable}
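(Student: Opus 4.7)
The plan is to reduce the entire computation to one application of the Funk--Hecke formula with the zonal (Legendre) harmonic $L_{k,d}$. First, observe that the two piecewise definitions in \eqref{eq:densities_mu_nu} combine cleanly: the signed measure $\mu_d - \nu_d$ has density $\gamma_{k,d} L_{k,d}$ against the normalized uniform measure $\tau$ on $\mathbb{S}^{d-1}$. Hence for any $\theta \in \mathbb{S}^{d-1}$, the generalized moment $\int \sigma(\langle x, \theta\rangle)\,d(\mu_d - \nu_d)(x)$ equals $\gamma_{k,d}\int_{\mathbb{S}^{d-1}} \sigma(\langle x,\theta\rangle)\,L_{k,d}(x)\,d\tau(x)$. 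Since $L_{k,d}\vert_{\mathbb{S}^{d-1}}$ is a spherical harmonic of degree $k$, Funk--Hecke gives this last integral as $c_{k,d}^{\sigma}\, L_{k,d}(\theta)$ with $c_{k,d}^{\sigma} = \tfrac{|\mathbb{S}^{d-2}|}{|\mathbb{S}^{d-1}|}\int_{-1}^{1} \sigma(t)\,P_{k,d}(t)\,(1-t^2)^{(d-3)/2}\,dt$.

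For the $\mathcal{F}_1$ IPM, formula \eqref{eq:def_f1_ipm} asks for $\sup_\theta |\gamma_{k,d} c_{k,d}^{\sigma} L_{k,d}(\theta)|$. Because $L_{k,d}$ is zonal with $L_{k,d}(e_d) = P_{k,d}(1) = 1$ and $|P_{k,d}(t)| \le 1$ on $[-1,1]$, this supremum is $\gamma_{k,d}\,|c_{k,d}^{\sigma}|$, attained at $\theta = e_d$. Using that $|L_{k,d}|$ is also a function of $\langle e_d, \cdot\rangle$ alone, I rewrite $\int |L_{k,d}|\,d\tau = \tfrac{|\mathbb{S}^{d-2}|}{|\mathbb{S}^{d-1}|}\int_{-1}^{1} |P_{k,d}(t)|(1-t^2)^{(d-3)/2}\,dt$. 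With $\gamma_{k,d} = 2/\int |L_{k,d}|\,d\tau$ from \autoref{prop:qd_prob}, the prefactors $|\mathbb{S}^{d-2}|/|\mathbb{S}^{d-1}|$ cancel, leaving exactly \eqref{eq:d_f1_separation}.

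For the $\mathcal{F}_2$ IPM, I square the Funk--Hecke identity and integrate in $\theta$, so by \eqref{eq:def_f2_ipm} we get $d_{\mathcal{B}_{\mathcal{F}_2}}^2(\mu_d,\nu_d) = (\gamma_{k,d} c_{k,d}^{\sigma})^2 \,\|L_{k,d}\|_{L^2(\tau)}^2$. The addition theorem represents the zonal harmonic as $L_{k,d}(\theta) = N_{k,d}^{-1}\sum_{j} Y_{k,j}(\theta) Y_{k,j}(e_d)$ for any $L^2(\tau)$-orthonormal basis $\{Y_{k,j}\}$ of degree-$k$ spherical harmonics, which together with $\sum_j Y_{k,j}(e_d)^2 = N_{k,d}$ yields $\|L_{k,d}\|_{L^2(\tau)}^2 = 1/N_{k,d}$. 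Dividing by the $\mathcal{F}_1$ expression, the common prefactor $\gamma_{k,d}|c_{k,d}^{\sigma}|$ cancels and the ratio collapses to $\sqrt{N_{k,d}}$; the closed form $N_{k,d} = (2k+d-2)(k+d-3)!/(k!(d-2)!)$ is a classical fact. Finally, \eqref{eq:log_ratio} follows from Stirling applied to $\log((k+d-3)!) - \log(k!) - \log((d-2)!)$, retaining the leading $k\log((k+d-3)/k)$ and $(d-2)\log((k+d-3)/(d-2))$ terms and absorbing $\log(2k+d-2)$ and the $\tfrac{1}{2}\log$ corrections into $O(\log(k+d))$.

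The main obstacle is not conceptual but bookkeeping: one must maintain consistent normalization conventions (unnormalized surface measure vs.\ $\tau$, $P_{k,d}(1) = 1$, and the normalization chosen in the addition theorem) so that the $|\mathbb{S}^{d-2}|/|\mathbb{S}^{d-1}|$ ratios and the constant $\gamma_{k,d}$ collapse into precisely the clean expressions stated in \eqref{eq:d_f1_separation}--\eqref{eq:distances_ratio0}. Once these bookkeeping details are aligned, each of the three claims follows from a one-line computation.
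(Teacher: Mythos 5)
Your proof is correct and follows essentially the same route as the paper: apply the Funk--Hecke formula to reduce the generalized moment to a multiple of $L_{k,d}(\theta)$, then evaluate the $L^\infty$ norm (via $|P_{k,d}|\le 1$, $P_{k,d}(1)=1$) for $\mathcal{F}_1$ and the $L^2(\tau)$ norm for $\mathcal{F}_2$, and take the ratio so $\gamma_{k,d}$ and $c_{k,d}^\sigma$ cancel. The one small deviation is that you derive $\|L_{k,d}\|_{L^2(\tau)}^2 = 1/N_{k,d}$ from the addition theorem (which is a nice self-contained justification), whereas the paper cites the Legendre-polynomial $L^2$ normalization identity from Atkinson--Han (2.78) directly; both yield the same fact.
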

From \eqref{eq:log_ratio} we see that choosing the parameter $k$ of the same order as $d$, $d_{\mathcal{B}_{\mathcal{F}_1}}(\mu_d,\nu_d)$ is exponentially larger than $d_{\mathcal{B}_{\mathcal{F}_2}}(\mu_d,\nu_d)$ in the dimension $d$. Equation \eqref{eq:d_f1_separation} holds regardless of the choice of the activation function $\sigma$, and decreases very slowly in $d$ for the ReLu activation, as shown in \autoref{fig:f1_f2_ipm_separation_figure}. This result suggests that in high dimensions there exist high frequency densities that can be distinguished by feature-learning IPM discriminators but not by their fixed-kernel counterpart, and that may explain the differences in generative modeling performance for GMMN and Sinkhorn divergence (\autoref{sec:intro}).

The key idea for the proof of \autoref{thm:separation_ipm} is that the Legendre harmonics 
$L_{k,d}$ have constant $L^{\infty}$ norm equal to 1 (see equation \eqref{eq:uniform_bound} in \autoref{sec:preliminaries}), but their $L^2$ norm decreases as $1/N_{k,d}$ (see equation \eqref{eq:norm_prob} in \autoref{sec:proof_sep}). The proof boils down to relating $d_{\mathcal{B}_{\mathcal{F}_1}}(\mu_d,\nu_d)$ to the $L^{\infty}$ norm of $L_{k,d}$, and $d_{\mathcal{B}_{\mathcal{F}_2}}(\mu_d,\nu_d)$ to its $L^{2}$ norm.

\section{Separation between $\mathcal{F}_1$ and $\mathcal{F}_2$ Stein discrepancies} \label{sec:sep_sd}
The arguments to derive the separation result in \autoref{sec:sep_f1_f2_ipm} can be leveraged to obtain a similar separation for the Stein discrepancy, which helps explain why for Stein discrepancy energy-based models (EBMs) and SVGD feature learning yields improved performance.  

\subsection{Stein operator and Stein discrepancy}
As shown by \cite{domingoenrich2021energybased}, for a probability measure $\nu$ on the sphere $\mathbb{S}^{d-1}$ with a continuous and almost everywhere differentiable density $\frac{d\nu}{d\tau}$, the \textit{Stein operator} $\mathcal{A}_{\nu} : \mathbb{S}^{d-1} \rightarrow \R^{d\times d}$ is defined as
\begin{align} \label{eq:stein_operator}
    (\mathcal{A}_{\nu} h)(x) = \bigg(\nabla \log \left(\frac{d\nu}{d\tau}(x) \right) - (d-1) x \bigg) h(x)^{\top} + \nabla h(x),
\end{align}
for any $h : \mathbb{S}^{d-1} \rightarrow \mathbb{R}^{d}$ that is continuous and almost everywhere differentiable, where $\nabla$ denotes the Riemannian gradient.
That is, for any $h : \mathbb{S}^{d-1} \rightarrow \mathbb{R}^{d}$ that is continuous and almost everywhere differentiable, the Stein identity holds: $\mathbb{E}_{\nu} [(\mathcal{A}_{\nu} h)(x)] = 0$.

If $\mathcal{H}$ is a class of functions from $\mathbb{S}^{d-1}$ to $\R^{d}$, the Stein discrepancy \citep{gorham2015measuring, liu2016akernelized} for $\mathcal{H}$ is a non-symmetric functional defined on pairs of probability measures over $K$ as
\begin{align} \label{eq:sd_def}
    \text{SD}_{\mathcal{H}}(\nu_1, \nu_2) = \sup_{h \in \mathcal{H}} \mathbb{E}_{\nu_1} [\text{Tr}(\mathcal{A}_{\nu_2} h(x))].
\end{align}
When $\mathcal{H} = \mathcal{B}_{\mathcal{H}_0^{d}} = \{ (h_i)_{i=1}^{d} \in \mathcal{H}_0^{d} \ | \ \sum_{i=1}^{d} \|h_i\|_{\mathcal{H}_0}^2 \leq 1 \}$ 
for some reproducing kernel Hilbert space (RKHS) $\mathcal{H}_0$ with kernel $k$ with continuous second order partial derivatives, there exists a closed form for the problem \eqref{eq:sd_def} 
and the corresponding object is known as kernelized Stein discrepancy (KSD) \cite{liu2016akernelized, gorham2017measuring}.
When the domain is $\mathbb{S}^{d-1}$, the KSD takes the following form (Lemma 5, \cite{domingoenrich2021energybased}):
\begin{align} \label{eq:KSD_def}
    \text{KSD}(\nu_1, \nu_2) = \text{SD}^2_{\mathcal{B}_{\mathcal{H}_0^{d}}}(\nu_1, \nu_2) = \mathbb{E}_{x,x' \sim \nu_1} [u_{\nu_2}(x,x')], 
\end{align}
where we have $u_{\nu}(x,x') = (s_{\nu}(x) - (d-1) x)^\top(s_{\nu}(x') - (d-1) x') k(x,x') + (s_{\nu}(x) - (d-1) x)^\top \nabla_{x'} k(x,x') + (s_{\nu}(x') - (d-1) x')^\top \nabla_{x} k(x,x') + \text{Tr}(\nabla_{x,x'} k(x,x'))$, and we use $\tilde{u}_{\nu}(x,x')$ to denote the sum of the first three terms (remark that the fourth term does not depend on $\nu$). Here we have used the notation $s_{\nu}(x) = \nabla \log (\frac{d\nu}{d\tau}(x))$, which is known as the score function.

\subsection{Separation result} \label{sec:separation_sd}
We show a separation result between the two cases:
\begin{itemize}[leftmargin=*]
    \item $\mathcal{F}_1$ Stein discrepancy: $\mathcal{H} = \mathcal{B}_{\mathcal{F}_1^d} = \{ (h_i)_{i=1}^{d} \in \mathcal{F}_1^{d} \ | \ \sum_{i=1}^{d} \|h_i\|_{\mathcal{F}_1}^2 \leq 1 \}$. This discriminator set initially appeared as a particular configuration in the framework of \cite{huggins2018random}, and its statistical properties for energy based model training were later studied by \cite{domingoenrich2021energybased}. 
    \item $\mathcal{F}_2$ Stein discrepancy: $\mathcal{H} = \mathcal{B}_{\mathcal{F}_2^d} = \{ (h_i)_{i=1}^{d} \in \mathcal{F}_2^{d} \ | \ \sum_{i=1}^{d} \|h_i\|_{\mathcal{F}_2}^2 \leq 1 \}$. Since $\mathcal{F}_2$ is an RKHS, this corresponds to a KSD with the kernel $k$. However, particular care must be taken in checking that the kernel $k$ has continuous second order partial derivatives, which might not always be the case (i.e. with $\alpha = 1$).  
\end{itemize}

\paragraph{The pair $\mu_d$ and $\nu_d$.} For $d \geq 2$, we set $\mu_d$ to be the uniform Borel probability measure over $\mathbb{S}^{d-1}$.
We define $\nu_d$ as the probability measure over $\mathbb{S}^{d-1}$ with density
\begin{align} \label{eq:nu_density}
\frac{d\nu_d}{d\lambda}(x) = \frac{\exp \left( \gamma_{k,d} L_{k,d}(x) \right)}{\int_{\mathbb{S}^{d-1}} \exp \left(\gamma_{k,d} L_{k,d}(x) \right) d\lambda(x)}
\end{align}
for some $\gamma_{k,d} \in \R$ that we will specify later on and some $k \geq 2$.

\begin{restatable}{thm}{thmseparationsd} \label{thm:sep_sd}
Let $\sigma: \R \rightarrow \R$ be an $\alpha$-positive homogeneous activation function of the form \eqref{eq:alpha_positive} such that $a + (-1)^{k+1} b \neq 0$. For all $k \geq 1, \ d \geq 2$ we can choose $\gamma_{k,d} \in [-1,1]$ such that $\text{SD}_{\mathcal{B}_{\mathcal{F}_1^d}}(\mu_d,\nu_d) = 1$ and
\begin{align} \label{eq:sd_sep}
    \frac{\text{SD}_{\mathcal{B}_{\mathcal{F}_1^d}}(\mu_d,\nu_d)}{\text{SD}_{\mathcal{B}_{\mathcal{F}_2^d}}(\mu_d,\nu_d)} \geq 
    \frac{\frac{k(d+k-3)}{\alpha + 1}}{\sqrt{\frac{2}{N_{k,d}} \left(k (k + d -2) \left( \frac{d+\alpha-2}{\alpha+1} \right)^2 + \left( \frac{k(d+k-3)}{\alpha + 1} \right)^2 \right) }}
\end{align}
That is,
\begin{align}
\begin{split} \label{eq:sd_sep_log}
    \log \left( \frac{\text{SD}_{\mathcal{B}_{\mathcal{F}_1^d}}(\mu_d,\nu_d)}{\text{SD}_{\mathcal{B}_{\mathcal{F}_2^d}}(\mu_d,\nu_d)} \right) \geq &\frac{1}{2} \left( k \log\left(\frac{k+d-3}{k} \right) + (d-2) \log \left(\frac{k+d-3}{d-2} \right) \right) + O(\log(k+d))
\end{split}
\end{align}
\end{restatable}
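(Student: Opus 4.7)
Since $\mu_d$ is the uniform probability measure on $\mathbb{S}^{d-1}$, its score vanishes, and Stein's identity $\mathbb{E}_{\mu_d}[\text{Tr}(\mathcal{A}_{\mu_d}h)]=0$ applied to \eqref{eq:stein_operator} gives
\begin{align}
\mathbb{E}_{\mu_d}\!\left[\text{Tr}(\mathcal{A}_{\nu_d}h(x))\right] = \gamma_{k,d}\int_{\mathbb{S}^{d-1}} \nabla L_{k,d}(x)^{\top} h(x)\,d\tau(x),
\end{align}
where $\nabla$ denotes the Riemannian gradient and the score of $\nu_d$ is $\gamma_{k,d}\nabla L_{k,d}$. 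Introduce $\Psi_i(\theta):=\int_{\mathbb{S}^{d-1}}(\nabla L_{k,d})_i(x)\,\sigma(\langle\theta,x\rangle)\,d\tau(x)$. Standard duality for the $\mathcal{F}_1^d$ unit ball (signed Dirac optimizers at the maximizer of $|\Psi_i|$) and Riesz representation in the RKHS $\mathcal{F}_2^d$, combined with Cauchy--Schwarz over the $d$ components, give
\begin{align}
\frac{\text{SD}_{\mathcal{B}_{\mathcal{F}_1^d}}(\mu_d,\nu_d)}{\text{SD}_{\mathcal{B}_{\mathcal{F}_2^d}}(\mu_d,\nu_d)} = \frac{\bigl(\sum_i\|\Psi_i\|_\infty^2\bigr)^{1/2}}{\bigl(\sum_i\|\Psi_i\|_{L^2(\tau)}^2\bigr)^{1/2}},
\end{align}
reducing the separation problem to bounding $L^\infty$- versus $L^2$-norms of the $\Psi_i$.

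\textbf{Spherical harmonic structure of $\Psi_i$.} Euler's identity $x\cdot\nabla^{\text{Euc}}L_{k,d}=kL_{k,d}$ gives $(\nabla L_{k,d})_i=\partial_iL_{k,d}-kL_{k,d}x_i$, and the classical identity $x_iL_{k,d}=H_{k+1,i}+(2k+d-2)^{-1}\partial_iL_{k,d}$, with $H_{k+1,i}$ a degree-$(k+1)$ harmonic, expresses $(\nabla L_{k,d})_i=\tfrac{k+d-2}{2k+d-2}\partial_iL_{k,d}-kH_{k+1,i}$ as a sum of harmonics of degrees $k-1$ and $k+1$. Funk--Hecke applied termwise then gives
\begin{align}
\Psi_i(\theta) = \tfrac{k+d-2}{2k+d-2}\lambda_{k-1}(\sigma)\,\partial_iL_{k,d}(\theta) - k\,\lambda_{k+1}(\sigma)\,H_{k+1,i}(\theta),
\end{align}
with $\lambda_j(\sigma)$ the degree-$j$ Funk--Hecke eigenvalue of $\sigma$. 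Axial symmetry about $e_d$ identifies $\partial_dL_{k,d}=kL_{k-1,d}$ and $H_{k+1,d}=\tfrac{k+d-2}{2k+d-2}L_{k+1,d}$, so $\Psi_d$ is itself an axisymmetric linear combination of the Legendre harmonics $L_{k\pm 1,d}$.

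\textbf{Computing the norms.} For the $L^2$ denominator, orthogonality of harmonics of different degrees, combined with $-\Delta L_{k,d}=k(k+d-2)L_{k,d}$ and $\|L_{k,d}\|_{L^2}^2=1/N_{k,d}$, yields the explicit identities $\sum_i\|\partial_iL_{k,d}\|_{L^2}^2=k(2k+d-2)/N_{k,d}$ and $\sum_i\|H_{k+1,i}\|_{L^2}^2=(k+d-2)/((2k+d-2)N_{k,d})$, hence an upper bound of the form $(2/N_{k,d})[k(k+d-2)(\cdots)+(\cdots)]$ quadratic in $\lambda_{k\pm 1}$. For the $L^\infty$ numerator, evaluate $\Psi_d$ at the pole $\theta=e_d$ using $L_{j,d}(e_d)=P_{j,d}(1)=1$ to obtain
\begin{align}
\textstyle \sum_i\|\Psi_i\|_\infty^2 \;\ge\; \Psi_d(e_d)^2 \;=\; \bigl(\tfrac{k(k+d-2)}{2k+d-2}\bigr)^{\!2}\,\bigl(\lambda_{k-1}(\sigma)-\lambda_{k+1}(\sigma)\bigr)^{2}.
\end{align}
The gap $|\lambda_{k-1}-\lambda_{k+1}|$ remains nonzero under the hypothesis $a+(-1)^{k+1}b\neq 0$, by the parity decomposition $\lambda_j(\sigma)=C_d[a+(-1)^jb]\int_0^1 t^\alpha P_{j,d}(t)(1-t^2)^{(d-3)/2}dt$ for $\alpha$-positive homogeneous activations.

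\textbf{Normalization and main obstacle.} Choose $\gamma_{k,d}=\pm(\sum_i\|\Psi_i\|_\infty^2)^{-1/2}$ so that $\text{SD}_{\mathcal{B}_{\mathcal{F}_1^d}}(\mu_d,\nu_d)=1$; the $L^\infty$ lower bound ensures $|\gamma_{k,d}|\le 1$ for all $k\ge 1$, $d\ge 2$. Combining the $L^\infty$ lower and $L^2$ upper bounds yields \eqref{eq:sd_sep}, and applying Stirling to $N_{k,d}=(2k+d-2)(k+d-3)!/(k!(d-2)!)$ gives \eqref{eq:sd_sep_log}, matching the IPM rate of \autoref{thm:separation_ipm}. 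The main technical obstacle is the closed-form evaluation of the Funk--Hecke integrals $\lambda_{k\pm 1}(\sigma)$ (and their difference) for $\alpha$-positive homogeneous $\sigma$, producing the explicit constants $(d+\alpha-2)/(\alpha+1)$ and $k(d+k-3)/(\alpha+1)$ in \eqref{eq:sd_sep}; this requires combining the Sturm--Liouville form of the Legendre ODE, two integrations by parts on $[0,1]$, and the three-term recurrence $(2j+d-2)tP_{j,d}=(j+d-2)P_{j+1,d}+jP_{j-1,d}$. Conceptually, the exponential separation is inherited from the gap $\|L_{k,d}\|_\infty/\|L_{k,d}\|_{L^2}=\sqrt{N_{k,d}}$, transferred to $\Psi_d$ via its axisymmetric representation as a combination of $L_{k\pm 1,d}$.
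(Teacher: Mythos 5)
Your proposal takes a genuinely different decomposition than the paper. The paper writes the Riemannian gradient as $\nabla L_{k,d}= \hat{\nabla}L_{k,d}-kL_{k,d}x$ and handles the two pieces with two distinct Funk--Hecke arguments: the Euclidean gradient directly (degree $k-1$), and $xL_{k,d}$ via the observation that $\hat{\nabla}_{\theta}\big((\langle\theta,x\rangle)_+^{\alpha+1}\big)=(\alpha+1)(\langle\theta,x\rangle)_+^\alpha x$, which lets them trade the extra factor of $x$ for a $\theta$-gradient and brings in the eigenvalue $\lambda_{k,d}^{(\alpha+1)}$. You instead further split $x_iL_{k,d}=H_{k+1,i}+(2k+d-2)^{-1}\partial_iL_{k,d}$ into orthogonal harmonics of degrees $k+1$ and $k-1$, and apply Funk--Hecke termwise so that $\Psi_i$ is a linear combination of degree-$(k\pm1)$ harmonics with coefficients $\lambda_{k\pm1}(\sigma)$. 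This buys you something the paper does not have: an \emph{exact} $L^2$ expression for $\sum_i\|\Psi_i\|_{L^2}^2$ by orthogonality of the two degrees, rather than the factor-$2$ upper bound the paper gets from the crude inequality $(u+v)^2\le 2u^2+2v^2$ applied to nonorthogonal pieces. Your harmonic identities ($\sum_i\|\partial_iL_{k,d}\|_{L^2}^2 = k(2k+d-2)/N_{k,d}$, $\sum_i\|H_{k+1,i}\|_{L^2}^2=(k+d-2)/((2k+d-2)N_{k,d})$, $\partial_dL_{k,d}=kL_{k-1,d}$, $H_{k+1,d}=\tfrac{k+d-2}{2k+d-2}L_{k+1,d}$) are all correct.

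However, there is a real gap: the proof you wrote never actually produces the explicit constants $\tfrac{d+\alpha-2}{\alpha+1}$ and $\tfrac{k(d+k-3)}{\alpha+1}$ appearing in \eqref{eq:sd_sep}. Everything you derive is in terms of $\lambda_{k-1}(\sigma)$ and $\lambda_{k+1}(\sigma)$, and you close with ``the main technical obstacle is the closed-form evaluation of the Funk--Hecke integrals $\lambda_{k\pm 1}(\sigma)$ (and their difference).'' That evaluation is not a side remark --- it is the step that converts your harmonic-decomposition identities into the quantitative statement of the theorem. The paper does it via the closed form for $\int_{-1}^{1}P_{j,d}(t)(t)_+^{\beta}(1-t^2)^{(d-3)/2}\,dt$ (equation \eqref{eq:bach_equality}, due to Bach), which yields the precise ratio $\lambda_{k-1,d}^{(\alpha)}\big/\big(\tfrac{k}{\alpha+1}\lambda_{k,d}^{(\alpha+1)}\big)=\tfrac{k+d+\alpha-2}{k}$ and hence the two constants. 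You would also need to check numerically that your exact $L^2$ formula $\tfrac{k(k+d-2)}{(2k+d-2)N_{k,d}}\big((k+d-2)\lambda_{k-1}^2+k\lambda_{k+1}^2\big)$ is indeed bounded above by the expression $\tfrac{2}{N_{k,d}}\big(k(k+d-2)A^2+B^2\big)$ stated in the theorem; you merely assert it is ``an upper bound of the form'' without checking that the coefficients match. As written, the proposal sets up a clean and possibly sharper route, but it leaves out precisely the algebraic step that makes the theorem's bound concrete.
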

As in \autoref{thm:separation_ipm}, from \eqref{eq:sd_sep_log} we see that choosing the parameter $k$ of the same order as $d$, $\text{SD}_{\mathcal{B}_{\mathcal{F}_1^d}}(\mu_d,\nu_d)$ is exponentially larger than $\text{SD}_{\mathcal{B}_{\mathcal{F}_2^d}}(\mu_d,\nu_d)$ in the dimension $d$. This result suggests that in high dimensions there exist high frequency densities that can be distinguished by feature-learning Stein Discrepancy discriminators but not by their fixed-kernel counterpart, and that may explain the differences in generative modeling performance for Stein discrepancy EBMs and SVGD (\autoref{sec:intro}).

\section{Bounds of $\mathcal{F}_1$ and $\mathcal{F}_2$ IPMs by sliced Wasserstein distances} \label{sec:bounds_f1_f2}

$\mathcal{F}_1$ and $\mathcal{F}_2$ IPMs measure differences of densities by slicing the input space and then maximizing (resp. averaging) the appropriate quantities. Max-sliced and sliced Wasserstein distances work, which have been studied by several works, work in an analogous fashion; one projects the distributions onto one-dimensional subspaces, and then maximizes or averages over the subspaces. Unlike the Wasserstein distance, which has been used for generative models such as WGAN \citep{arjovsky2017wasserstein} but whose estimation suffers from the curse of dimensionality, max-sliced and sliced Wasserstein enjoy parametric estimation rates which make them more suitable as discriminators.

The goal of this section is to show that 
$\mathcal{F}_1$ IPMs are equivalent to max-sliced Wasserstein distances up to a constant power, while sliced Wasserstein distances are similarly equivalent to a fixed-kernel IPM with a kernel that is slightly different from the 
$\mathcal{F}_2$ kernel. These bounds are helpful to get a quantitative understanding of how strong feature-learning and fixed-kernel IPMs are, and provide a novel bridge between sliced optimal transport and generative modeling discriminators.

\subsection{Spiked and sliced Wasserstein distances}
Throughout this section $k$ denotes an integer such that $1 \leq k \leq d$. The Stiefel manifold $\mathcal{V}_k$ is the set of matrices $U \in \R^{k \times d}$ such that $U U^\top = I_{k \times k}$ (i.e. the rows of $U$ are orthonormal).
We define the \textit{$k$-dimensional projection robust $p$-Wasserstein distance} between $\mu,\nu \in \mathcal{P}(\R^d)$ as 
\begin{align} \label{eq:def_k_dim_projection}
    \overline{\mathcal{W}}_{p,k}(\mu,\nu)^p = \max_{U \in \mathcal{V}_k} \min_{\pi \in \Gamma(\mu,\nu)} \int \|Ux-Uy\|^p d\pi(x,y), 
\end{align}
where $\Gamma(\mu,\nu)$ denotes the set of couplings between $\mu,\nu$, i.e. of measures $\mathcal{P}(K \times K)$ with projections $\mu$ and $\nu$.
This is the distance studied by \cite{nilesweed2019estimation} as a good estimator for the Wasserstein distance for a certain class of target densities with low dimensional structure.

The \textit{integral $k$-dimensional projection robust $p$-Wasserstein distance} between $\mu,\nu \in \mathcal{P}(\R^d)$ is defined as 
\begin{align} \label{eq:def_k_dim_int_projection}
    \underline{\mathcal{W}}_{p,k}(\mu,\nu)^p = \int_{\mathcal{V}_k} \bigg(\min_{\pi \in \Gamma(\mu,\nu)} \int \|Ux-Uy\|^p d\pi(x,y) \bigg) d\tau(U), 
\end{align}
where $\tau$ is the uniform measure over $\mathcal{V}_k$.
\cite{nadjahi2020statistical} studied statistical aspects of this distance in the case in which $k=1$, 
while \cite{lin2021projection} considers the case with general $k$. Notice trivially that $\overline{\mathcal{W}}_{p,k}(\mu,\nu) \geq \underline{\mathcal{W}}_{p,k}(\mu,\nu)$.

Sliced Wasserstein distances are spiked Wasserstein distances with $k=1$, but they were studied first chronologically \citep{bonneel2014sliced, kolouri2016sliced, kolouri2019generalized}. Namely, the \textit{sliced Wasserstein distance} is the integral 1-dimensional projection robust Wasserstein distance $\underline{\mathcal{W}}_{p,k}$, and the \textit{max-sliced Wasserstein distance} is the 1-dimensional projection robust Wasserstein distance $\overline{\mathcal{W}}_{p,k}$. 
Some arguments are easier for the case $k=1$ because the Stiefel manifold is the sphere $\mathbb{S}^{d-1}$.

\subsection{Results}
We prove in \autoref{thm:d_f1_spiked_w} that for $K = \{x \in \R^d | \|x\|_2 \leq 1 \} \times \{1\}$, for which the $\mathcal{F}_1$ space corresponds to overparametrized two-layer neural networks with bias, the $\mathcal{F}_1$ IPM can be upper and lower-bounded by the projection robust Wasserstein distance $\overline{\mathcal{W}}_{1,k}(\mu,\nu)$ up to a constant power (not depending on the dimension). 

\begin{restatable}{thm}{thmdfonespikedw} \label{thm:d_f1_spiked_w}
Let $\delta > 0$ be larger than a certain constant depending on $k$ and $\alpha$. Let $\sigma(x) = (x)_{+}^{\alpha}$ be the $\alpha$-th power of the ReLu activation function, where $\alpha$ is a non-negative integer. Let $\mu,\nu$ be Borel probability measures with support included in $\{x \in \R^d | \|x\|_2 \leq 1 \} \times \{1\}$. Let $d_{\mathcal{B}_{\mathcal{F}_1}}$ be as defined in \eqref{eq:def_f1_ipm} and $\overline{\mathcal{W}}_{1,k}$ as defined in \eqref{eq:def_k_dim_projection}. Then,
\begin{align} \label{eq:f_1_spiked}
    \delta \overline{\mathcal{W}}_{1,k}(\mu,\nu) \geq \delta d_{\mathcal{B}_{\mathcal{F}_1}}(\mu,\nu) \geq \overline{\mathcal{W}}_{1,k}(\mu,\nu) - 2 C(k,\alpha) \delta^{-\frac{1}{\alpha+(k-1)/2}} \log \left(\delta \right),
\end{align}
where $C(k,\alpha)$ is a constant that depends only on $k$ and $\alpha$.
If we optimize the lower bound in \eqref{eq:f_1_spiked} with respect to $\delta$, we obtain $\overline{\mathcal{W}}_{1,k}(\mu,\nu) \geq d_{\mathcal{B}_{\mathcal{F}_1}}(\mu,\nu) \geq \tilde{O}(\overline{\mathcal{W}}_{1,k}(\mu,\nu)^{\alpha + \frac{k+1}{2}})$
where $\tilde{O}$ hides log factors.
\end{restatable}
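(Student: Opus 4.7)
The plan is to prove the two halves of \eqref{eq:f_1_spiked} by independent arguments.

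For the \textbf{upper bound} $\overline{\mathcal{W}}_{1,k}(\mu,\nu) \geq d_{\mathcal{B}_{\mathcal{F}_1}}(\mu,\nu)$, I would fix any $\theta = (\theta',\theta_{d+1}) \in \mathbb{S}^d$ and note that for $x = (x',1) \in K$ the test function $\sigma(\langle \theta, x \rangle) = \sigma(\langle \theta', x' \rangle + \theta_{d+1})$ depends on $x$ only through a one-dimensional linear projection of $x'$. Since $\sigma=(\cdot)_+^\alpha$ is Lipschitz on the bounded interval traced by $\langle\theta,x\rangle$, Kantorovich--Rubinstein duality gives $|\int \sigma(\langle \theta, x \rangle) d(\mu-\nu)(x)| \leq L(\alpha)\cdot W_1\bigl((\theta')_\#\mu,(\theta')_\#\nu\bigr) \leq L(\alpha)\,\overline{\mathcal{W}}_{1,1}(\mu,\nu) \leq L(\alpha)\,\overline{\mathcal{W}}_{1,k}(\mu,\nu)$, using that $W_1$ is monotone under projection onto fewer dimensions. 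Taking the supremum over $\theta$ and absorbing the constant $L(\alpha)$ into the assumption $\delta \geq $ const closes this direction.

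For the \textbf{lower bound}, let $U^\ast \in \mathcal{V}_k$ realize the maximum in \eqref{eq:def_k_dim_projection} and let $g^\ast : \R^k \to \R$ be a $1$-Lipschitz Kantorovich potential, so that $\int g^\ast(U^\ast x') d(\mu-\nu)(x) = \overline{\mathcal{W}}_{1,k}(\mu,\nu)$. The composition $f^\ast(x) := g^\ast(U^\ast x')$ is $1$-Lipschitz on $K$ and depends on $x$ only through the $k$-dimensional linear projection $(U^\ast \mid 0)\,x$. I would then invoke an $\mathcal{F}_1$-approximation bound in the spirit of \citet{bach2017breaking} (Prop.~6): for $\delta$ above an explicit threshold there exists $h_\delta \in \mathcal{F}_1$ with $\|h_\delta\|_{\mathcal{F}_1} \leq \delta$ and
\begin{equation*}
\|f^\ast - h_\delta\|_{L^\infty(K)} \leq C(k,\alpha)\, \delta^{-1/(\alpha + (k-1)/2)} \log \delta,
\end{equation*}
the rate reflecting the \emph{intrinsic} dimension $k$ of $f^\ast$ rather than the ambient $d$. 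Plugging $h_\delta$ into the variational definition \eqref{eq:def_f1_ipm} yields
\begin{equation*}
\delta\, d_{\mathcal{B}_{\mathcal{F}_1}}(\mu,\nu) \;\geq\; \int h_\delta\, d(\mu-\nu) \;\geq\; \overline{\mathcal{W}}_{1,k}(\mu,\nu) - 2 C(k,\alpha)\, \delta^{-1/(\alpha + (k-1)/2)}\log \delta.
\end{equation*}
Balancing the two $\delta$-dependent terms (setting $\delta\, d_{\mathcal{B}_{\mathcal{F}_1}} \asymp \delta^{-1/(\alpha+(k-1)/2)}\log\delta$) and solving for $d_{\mathcal{B}_{\mathcal{F}_1}}$ gives $d_{\mathcal{B}_{\mathcal{F}_1}} \geq \tilde{O}(\overline{\mathcal{W}}_{1,k}^{\alpha + (k+1)/2})$.

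The main obstacle is the approximation step. It is essential that the rate $\delta^{-1/(\alpha + (k-1)/2)}$ scale with the intrinsic projection dimension $k$ and not with the ambient $d$, since the generic bound for a Lipschitz function on $\R^{d+1}$ would be the much weaker $\delta^{-1/(\alpha+(d-1)/2)}$. Porting Bach's low-dimensional approximation result to this paper's spherical-feature $\mathcal{F}_1$ (with the bias absorbed in the last coordinate) and to a general $\alpha$-positive-homogeneous activation rather than only pure ReLU is where the technical effort concentrates; the choice of the constant threshold on $\delta$ in the theorem statement is precisely what makes the $\log\delta$ factor and the hidden constant $C(k,\alpha)$ meaningful.
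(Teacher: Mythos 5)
Your decomposition matches the paper's (upper bound via Lipschitz--Kantorovich duality applied to single neurons, lower bound via low-dimensional approximation of the Kantorovich potential composed with $U^\star$), but the key technical idea in the lower bound is left as a black box and, as stated, does not close.

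For the lower bound, you say you would ``invoke an $\mathcal{F}_1$-approximation bound in the spirit of Bach Prop.~6'' to find $h_\delta \in \mathcal{F}_1(\R^d)$ with $\|h_\delta\|_{\mathcal{F}_1}\leq\delta$ and $\|f^\ast - h_\delta\|_{L^\infty} \lesssim \delta^{-1/(\alpha+(k-1)/2)}\log\delta$, with the rate ``reflecting the intrinsic dimension $k$.'' But Bach's Prop.~6 is an $\mathcal{F}_2$ approximation result and has no adaptivity: if you apply it in $\R^d$ you get the rate $\delta^{-1/(\alpha+(d-1)/2)}$, which is useless here. The paper's actual mechanism, which you do not supply, is to apply Bach's $\mathcal{F}_2$-approximation result in $\R^k$ to the reduced Kantorovich potential $g^\ast$, obtaining $h\in\mathcal{F}_2(\R^k)$ with $\|h\|_{\mathcal{F}_2(\R^k)}\leq\delta$, and then observe that $h\circ U^\star\in\mathcal{F}_1(\R^d)$ with $\|h\circ U^\star\|_{\mathcal{F}_1(\R^d)}\leq\|h\|_{\mathcal{F}_2(\R^k)}$. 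This last step is not automatic: it is proved by writing $h(U^\star x) = \int_{\mathbb{S}^k}(\langle ((U^\star)^\top\theta_{1:k},\theta_{k+1}),(x,1)\rangle)_+^\alpha\,d\mu_h(\theta)$ and noting that $\theta\mapsto((U^\star)^\top\theta_{1:k},\theta_{k+1})$ maps $\mathbb{S}^k$ into $\mathbb{S}^d$, so the pushforward of $\mu_h$ witnesses an $\mathcal{F}_1(\R^d)$ representation with the same total variation. Without this composition-and-pushforward argument (or an explicit adaptive $\mathcal{F}_1$ approximation theorem, which is not available off the shelf), the $k$-dependent exponent in the bound is unsubstantiated. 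You flag this as ``where the technical effort concentrates,'' which is correct, but it is the substance of the proof rather than a detail to be filled in.

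On the upper bound, the idea is right but the proposed fix for the constant does not work. You write $|\int\sigma(\langle\theta,x\rangle)\,d(\mu-\nu)|\leq L(\alpha)\,W_1(\theta_\#\mu,\theta_\#\nu)$ and propose ``absorbing the constant $L(\alpha)$ into the assumption $\delta\geq$ const.'' But the first inequality in \eqref{eq:f_1_spiked} is $\delta\overline{\mathcal{W}}_{1,k}\geq\delta d_{\mathcal{B}_{\mathcal{F}_1}}$, i.e.\ $\overline{\mathcal{W}}_{1,k}\geq d_{\mathcal{B}_{\mathcal{F}_1}}$ after cancelling $\delta$; there is nothing for $L(\alpha)$ to be absorbed into. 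What is actually needed is $\mathcal{B}_{\mathcal{F}_1(\R^k)}\subseteq\text{Lip}_1$, which is the paper's (implicit) route and gives the $W_1$ comparison with constant $1$ rather than $L(\alpha)$, after which you use the $\mathcal{F}_1$ invariance under composition with $U\in\mathcal{V}_k$ to pass from $\mathcal{F}_1(\R^k)$ to $\mathcal{F}_1(\R^d)$ --- the same pushforward argument as in the lower bound.
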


While for 
the $\mathcal{F}_2$ IPM the link with the sliced Wasserstein distance is not straightforward, it can be established when we switch from uniform $\tau$ to an alternative feature measure $\tilde{\tau}$. We define the class $\tilde{\mathcal{F}}_2$ of functions $\R^{d} \to \R$ as the RKHS associated with the following kernel 
\begin{align}
\begin{split}
    &\tilde{k}(x,y) = \int_{\mathbb{S}^d} \sigma(\langle (x,1), \theta \rangle) \sigma(\langle (y,1), \theta \rangle) \ d\tilde{\tau}(\theta) = \\ &\frac{1}{\pi}\int_{\mathbb{S}^{d-1}} \int_{-1}^{1} \sigma\left(\langle (x,1), (\sqrt{1-t^2} \xi, t) \rangle \right) \sigma\left(\langle (y,1), (\sqrt{1-t^2} \xi, t) \rangle \right) (1-t^2)^{-1/2} \ dt \ d\tau_{(d-1)}(\xi).
\end{split}
\end{align}
\begin{restatable}{prop}{proptildetau}\textnormal{(\textbf{$\tilde{\tau}$ as a rescaling of uniform measure})} \label{prop:tildetau}
The measure $d\tilde{\tau}(\sqrt{1-t^2} \xi, t) = \frac{1}{\pi} (1-t^2)^{-1/2} \ dt \ d\tau_{(d-1)}(\xi)$ is a probability measure. For comparison, the uniform measure over $\mathbb{S}^d$ can be written as $d\tau(\sqrt{1-t^2} \xi, t) = \frac{\Gamma((d+1)/2)}{\sqrt{\pi} \Gamma(d/2)} (1-t^2)^{\frac{d-1}{2}} \ dt \ d\tau_{(d-1)}(\xi)$.
\end{restatable}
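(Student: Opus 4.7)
The plan is to verify the two assertions by Fubini's theorem combined with the standard polar decomposition of $\mathbb{S}^d$. The key observation is that the map $(t,\xi)\mapsto(\sqrt{1-t^2}\,\xi,\,t)$ is a bijection from $(-1,1)\times\mathbb{S}^{d-1}$ onto $\mathbb{S}^d$ minus its two poles, so both claimed identities are statements about the pushforward of a product measure on $(-1,1)\times\mathbb{S}^{d-1}$ under this parametrization.

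For the first claim, by Fubini the total mass of the proposed density factors as $\tfrac{1}{\pi}\int_{-1}^{1}(1-t^2)^{-1/2}\,dt\cdot\tau_{(d-1)}(\mathbb{S}^{d-1})$. The first factor equals $1$ via the classical arcsine substitution $t=\sin\phi$, which gives $\int_{-1}^{1}(1-t^2)^{-1/2}\,dt=\pi$, and the second factor equals $1$ since $\tau_{(d-1)}$ is by definition uniform probability on $\mathbb{S}^{d-1}$. Since the density is nonnegative, this shows $\tilde\tau$ is a Borel probability measure on $\mathbb{S}^d$.

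For the second claim, I would start from the standard surface-area decomposition of $\mathbb{S}^d\subset\R^{d+1}$: parametrizing $\theta=(\sin\phi\cdot\xi,\cos\phi)$ with $\phi\in[0,\pi]$ and $\xi\in\mathbb{S}^{d-1}$, the unnormalized surface measure factors as $\sin^{d-1}\phi\,d\phi\,d\sigma_{\mathbb{S}^{d-1}}(\xi)$. Changing variables via $t=\cos\phi$ (so $\sin\phi=\sqrt{1-t^2}$ and $d\phi$ contributes a factor $1/\sqrt{1-t^2}$) converts $\sin^{d-1}\phi\,d\phi$ into a power of $(1-t^2)$ times $dt$. Dividing by $|\mathbb{S}^d|$ to obtain the uniform probability, rewriting $d\sigma_{\mathbb{S}^{d-1}}=|\mathbb{S}^{d-1}|\,d\tau_{(d-1)}$, and applying the closed form $|\mathbb{S}^m|=2\pi^{(m+1)/2}/\Gamma((m+1)/2)$ simplifies the prefactor to $|\mathbb{S}^{d-1}|/|\mathbb{S}^d|=\Gamma((d+1)/2)/(\sqrt{\pi}\,\Gamma(d/2))$, matching the claimed constant.

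There is no substantive obstacle: the whole argument is bookkeeping with one change of variables and one Fubini application. The points that require care are ensuring the parametrization is a bijection away from a null set, and correctly tracking the Jacobian of $t=\cos\phi$ together with the orientation reversal when switching from $\phi\in[0,\pi]$ to $t\in[-1,1]$.
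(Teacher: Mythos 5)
Your treatment of the first claim is the same as the paper's: a Fubini factorization together with $\int_{-1}^{1}(1-t^2)^{-1/2}\,dt=\pi$. For the second claim you take a slightly different route: the paper simply invokes equation (1.17) of Atkinson and Han and reads the factorization off, whereas you rederive it from the $(\sin\phi\cdot\xi,\cos\phi)$ parametrization and the substitution $t=\cos\phi$. That is a perfectly good self-contained alternative.

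However, you stop short of pinning down the exponent, writing only that $\sin^{d-1}\phi\,d\phi$ becomes ``a power of $(1-t^2)$ times $dt$''. That is precisely the step where the content of the proposition lives, and carrying it through gives
\[
\sin^{d-1}\phi\,d\phi \;=\; (1-t^2)^{(d-1)/2}\cdot(1-t^2)^{-1/2}\,dt \;=\; (1-t^2)^{(d-2)/2}\,dt,
\]
i.e.\ exponent $(d-2)/2$, not the $(d-1)/2$ printed in the statement. Note that $(d-2)/2$ is also what is forced by the paper's own $\mathbb{S}^{d-1}$ decomposition in equation \eqref{eq:hausdorff_change}, which carries the exponent $(d-3)/2$ (shift $d\mapsto d+1$), and it is the only exponent for which the stated constant actually normalizes: one has $\int_{-1}^{1}(1-t^2)^{(d-2)/2}\,dt=\sqrt{\pi}\,\Gamma(d/2)/\Gamma\bigl(\tfrac{d+1}{2}\bigr)$, exactly the reciprocal of the prefactor $\Gamma\bigl(\tfrac{d+1}{2}\bigr)/(\sqrt{\pi}\,\Gamma(d/2))=|\mathbb{S}^{d-1}|/|\mathbb{S}^d|$, whereas $\int_{-1}^1(1-t^2)^{(d-1)/2}\,dt$ does not produce that reciprocal (check $d=2$). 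So: finish the Jacobian computation rather than leaving it implicit. You will find that your derivation is correct and that the printed exponent is an off-by-one typo which your approach, unlike the paper's bare citation, would have surfaced.
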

That is, $\mathcal{F}_2$ and $\tilde{\mathcal{F}_2}$ are both fixed-kernel spaces with a similar kernel. They differ only in the weighing measure of the kernel; all the expressions which are valid in the $\mathcal{F}_2$ setting are also valid for $\tilde{\mathcal{F}_2}$ if we replace $\tau$ by $\tilde{\tau}$.
In analogy with the $\mathcal{F}_2$ IPM, the $\tilde{\mathcal{F}_2}$ IPM is given below.
\begin{align} \label{eq:def_tilde_f2}
d^2_{\mathcal{B}_{\tilde{\mathcal{F}}_2}}(\mu,\nu) = \int_{\mathbb{S}^d} \left( \int_K \sigma(\langle x, \theta \rangle) d(\mu-\nu)(x) \right)^2 d\tilde{\tau}(\theta).
\end{align}

Analogously to \autoref{thm:d_f1_spiked_w}, \autoref{thm:d_tilde_f2} establishes that the $\tilde{\mathcal{F}_2}$ IPM is upper and lower-bounded by the sliced Wasserstein distance $\underline{\mathcal{W}}_{1,1}(\mu,\nu)$ up to a constant power (not depending on the dimension). The reason to introduce the space $\tilde{\mathcal{F}_2}$ is that in the proof, the argument that makes the connection with the sliced Wasserstein distance requires the base measure of the kernel to be $\tilde{\tau}$ and does not work for $\tau$. However, we do not imply that a similar result for the $\mathcal{F}_2$ IPM is false.

\begin{restatable}{thm}{thmdtildeftwo} \label{thm:d_tilde_f2}
Let $\delta > 0$ be larger than a certain constant depending on $k$ and $\alpha$. Let $\sigma(x) = (x)_{+}^{\alpha}$ be the $\alpha$-th power of the ReLu activation function, where $\alpha$ is a non-negative integer. Let $\mu,\nu$ be Borel probability measures with support included in $\{x \in \R^d | \|x\|_2 \leq 1 \} \times \{1\}$. Let $d_{\mathcal{B}_{\tilde{\mathcal{F}}_2}}$ be as defined in \eqref{eq:def_tilde_f2} and $\underline{\mathcal{W}}_{1,1}$ as defined in \eqref{eq:def_k_dim_int_projection}. Then,
\begin{align} \label{eq:tilde_f2_lower}
    \delta d^{2/3}_{\tilde{\mathcal{F}}_2}(\mu,\nu) \geq \left(\frac{5}{12 \pi \alpha 2^{\alpha/2}} \right)^{1/3} \left( \underline{\mathcal{W}}_{1,1}(\mu,\nu) - 2 C(1,\alpha) \delta^{-\frac{1}{\alpha}} \log \left(\delta \right) \right).
\end{align}
and $ \pi d^2_{\mathcal{B}_{\tilde{\mathcal{F}}_2}}(\mu,\nu) \leq \underline{\mathcal{W}}_{1,1}(\mu,\nu)$. If we optimize the lower bound in \eqref{eq:tilde_f2_lower} with respect to $\delta$, we obtain $d^{2/3}_{\tilde{\mathcal{F}}_2}(\mu,\nu) \geq \tilde{O}(\underline{\mathcal{W}}_{1,1}(\mu,\nu)^{1+\alpha})$.
\end{restatable}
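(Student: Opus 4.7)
The proof splits into the upper bound $\pi d^2_{\mathcal{B}_{\tilde{\mathcal{F}}_2}}\leq\underline{\mathcal{W}}_{1,1}$ and the lower bound \eqref{eq:tilde_f2_lower}. For the upper bound I would exploit the factorization of $\tilde\tau$ from \autoref{prop:tildetau}: writing $\theta=(\sqrt{1-t^2}\xi,t)\in\mathbb{S}^d$ and $x=(x_0,1)$, we get $\langle x,\theta\rangle=\sqrt{1-t^2}\langle x_0,\xi\rangle+t$, so the inner integrand in \eqref{eq:def_tilde_f2} depends on $x$ only through $s=\langle x_0,\xi\rangle$, reducing to a one-dimensional expression in the pushforwards $\mu_\xi,\nu_\xi$. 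For each $(\xi,t)$ the map $s\mapsto\sigma(\sqrt{1-t^2}s+t)$ is Lipschitz in $s$ with constant proportional to $\sqrt{1-t^2}$ and uniformly bounded on $[-1,1]$, so Kantorovich--Rubinstein duality gives $|g_\xi(t)|\lesssim\sqrt{1-t^2}\,W_1(\mu_\xi,\nu_\xi)$. Squaring, and bounding one copy of $|g_\xi|$ by its $L^\infty$ bound, yields $g_\xi(t)^2\lesssim\sqrt{1-t^2}\,W_1(\mu_\xi,\nu_\xi)$; the factor $\sqrt{1-t^2}$ cancels the $(1-t^2)^{-1/2}$ weight of $d\tilde\tau$, the $t$-integral collapses to a constant, and what remains is $\int W_1(\mu_\xi,\nu_\xi)\,d\tau_{(d-1)}(\xi)=\underline{\mathcal{W}}_{1,1}(\mu,\nu)$.

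For the lower bound I would follow the blueprint of \autoref{thm:d_f1_spiked_w} with TV control replaced by H\"older control. Let $f^*_\xi$ be a 1-Lipschitz Kantorovich potential for $(\mu_\xi,\nu_\xi)$ normalized to $f^*_\xi(0)=0$, and set $F(x_0):=\int_{\mathbb{S}^{d-1}}f^*_\xi(\langle x_0,\xi\rangle)\,d\tau_{(d-1)}(\xi)$, so that $\int F\,d(\mu-\nu)=\underline{\mathcal{W}}_{1,1}(\mu,\nu)$. I would represent $F$ approximately as $\tilde f(x)=\int h(\theta)\sigma(\langle x,\theta\rangle)\,d\tilde\tau(\theta)$ in three steps: (i) mollify each $f^*_\xi$ at scale $\epsilon$ to get $\tilde f^*_\xi$ with $\|\tilde f^*_\xi-f^*_\xi\|_\infty\lesssim\epsilon$ and $\|\tilde f^{*(\alpha+1)}_\xi\|_\infty\lesssim\epsilon^{-\alpha}$; (ii) use the Taylor-type identity $g(s)=\tfrac{1}{\alpha!}\int(s-b)_+^\alpha g^{(\alpha+1)}(b)\,db$ (up to a polynomial piece absorbable as a single $\sigma$ feature) to rewrite each $\tilde f^*_\xi$ as a shifted-ReLU integral truncated to $|b|\leq B$; and (iii) substitute $b=-t/\sqrt{1-t^2}$ so that $(s-b)_+^\alpha\propto\sigma(\sqrt{1-t^2}s+t)$, with the Jacobian of the substitution matching the $(1-t^2)^{-1/2}$ weight of $d\tilde\tau$. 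Stacking the per-$\xi$ weights yields a global density $h(\theta)$ on $\mathbb{S}^d$.

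The decisive estimate is then H\"older with exponents $(3,\tfrac32)$: using the uniform bound $|g_\theta|\leq 2^{1+\alpha/2}$ one has $\|g\|_{L^{3/2}(\tilde\tau)}\leq\|g\|_\infty^{1/3}\|g\|_{L^2(\tilde\tau)}^{2/3}\lesssim d_{\mathcal{B}_{\tilde{\mathcal{F}}_2}}^{2/3}$, so that $\bigl|\int\tilde f\,d(\mu-\nu)\bigr|=\bigl|\int hg\,d\tilde\tau\bigr|\leq\|h\|_{L^3(\tilde\tau)}\|g\|_{L^{3/2}(\tilde\tau)}\lesssim\|h\|_{L^3}\,d^{2/3}$. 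Setting $\delta=\|h\|_{L^3(\tilde\tau)}$ and $\bigl|\int\tilde f\,d(\mu-\nu)\bigr|\geq\underline{\mathcal{W}}_{1,1}-2\|F-\tilde f\|_\infty$, the bound reduces to controlling $\|F-\tilde f\|_\infty$ and $\delta$ simultaneously. Tracking the mollification ($\lesssim\epsilon$) and truncation (polynomial in $B$) errors and rebalancing $\epsilon,B$ against $\delta$ produces the $\delta^{-1/\alpha}\log\delta$ term; finally, optimizing $\delta$ yields $d^{2/3}_{\mathcal{B}_{\tilde{\mathcal{F}}_2}}\gtrsim\underline{\mathcal{W}}_{1,1}^{\alpha+1}$ up to logs.

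\textbf{Main obstacle.} The delicate step is (iii): one must verify that the Jacobian of the $t\leftrightarrow b$ change of variables combines with the $(1-t^2)^{-1/2}$ weight of $\tilde\tau$ so that $L^3(\tilde\tau)$ control of $h$ corresponds cleanly to the ReLU-density cost of $\tilde f^{*(\alpha+1)}$ on $[-B,B]$. This compatibility is exactly why the theorem is stated for $\tilde{\mathcal{F}}_2$ rather than $\mathcal{F}_2$: the measure $\tilde\tau$ is engineered to be the ``intrinsic'' kernel weighing that commutes with one-dimensional slicing, and the analogous argument under the uniform $\tau$ would pick up dimension-dependent weights that spoil the bound. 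The explicit constant $5/(12\pi\alpha 2^{\alpha/2})$ emerges from the H\"older bookkeeping combined with this Jacobian calculation.
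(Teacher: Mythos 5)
Your upper-bound plan is essentially the same in spirit as the paper's (both reduce to one-dimensional Kantorovich--Rubinstein duality after the slicing factorization of $\tilde\tau$), though the paper's version is shorter: it uses $\mathcal W_1 \geq \tfrac12 \mathcal W_1^2 \geq \tfrac12(\text{1D-}\mathcal F_2\text{-MMD})^2$ pointwise in $u$ and then integrates, producing the clean $\alpha$-independent constant $1/\pi$, whereas your Lipschitz-constant bookkeeping for $s\mapsto\sigma(\sqrt{1-t^2}\,s+t)$ would yield an $\alpha$-dependent (and thus weaker) constant. The lower bound is where you genuinely depart from the paper, and where the gaps are.

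The paper does \emph{not} build one global $h\in L^3(\tilde\tau)$ and apply H\"older $(3,\tfrac32)$. Instead it invokes Bach's approximation lemma (\autoref{lem:lipschitz_bach}, $d=1$) as a black box \emph{per slicing direction} $u$, obtaining $h_u\in\mathcal F_2(\R)$ with $\|h_u\|_{\mathcal F_2}\leq\delta$ and hence a representing measure with TV norm $\leq\delta$; this yields $\delta\int_u\sup_\gamma|g^u_\gamma|\,d\tau(u)\geq\underline{\mathcal W}_{1,1}-2C(1,\alpha)\delta^{-1/\alpha}\log\delta$. The cube then enters through Jensen on the $u$-integral combined with the separate Lemma~\ref{lem:tilde_f2_bound}, which exploits the Lipschitz-in-$\gamma$ structure: a Lipschitz function with sup $s$ and Lipschitz constant $L$ has $L^2$-mass on a circle at least of order $s^3/L$. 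The explicit constant $\bigl(\tfrac{5}{12\pi\alpha 2^{\alpha/2}}\bigr)^{1/3}$ is a direct byproduct of that lemma's $\tfrac{5}{6\alpha 2^{\alpha/2}}$ factor; your interpolation $\|g\|_{3/2}\leq\|g\|_\infty^{1/3}\|g\|_2^{2/3}$ (correct on a probability space, but it is just $L^p$ monotonicity plus $\|g\|_2\leq\|g\|_\infty$, not genuine interpolation) discards the Lipschitz-in-$\gamma$ structure and cannot reproduce that particular constant.

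Two concrete gaps in your lower bound. First, the claimed ``clean matching'' in step (iii) does not hold: under $b=-t/\sqrt{1-t^2}$, $(s-b)_+^\alpha=(1-t^2)^{-\alpha/2}\sigma(\sqrt{1-t^2}\,s+t)$ and $db=-(1-t^2)^{-3/2}\,dt$, so your $h$ must absorb an excess factor $(1-t^2)^{-\alpha/2-1}$ relative to the $(1-t^2)^{-1/2}$ weight of $\tilde\tau$. This is bounded on the truncated range $|b|\leq B$, but it must be tracked explicitly and is not a ``cancellation.'' Second, you need an approximation result giving simultaneously $\|F-\tilde f\|_\infty\lesssim\delta^{-1/\alpha}\log\delta$ and $\|h\|_{L^3(\tilde\tau)}\leq\delta$; Bach's Lemma as cited gives only $L^2$ ($\mathcal F_2$-norm) control, so you would have to reprove an $L^3$ variant, including a careful treatment of the Taylor polynomial remainder of degree $\leq\alpha$ — which is \emph{not} absorbable as a single $\sigma$ feature (a single feature $c\,(\sqrt{1-t_0^2}s+t_0)_+^\alpha$ with $t_0>1/\sqrt2$ is only one specific degree-$\alpha$ polynomial, not a general one), and in Bach's proof it is precisely the source of the $\log\delta$ factor.
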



\section{Experiments} \label{sec:experiments}
To validate and clarify our findings, we perform experiments of the settings studied \autoref{sec:sep_f1_f2_ipm}, \autoref{sec:sep_sd} and \autoref{sec:bounds_f1_f2}. 
We use the ReLu activation function $\sigma(x) = (x)_{+}$, although remark that the results of \autoref{sec:sep_f1_f2_ipm} hold for a generic activation function, and the results of \autoref{sec:sep_sd} and \autoref{sec:bounds_f1_f2} hold for non-negative integer powers of the ReLu activation. The empirical estimates in the plots are detailed in \autoref{sec:details_exp}. They are averaged over 10 repetitions; the error bars show the maximum and minimum.
\begin{figure*}[ht!]
    \centering
    \includegraphics[width=.4\textwidth]{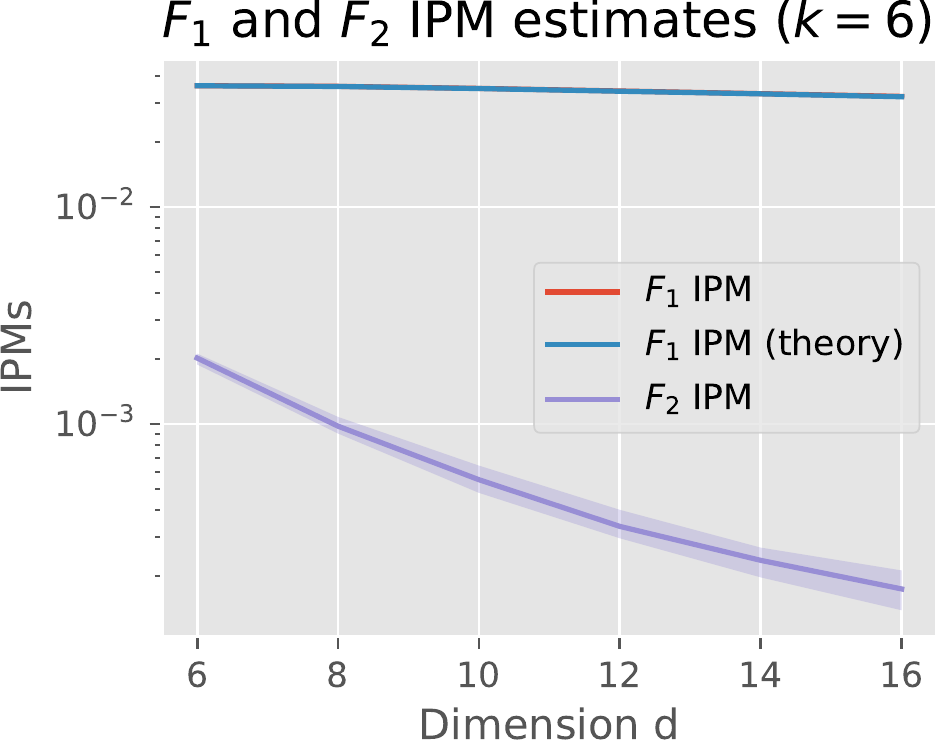}
    \includegraphics[width=.4\textwidth]{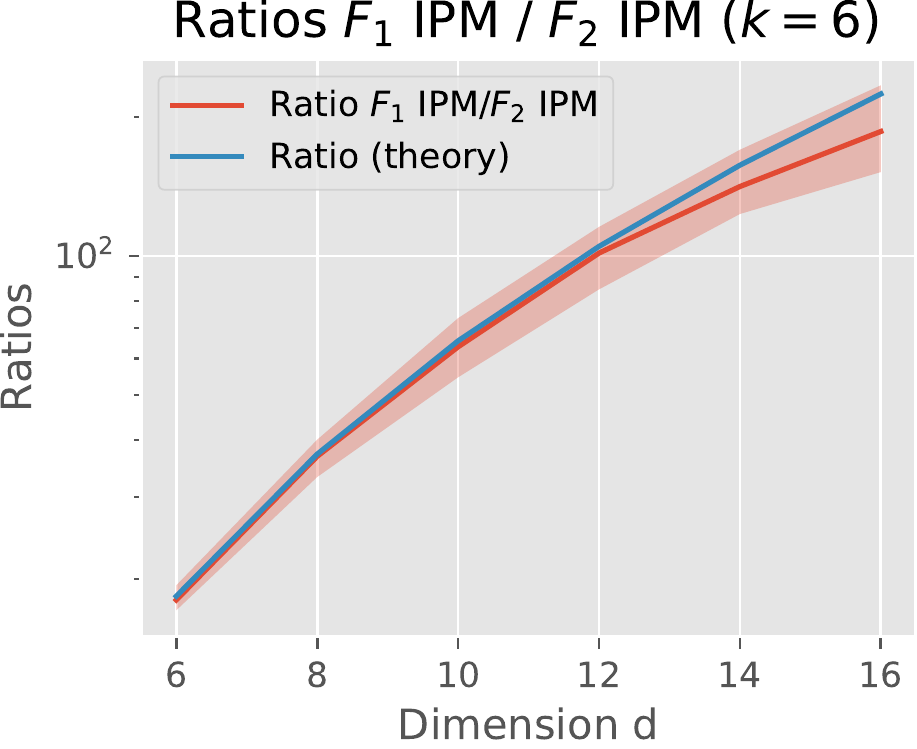}
    \caption{$\mathcal{F}_1$ and $\mathcal{F}_2$ IPM estimates for the pairs $\mu_d$ and $\nu_d$ defined in \eqref{eq:densities_mu_nu} for $k=6$ and varying dimension $d$. (Left) The blue and red curves (superposed) show two different estimates of the $\mathcal{F}_1$ IPM. The purple curve shows estimates for the $\mathcal{F}_2$ IPM. (Right) The blue curve shows the theoretical ratio between the $\mathcal{F}_1$ and the $\mathcal{F}_2$ IPMs (see equation \eqref{eq:distances_ratio0}). The red curve shows an empirical estimate of the ratio obtained by dividing the IPM estimates. 4400 million samples of $\mu_d$ and $\nu_d$ are used.}
    \label{fig:f1_f2_ipm_separation_figure}
\end{figure*}
\vskip -0.1 in
\textbf{Separation between $\mathcal{F}_1$ and $\mathcal{F}_2$ IPMs.} \autoref{fig:f1_f2_ipm_separation_figure} shows $\mathcal{F}_1$ and $\mathcal{F}_2$ IPM estimates for the pairs $\mu_d$ and $\nu_d$ defined in \eqref{eq:densities_mu_nu} for the Legendre polynomial of degree $k=6$ and varying dimension $d$, and its ratios. We observe that while the $\mathcal{F}_1$ IPM remains nearly constant in the dimension, the $\mathcal{F}_2$ IPM experiences a significant decrease. The ratios between IPMs closely track those predicted by our \autoref{thm:separation_ipm}, the mismatch being due to the overestimation of the $\mathcal{F}_2$ IPM caused by statistical errors. We were constrained in the values of 
$k$ and 
$d$ that we could choose; when the $\mathcal{F}_2$ IPM is small, which is the case when $k$ and/or 
$d$ are large, we need a high number of samples from the distributions 
$\mu_d, \nu_d$ to make the statistical error smaller than 
$d_{\mathcal{B}_{\mathcal{F}_2}}(\mu_d,\nu_d)$ and get a good estimate.

\textbf{Separation between $\mathcal{F}_1$ and $\mathcal{F}_2$ SDs.} \autoref{fig:f1_f2_sd_separation_figure} shows $\mathcal{F}_1$ and $\mathcal{F}_2$ SD estimates for the pairs $\mu_d$ and $\nu_d$ defined in equation \autoref{sec:separation_sd} for the Legendre polynomial of degree $k=5$ and varying dimension $d$, and its ratios. The fact that the empirical ratio is significantly above the theoretical lower bound indicates that our lower bound (although exponential) is not tight. This can be guessed by looking at the slackness in the inequalities of \autoref{lem:f1_sd_lower} and \autoref{lem:sd_f2_upper}.

\begin{figure*}[ht!]
    \centering
    \includegraphics[width=.4\textwidth]{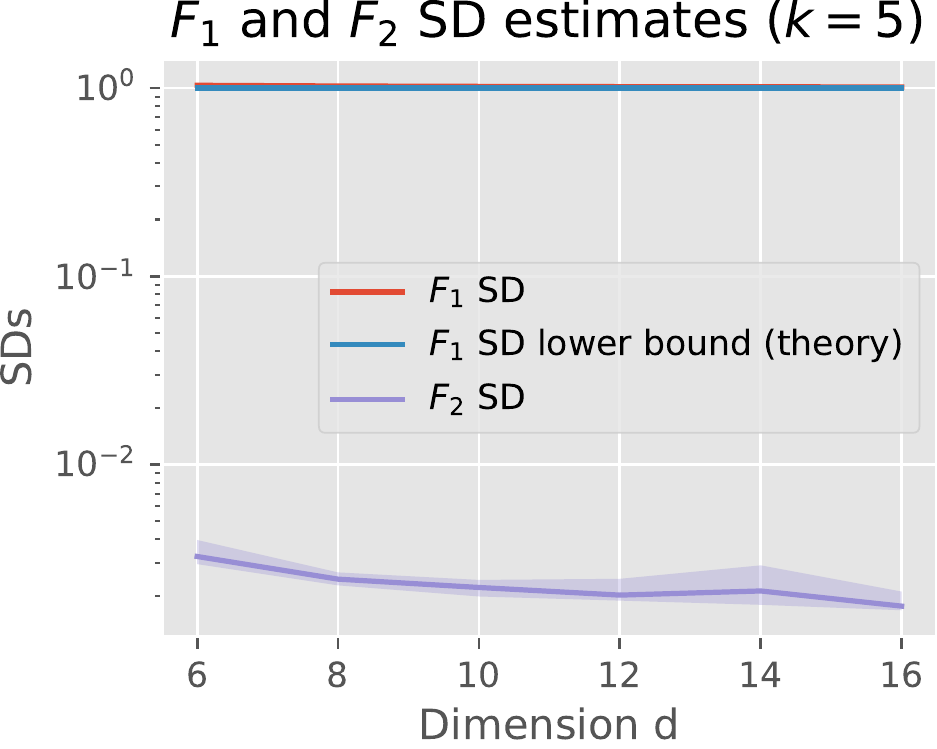}
    \includegraphics[width=.4\textwidth]{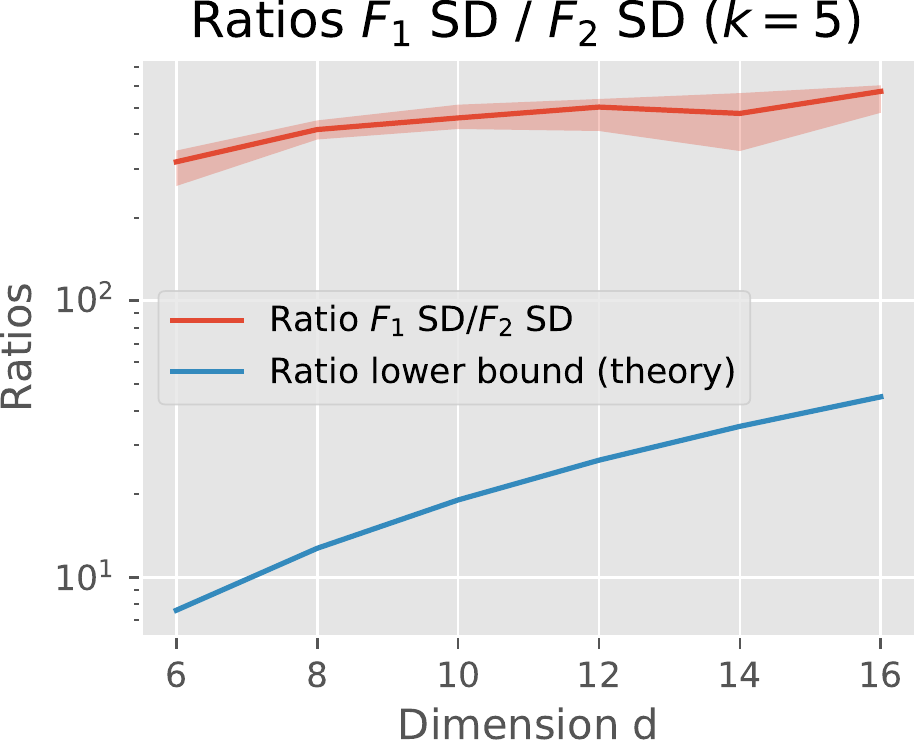}
    \caption{$\mathcal{F}_1$ and $\mathcal{F}_2$ SD estimates for the pairs $\mu_d$ and $\nu_d$ defined in \autoref{sec:separation_sd} for $k=5$ and varying dimension $d$. (Left) The red curve shows an empirical estimate of the $\mathcal{F}_1$ SD, the blue curve shows a theoretical lower bound (\autoref{lem:f1_sd_lower} in \autoref{sec:proofs_sep_sd}) on the $\mathcal{F}_1$ SD, the purple curve shows an estimate of the $\mathcal{F}_2$ SD. (Right) The blue curve represents the theoretical lower bound on the ratio between the $\mathcal{F}_1$ and the $\mathcal{F}_2$ SDs (see equation \eqref{eq:sd_sep}),  while the red curve shows an empirical estimate of the ratio obtained by dividing the SD estimates. 30 million samples are used.}
    \label{fig:f1_f2_sd_separation_figure}
\end{figure*}

\textbf{ $\mathcal{F}_1$, $\mathcal{F}_2$, $\tilde{\mathcal{F}}_2$ IPMs versus max-sliced and sliced Wasserstein.} \autoref{fig:f1_f2_wasserstein_figure} shows several metrics between a standard multivariate Gaussian and a Gaussian with unit variance in all directions except for one of smaller variance $0.1$, in varying dimensions. We observe that while the $\mathcal{F}_1$ IPM and the max-sliced Wasserstein distance are constant, the $\mathcal{F}_2$, $\tilde{\mathcal{F}}_2$ IPMs and the sliced Wasserstein distance decrease. For high dimensions they match the corresponding distances between two datasets of standard multivariate Gaussian, which means that the statistical noise precludes discrimination in these metrics.  

    

\begin{figure}[ht!]
\begin{minipage}[c]{7cm}
\includegraphics[width=0.95\textwidth]{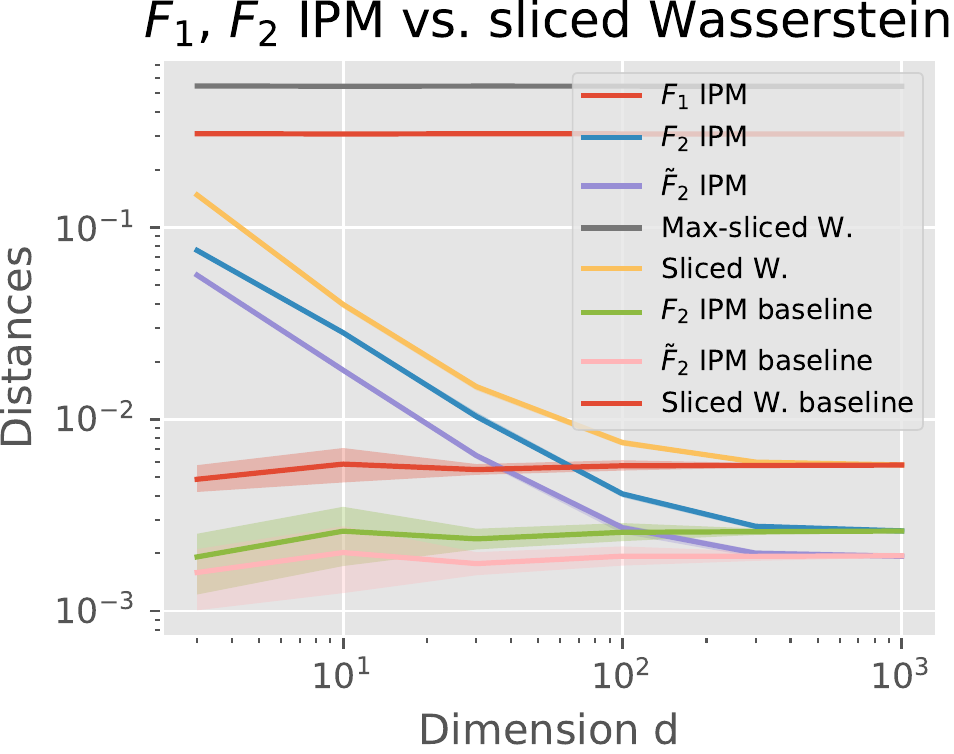}
\end{minipage}
\begin{minipage}[c]{7cm}
\caption{For varying dimension $d$, we plot $\mathcal{F}_1$, $\mathcal{F}_2$, $\tilde{\mathcal{F}}_2$ IPM, sliced and max-sliced Wasserstein estimates between a standard multivariate Gaussian and a Gaussian with unit variance in all directions except for one of smaller variance $0.1$. The estimates are computed using 100000 samples of each distribution. For comparison, the same estimates are shown between a standard multivariate Gaussian and itself, using two different sets of 100000 samples.}
\label{fig:f1_f2_wasserstein_figure}
\end{minipage}
\end{figure}
\vskip -0.2in
\section{Conclusions and discussion}
We have shown pairs of distributions over hyperspheres for which the $\mathcal{F}_1$ IPM and SD are exponentially larger than the $\mathcal{F}_2$ IPM and SD. In parallel, we have also provided links between the $\mathcal{F}_1$ IPM and max-sliced Wasserstein distance, and between the $\tilde{\mathcal{F}}_2$ IPM and the sliced Wasserstein distance. The densities of the distributions constructed in Sections \autoref{sec:sep_f1_f2_ipm} and \autoref{sec:sep_sd} are based on Legendre harmonics of increasing degree. Keeping in mind that spherical harmonics are the Fourier basis for $L^2(\mathbb{S}^{d-1})$ (in the sense that they constitute an orthonormal basis of eigenvalues of the Laplace-Beltrami operator), one can infer a simple overarching idea from our constructions: ``$\mathcal{F}_1$ discriminators are better than $\mathcal{F}_2$ discriminators at telling apart distributions whose densities have only high frequency differences''. It would be interesting to develop this intuition into a more general theory. Another avenue of future work is to understand how deep discriminators perform versus shallow ones, in analogy with the work of \cite{eldan2016the} for regression.

\paragraph{Acknowledgements.} We thank Joan Bruna for useful discussions. CD acknowledges partial support by ``la Caixa'' Foundation (ID 100010434), under agreement LCF/BQ/AA18/11680094.

\newpage
\bibliography{refs,simplex,biblio}

\begin{thebibliography}{47}
\providecommand{\natexlab}[1]{#1}
\providecommand{\url}[1]{\texttt{#1}}
\expandafter\ifx\csname urlstyle\endcsname\relax
  \providecommand{\doi}[1]{doi: #1}\else
  \providecommand{\doi}{doi: \begingroup \urlstyle{rm}\Url}\fi

\bibitem[Arjovsky et~al.(2017)Arjovsky, Chintala, and
  Bottou]{arjovsky2017wasserstein}
M.~Arjovsky, S.~Chintala, and L.~Bottou.
\newblock Wasserstein gan.
\newblock \emph{arXiv preprint arXiv:1701.07875}, 2017.

\bibitem[Arora et~al.(2017)Arora, Ge, Liang, Ma, and
  Zhang]{arora2017generalization}
S.~Arora, R.~Ge, Y.~Liang, T.~Ma, and Y.~Zhang.
\newblock Generalization and equilibrium in generative adversarial nets
  {(GANs)}.
\newblock In \emph{Proceedings of the 34th International Conference on Machine
  Learning-Volume 70}, pages 224--232. JMLR. org, 2017.

\bibitem[Atkinson and Han(2012)]{atkinson2012spherical}
K.~Atkinson and W.~Han.
\newblock \emph{Spherical Harmonics and Approximations on the Unit Sphere: An
  Introduction}, volume 2044.
\newblock Springer, 01 2012.

\bibitem[Bach(2017)]{bach2017breaking}
F.~Bach.
\newblock Breaking the curse of dimensionality with convex neural networks.
\newblock \emph{Journal of Machine Learning Research}, 18\penalty0
  (19):\penalty0 1--53, 2017.

\bibitem[Barp et~al.(2019)Barp, Briol, Duncan, Girolami, and
  Mackey]{barp2019minimum}
A.~Barp, F.-X. Briol, A.~Duncan, M.~Girolami, and L.~Mackey.
\newblock Minimum stein discrepancy estimators.
\newblock In \emph{Advances in Neural Information Processing Systems}, pages
  12964--12976, 2019.

\bibitem[Bateman and Erd{\'e}lyi(1954)]{erdelyi1954tables}
H.~Bateman and A.~Erd{\'e}lyi.
\newblock \emph{Tables of integral transforms}.
\newblock McGraw-Hill, 1954.

\bibitem[Bonneel et~al.(2014)Bonneel, Rabin, Peyré, and
  Pfister]{bonneel2014sliced}
N.~Bonneel, J.~Rabin, G.~Peyré, and H.~Pfister.
\newblock Sliced and radon wasserstein barycenters of measures.
\newblock \emph{Journal of Mathematical Imaging and Vision}, 51, 2014.

\bibitem[Chang et~al.(2020)Chang, Li, Mroueh, and Yang]{chang2020kernel}
W.-C. Chang, C.-L. Li, Y.~Mroueh, and Y.~Yang.
\newblock Kernel stein generative modeling, 2020.

\bibitem[Chizat and Bach(2020)]{chizat2020implicit}
L.~Chizat and F.~Bach.
\newblock Implicit bias of gradient descent for wide two-layer neural networks
  trained with the logistic loss.
\newblock In \emph{Conference on Learning Theory}, pages 1305--1338. PMLR,
  2020.

\bibitem[Cho and Saul(2009)]{cho2009kernel}
Y.~Cho and L.~K. Saul.
\newblock Kernel methods for deep learning.
\newblock In \emph{Advances in Neural Information Processing Systems 22}, pages
  342--350. Curran Associates, Inc., 2009.

\bibitem[Chwialkowski et~al.(2016)Chwialkowski, Strathmann, and
  Gretton]{chwialkowski2016gretton}
K.~Chwialkowski, H.~Strathmann, and A.~Gretton.
\newblock A kernel test of goodness of fit.
\newblock In \emph{Proceedings of The 33rd International Conference on Machine
  Learning}, volume~48 of \emph{Proceedings of Machine Learning Research},
  pages 2606--2615. PMLR, 2016.

\bibitem[Domingo-Enrich et~al.(2021)Domingo-Enrich, Bietti, Vanden-Eijnden, and
  Bruna]{domingoenrich2021energybased}
C.~Domingo-Enrich, A.~Bietti, E.~Vanden-Eijnden, and J.~Bruna.
\newblock On energy-based models with overparametrized shallow neural networks.
\newblock In \emph{Proceedings of the 38th International Conference on Machine
  Learning}, volume 139 of \emph{Proceedings of Machine Learning Research},
  pages 2771--2782. PMLR, 2021.

\bibitem[Dziugaite et~al.(2015)Dziugaite, Roy, and
  Ghahramani]{dziugaite2015training}
G.~K. Dziugaite, D.~M. Roy, and Z.~Ghahramani.
\newblock Training generative neural networks via maximum mean discrepancy
  optimization.
\newblock \emph{UAI}, 2015.

\bibitem[Efthimiou and Frye(2014)]{efthimiou2014spherical}
C.~Efthimiou and C.~Frye.
\newblock \emph{Spherical Harmonics in p Dimensions}.
\newblock 05 2014.

\bibitem[Eldan and Shamir(2016)]{eldan2016the}
R.~Eldan and O.~Shamir.
\newblock The power of depth for feedforward neural networks.
\newblock In \emph{29th Annual Conference on Learning Theory}, volume~49 of
  \emph{Proceedings of Machine Learning Research}, pages 907--940. PMLR, 2016.

\bibitem[Genevay et~al.(2018)Genevay, Peyre, and Cuturi]{genevay2018learning}
A.~Genevay, G.~Peyre, and M.~Cuturi.
\newblock Learning generative models with sinkhorn divergences.
\newblock In \emph{Proceedings of the Twenty-First International Conference on
  Artificial Intelligence and Statistics}, volume~84 of \emph{Proceedings of
  Machine Learning Research}, pages 1608--1617. PMLR, 2018.

\bibitem[Ghorbani et~al.(2019)Ghorbani, Mei, Misiakiewicz, and
  Montanari]{ghorbani2019limitations}
B.~Ghorbani, S.~Mei, T.~Misiakiewicz, and A.~Montanari.
\newblock Limitations of lazy training of two-layers neural network.
\newblock In \emph{NeurIPS}, 2019.

\bibitem[Goodfellow et~al.(2014)Goodfellow, Pouget-Abadie, Mirza, Xu,
  Warde-Farley, Ozair, Courville, and Bengio]{GANoriginal}
I.~Goodfellow, J.~Pouget-Abadie, M.~Mirza, B.~Xu, D.~Warde-Farley, S.~Ozair,
  A.~Courville, and Y.~Bengio.
\newblock Generative adversarial nets.
\newblock In \emph{NIPS}. 2014.

\bibitem[Gorham and Mackey(2015)]{gorham2015measuring}
J.~Gorham and L.~Mackey.
\newblock Measuring sample quality with stein's method.
\newblock In \emph{Advances in Neural Information Processing Systems}, pages
  226--234, 2015.

\bibitem[Gorham and Mackey(2017)]{gorham2017measuring}
J.~Gorham and L.~Mackey.
\newblock Measuring sample quality with kernels.
\newblock In \emph{Proceedings of the 34th International Conference on Machine
  Learning}, volume~70, pages 1292--1301. PMLR, 2017.

\bibitem[Grathwohl et~al.(2020)Grathwohl, Wang, Jacobsen, Duvenaud, and
  Zemel]{grathwohl2020learning}
W.~Grathwohl, K.-C. Wang, J.-H. Jacobsen, D.~Duvenaud, and R.~Zemel.
\newblock Learning the stein discrepancy for training and evaluating
  energy-based models without sampling.
\newblock In \emph{Proceedings of the 37th International Conference on Machine
  Learning}, volume 119, pages 3732--3747, 2020.

\bibitem[Gretton et~al.(2007)Gretton, Borgwardt, Rasch, Sch{\"o}lkopf, and
  Smola]{gretton2007kernel}
A.~Gretton, K.~M. Borgwardt, M.~Rasch, B.~Sch{\"o}lkopf, and A.~J. Smola.
\newblock A kernel method for the two-sample-problem.
\newblock In \emph{Advances in neural information processing systems}, pages
  513--520, 2007.

\bibitem[Gretton et~al.(2012)Gretton, Borgwardt, Rasch, Sch{{\"o}}lkopf, and
  Smola]{gretton2012akernel}
A.~Gretton, K.~M. Borgwardt, M.~J. Rasch, B.~Sch{{\"o}}lkopf, and A.~Smola.
\newblock A kernel two-sample test.
\newblock \emph{Journal of Machine Learning Research}, 13\penalty0
  (25):\penalty0 723--773, 2012.

\bibitem[Huggins and Mackey(2018)]{huggins2018random}
J.~Huggins and L.~Mackey.
\newblock Random feature stein discrepancies.
\newblock In S.~Bengio, H.~Wallach, H.~Larochelle, K.~Grauman, N.~Cesa-Bianchi,
  and R.~Garnett, editors, \emph{Advances in Neural Information Processing
  Systems}, volume~31. Curran Associates, Inc., 2018.

\bibitem[Jacot et~al.(2018)Jacot, Gabriel, and Hongler]{jacot2018neural}
A.~Jacot, F.~Gabriel, and C.~Hongler.
\newblock Neural tangent kernel: Convergence and generalization in neural
  networks.
\newblock In S.~Bengio, H.~Wallach, H.~Larochelle, K.~Grauman, N.~Cesa-Bianchi,
  and R.~Garnett, editors, \emph{Advances in Neural Information Processing
  Systems}, volume~31, pages 8571--8580. Curran Associates, Inc., 2018.

\bibitem[Kammler(2000)]{kammler2000afirst}
D.~Kammler.
\newblock \emph{A first course in {Fourier} analysis}.
\newblock Prentice Hall, 2000.

\bibitem[{Kolouri} et~al.(2016){Kolouri}, {Zou}, and
  {Rohde}]{kolouri2016sliced}
S.~{Kolouri}, Y.~{Zou}, and G.~K. {Rohde}.
\newblock Sliced wasserstein kernels for probability distributions.
\newblock In \emph{2016 IEEE Conference on Computer Vision and Pattern
  Recognition (CVPR)}, pages 5258--5267, 2016.

\bibitem[Kolouri et~al.(2019)Kolouri, Nadjahi, Simsekli, Badeau, and
  Rohde]{kolouri2019generalized}
S.~Kolouri, K.~Nadjahi, U.~Simsekli, R.~Badeau, and G.~Rohde.
\newblock Generalized sliced wasserstein distances.
\newblock In \emph{Advances in Neural Information Processing Systems},
  volume~32, 2019.

\bibitem[Li et~al.(2017)Li, Chang, Cheng, Yang, and Poczos]{li2017mmd}
C.-L. Li, W.-C. Chang, Y.~Cheng, Y.~Yang, and B.~Poczos.
\newblock Mmd gan: Towards deeper understanding of moment matching network.
\newblock In \emph{Advances in Neural Information Processing Systems},
  volume~30, 2017.

\bibitem[Li et~al.(2015)Li, Swersky, and Zemel]{li2015generative}
Y.~Li, K.~Swersky, and R.~Zemel.
\newblock Generative moment matching networks.
\newblock In \emph{ICML}, 2015.

\bibitem[Lin et~al.(2021)Lin, Zheng, Chen, Cuturi, and
  Jordan]{lin2021projection}
T.~Lin, Z.~Zheng, E.~Y. Chen, M.~Cuturi, and M.~I. Jordan.
\newblock On projection robust optimal transport: Sample complexity and model
  misspecification, 2021.

\bibitem[Liu and Wang(2016)]{liu2016stein}
Q.~Liu and D.~Wang.
\newblock Stein variational gradient descent: A general purpose bayesian
  inference algorithm.
\newblock In \emph{Advances in Neural Information Processing Systems},
  volume~29, pages 2378--2386. Curran Associates, Inc., 2016.

\bibitem[Liu et~al.(2016)Liu, Lee, and Jordan]{liu2016akernelized}
Q.~Liu, J.~Lee, and M.~Jordan.
\newblock A kernelized stein discrepancy for goodness-of-fit tests.
\newblock In \emph{Proceedings of The 33rd International Conference on Machine
  Learning}, volume~48, pages 276--284, New York, New York, USA, 20--22 Jun
  2016. PMLR.

\bibitem[Mohamed and Lakshminarayanan(2017)]{mohamed2017learning}
S.~Mohamed and B.~Lakshminarayanan.
\newblock Learning in implicit generative models.
\newblock In \emph{ICLR}, 2017.

\bibitem[Nadjahi et~al.(2020)Nadjahi, Durmus, Chizat, Kolouri, Shahrampour, and
  Simsekli]{nadjahi2020statistical}
K.~Nadjahi, A.~Durmus, L.~Chizat, S.~Kolouri, S.~Shahrampour, and U.~Simsekli.
\newblock Statistical and topological properties of sliced probability
  divergences.
\newblock In \emph{Advances in Neural Information Processing Systems},
  volume~33, pages 20802--20812, 2020.

\bibitem[Niles-Weed and Rigollet(2019)]{nilesweed2019estimation}
J.~Niles-Weed and P.~Rigollet.
\newblock Estimation of wasserstein distances in the spiked transport model,
  2019.

\bibitem[Ongie et~al.(2019)Ongie, Willett, Soudry, and
  Srebro]{ongie2019function}
G.~Ongie, R.~Willett, D.~Soudry, and N.~Srebro.
\newblock A function space view of bounded norm infinite width relu nets: The
  multivariate case.
\newblock In \emph{International Conference on Learning Representations (ICLR
  2020)}, 2019.

\bibitem[Paty and Cuturi(2019)]{paty2019subspace}
F.-P. Paty and M.~Cuturi.
\newblock Subspace robust wasserstein distances, 2019.

\bibitem[Rezende and Mohamed(2015)]{rezende2015variational}
D.~J. Rezende and S.~Mohamed.
\newblock Variational inference with normalizing flows.
\newblock \emph{arXiv preprint arXiv:1505.05770}, 2015.

\bibitem[Roux and Bengio(2007)]{leroux2007continuous}
N.~L. Roux and Y.~Bengio.
\newblock Continuous neural networks.
\newblock In \emph{Proceedings of the Eleventh International Conference on
  Artificial Intelligence and Statistics}, volume~2 of \emph{Proceedings of
  Machine Learning Research}, pages 404--411, San Juan, Puerto Rico, 21--24 Mar
  2007.

\bibitem[Santos et~al.(2019)Santos, Mroueh, Padhi, and
  Dognin]{santos2019learning2}
C.~N.~D. Santos, Y.~Mroueh, I.~Padhi, and P.~Dognin.
\newblock Learning implicit generative models by matching perceptual features.
\newblock In \emph{2019 IEEE/CVF International Conference on Computer Vision
  (ICCV)}, pages 4460--4469, 2019.

\bibitem[Savarese et~al.(2019)Savarese, Evron, Soudry, and
  Srebro]{savarese2019infinite}
P.~Savarese, I.~Evron, D.~Soudry, and N.~Srebro.
\newblock How do infinite width bounded norm networks look in function space?
\newblock In \emph{Conference on Learning Theory}, 2019.

\bibitem[Stein(1972)]{stein1972abound}
C.~Stein.
\newblock A bound for the error in the normal approximation to the distribution
  of a sum of dependent random variables.
\newblock In \emph{Proceedings of the Sixth Berkeley Symposium on Mathematical
  Statistics and Probability, Volume 2: Probability Theory}, pages 583--602,
  1972.

\bibitem[Venturi et~al.(2021)Venturi, Jelassi, Ozuch, and
  Bruna]{venturi2021depth}
L.~Venturi, S.~Jelassi, T.~Ozuch, and J.~Bruna.
\newblock Depth separation beyond radial functions, 2021.

\bibitem[Wei et~al.(2020)Wei, Lee, Liu, and Ma]{wei2020regularization}
C.~Wei, J.~D. Lee, Q.~Liu, and T.~Ma.
\newblock Regularization matters: Generalization and optimization of neural
  nets v.s. their induced kernel, 2020.

\bibitem[Williams et~al.(2019)Williams, Trager, Silva, Panozzo, Zorin, and
  Bruna]{williams2019gradient}
F.~Williams, M.~Trager, C.~Silva, D.~Panozzo, D.~Zorin, and J.~Bruna.
\newblock Gradient dynamics of shallow univariate relu networks.
\newblock \emph{Advances in Neural Information Processing Systems (NeurIPS)},
  2019.

\bibitem[Woodworth et~al.(2020)Woodworth, Gunasekar, Lee, Moroshko, Savarese,
  Golan, Soudry, and Srebro]{woodworth2020kernel}
B.~Woodworth, S.~Gunasekar, J.~D. Lee, E.~Moroshko, P.~Savarese, I.~Golan,
  D.~Soudry, and N.~Srebro.
\newblock Kernel and rich regimes in overparametrized models.
\newblock In \emph{Conference on Learning Theory}, 2020.

\end{thebibliography}

\appendix

\section{$\mathcal{F}_1$ and $\mathcal{F}_2$ IPMs} \label{sec:app_f1_f2}

\begin{lemma} \label{lem:f1_exp}
The $\mathcal{F}_1$ IPM can be written as
\begin{align}
    d_{\mathcal{B}_{\mathcal{F}_1}}(\mu,\nu) = \sup_{\theta \in \mathbb{S}^d} \left| \int_K \sigma(\langle x, \theta \rangle) d(\mu-\nu)(x) \right|.
\end{align}
\end{lemma}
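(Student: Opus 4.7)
The plan is to use the integral representation of functions in $\mathcal{F}_1$, swap the order of integration, and reduce the problem to maximizing a linear functional against the total-variation unit ball in $\mathcal{M}(\mathbb{S}^d)$, which is well known to equal the sup-norm of the integrand.

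First, I would fix $f \in \mathcal{B}_{\mathcal{F}_1}$. By definition of the $\mathcal{F}_1$ norm, for every $\varepsilon > 0$ there is a signed Radon measure $\gamma \in \mathcal{M}(\mathbb{S}^d)$ with $\|\gamma\|_{\mathrm{TV}} \leq 1 + \varepsilon$ and $f(x) = \int_{\mathbb{S}^d} \sigma(\langle \theta, x \rangle)\, d\gamma(\theta)$ for all $x \in K$. Applying Fubini's theorem — justified because $\mu - \nu$ and $\gamma$ are finite signed measures and $(\theta,x) \mapsto \sigma(\langle \theta,x\rangle)$ is bounded and measurable on the (compact or bounded) domain of interest — yields
\begin{align}
\int_K f(x)\, d(\mu-\nu)(x) = \int_{\mathbb{S}^d} g(\theta)\, d\gamma(\theta), \qquad g(\theta) := \int_K \sigma(\langle x,\theta\rangle)\, d(\mu-\nu)(x).
\end{align}
So the IPM equals $\sup\bigl\{ \int_{\mathbb{S}^d} g\, d\gamma : \gamma \text{ arises from some } f \in \mathcal{B}_{\mathcal{F}_1}\bigr\}$, and the task becomes controlling this against $\|g\|_\infty = \sup_{\theta \in \mathbb{S}^d} |g(\theta)|$.

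For the upper bound, the elementary inequality $\bigl|\int g\, d\gamma\bigr| \leq \|g\|_\infty \|\gamma\|_{\mathrm{TV}}$ gives $\int f\, d(\mu-\nu) \leq (1+\varepsilon)\|g\|_\infty$; letting $\varepsilon \to 0$ shows $d_{\mathcal{B}_{\mathcal{F}_1}}(\mu,\nu) \leq \|g\|_\infty$. For the matching lower bound, I would pick $\theta_0 \in \mathbb{S}^d$ with $|g(\theta_0)| \geq \|g\|_\infty - \varepsilon$ and take the atomic measure $\gamma_0 = \mathrm{sign}(g(\theta_0))\, \delta_{\theta_0}$, which has $\|\gamma_0\|_{\mathrm{TV}} = 1$. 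The associated function $f_0(x) = \mathrm{sign}(g(\theta_0))\, \sigma(\langle \theta_0, x\rangle)$ then lies in $\mathcal{B}_{\mathcal{F}_1}$ and satisfies $\int f_0\, d(\mu-\nu) = |g(\theta_0)| \geq \|g\|_\infty - \varepsilon$, so $d_{\mathcal{B}_{\mathcal{F}_1}}(\mu,\nu) \geq \|g\|_\infty$.

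I do not expect a serious obstacle here — the computation is essentially the duality $(\mathcal{M}(\mathbb{S}^d), \|\cdot\|_{\mathrm{TV}})^* \supset (C(\mathbb{S}^d), \|\cdot\|_\infty)$ composed with the Fubini swap. The only mild care points are (i) verifying the hypotheses for Fubini, which reduce to $\sigma(\langle\cdot,\cdot\rangle)$ being bounded and measurable and both $\gamma,\mu-\nu$ being finite, and (ii) handling the infimum in the definition of $\|f\|_{\mathcal{F}_1}$ via the standard $\varepsilon$-slack argument above rather than assuming attainment.
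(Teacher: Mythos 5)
Your proof is correct and follows essentially the same route as the paper's: swap the order of integration so that the IPM becomes a supremum of $\int g\,d\gamma$ over the TV unit ball, then use that this supremum equals $\|g\|_\infty$ (the paper phrases this via the TV ball being the closed convex hull of the point masses; you instead give the elementary two-sided bound with an atomic extremizer). The one place you are somewhat more careful than the paper is the $\varepsilon$-slack handling of the infimum in the definition of $\|f\|_{\mathcal{F}_1}$, which the paper silently passes over by writing the supremum over $f \in \mathcal{B}_{\mathcal{F}_1}$ directly as a supremum over $\|\gamma\|_{\mathrm{TV}} \le 1$.
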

\begin{proof}
\begin{align}
\begin{split} \label{eq:d_f1}
    d_{\mathcal{B}_{\mathcal{F}_1}}(\mu,\nu) &= \sup_{f\in \mathcal{B}_{\mathcal{F}_1}}  \mathbb{E}_{x\sim \mu } f(x) - \mathbb{E}_{x\sim \nu} f(x) \\ &= \sup_{\|\gamma\|_{\text{TV}} \leq 1}  \mathbb{E}_{x\sim \mu } \int_{\mathbb{S}^d} \sigma(\langle x, \theta \rangle) d\gamma(\theta) - \mathbb{E}_{x\sim \nu} \int_{\mathbb{S}^d} \sigma(\langle x, \theta \rangle) d\gamma(\theta) \\ &= \sup_{\|\mu\|_{\text{TV}} \leq 1}  \int_{\mathbb{S}^d} \left( \int_K \sigma(\langle x, \theta \rangle) d(\mu-\nu)(x) \right) d\gamma(\theta) \\ &= \sup_{\theta \in \mathbb{S}^d} \left| \int_K \sigma(\langle x, \theta \rangle) d(\mu-\nu)(x) \right|
\end{split}
\end{align}
In the last equality we have used that the set $\{ \mu \in \mathcal{M}(\mathbb{S}^d) \ | \ \|\mu\|_{\text{TV}} \leq 1 \}$ can be seen as the convex hull of $\{ \delta_{\theta} | \theta \in \mathbb{S}^d \}$, which means that optimizing a convex function over one set and the other yields the same optimal value.
\end{proof}

\begin{lemma} \label{lem:f2_exp}
The $\mathcal{F}_2$ IPM can be written as
\begin{align}
    d^2_{\mathcal{B}_{\mathcal{F}_2}}(\mu,\nu) = \int_{\mathbb{S}^d} \left( \int_K \sigma(\langle x, \theta \rangle) d(\mu-\nu)(x) \right)^2 d\tau(\theta).
\end{align}
\end{lemma}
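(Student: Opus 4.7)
The plan is to reduce the sup over $f \in \mathcal{B}_{\mathcal{F}_2}$ to a sup over representing functions $h \in L^2(\mathbb{S}^d,\tau)$, swap the order of integration using Fubini, and then apply the equality case of Cauchy--Schwarz.

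\textbf{Step 1 (Reparametrize the feasible set).} Every $f \in \mathcal{B}_{\mathcal{F}_2}$ admits a representation $f(x) = \int_{\mathbb{S}^d} \sigma(\langle \theta,x\rangle) h(\theta)\, d\tau(\theta)$ with $\int h(\theta)^2\, d\tau(\theta) \le 1$ (the infimum defining $\|f\|_{\mathcal{F}_2}^2$ is attained on the closed subspace of $L^2(\mathbb{S}^d,\tau)$ spanned by the feature map $\theta \mapsto \sigma(\langle \theta,\cdot\rangle)$). Conversely every $h$ with $\|h\|_{L^2(\tau)} \le 1$ produces an $f \in \mathcal{B}_{\mathcal{F}_2}$. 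Hence
\begin{equation*}
d_{\mathcal{B}_{\mathcal{F}_2}}(\mu,\nu) = \sup_{\substack{h \in L^2(\mathbb{S}^d,\tau) \\ \|h\|_{L^2(\tau)} \le 1}} \int_K \left( \int_{\mathbb{S}^d} \sigma(\langle \theta,x\rangle) h(\theta)\, d\tau(\theta) \right) d(\mu-\nu)(x).
\end{equation*}

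\textbf{Step 2 (Fubini).} Swap the order of integration to pull the signed measure integral inside. This is justified since $\sigma(\langle \theta,x\rangle) h(\theta)$ is measurable in $(\theta,x)$ and $|\sigma(\langle \theta,x\rangle)|$ is uniformly bounded on $\mathbb{S}^d \times K$ (on the supports of interest $K$ is compact and $\sigma$ is continuous), so the product measure integral against $|h|\, d\tau \otimes d|\mu-\nu|$ is finite. Define
\begin{equation*}
g(\theta) \;\eqdef\; \int_K \sigma(\langle \theta,x\rangle)\, d(\mu-\nu)(x),
\end{equation*}
so the expression becomes $\int_{\mathbb{S}^d} h(\theta)\, g(\theta)\, d\tau(\theta)$.

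\textbf{Step 3 (Cauchy--Schwarz).} The supremum of the linear functional $h \mapsto \langle h, g\rangle_{L^2(\tau)}$ over the unit ball of $L^2(\mathbb{S}^d,\tau)$ equals $\|g\|_{L^2(\tau)}$, attained at $h = g/\|g\|_{L^2(\tau)}$ (assuming $g \ne 0$; the case $g=0$ is trivial). Therefore
\begin{equation*}
d_{\mathcal{B}_{\mathcal{F}_2}}(\mu,\nu) = \|g\|_{L^2(\tau)} = \left( \int_{\mathbb{S}^d} g(\theta)^2\, d\tau(\theta) \right)^{1/2},
\end{equation*}
and squaring both sides yields the claimed formula.

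I do not expect any substantive obstacle: the only mild subtlety is justifying that one may restrict the sup to the $h$-parametrization (Step 1), which uses the standard observation that the orthogonal complement in $L^2(\tau)$ of the closed span of the features contributes nothing to $f$ and therefore only increases the norm without changing $f$. Fubini in Step 2 is routine given compactness of $K$ and $\mathbb{S}^d$ and boundedness of the integrand, and Step 3 is the classical duality $(L^2)^* = L^2$.
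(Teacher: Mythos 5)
Your proof is correct, but it takes a genuinely different route from the paper's. The paper works inside the RKHS $\mathcal{F}_2$ itself: it uses the reproducing property $f(x) = \langle f, k(x,\cdot)\rangle_{\mathcal{F}_2}$, identifies the supremum as the $\mathcal{F}_2$-norm of the mean-embedding difference $\int_K k(x,\cdot)\,d(\mu-\nu)(x)$, expands that squared norm as the double integral $\iint k(x,y)\,d(\mu-\nu)(x)\,d(\mu-\nu)(y)$, and only then substitutes the integral representation \eqref{eq:f2_kernel} of $k$ and factors. You instead stay in the representation space $L^2(\mathbb{S}^d,\tau)$: reparametrize the unit ball of $\mathcal{F}_2$ by $\{h : \|h\|_{L^2(\tau)}\le 1\}$, apply Fubini, and invoke $L^2$ self-duality (Cauchy--Schwarz). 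What your approach buys is that it avoids the RKHS machinery entirely and makes the computation look like a one-line duality statement once $g$ is defined; it is arguably the more elementary argument. What the paper's approach buys is that it passes through the familiar MMD identity $\mathrm{MMD}^2 = \iint k\,d(\mu-\nu)\,d(\mu-\nu)$, tying the result to the broader kernel two-sample literature. One small caveat in your Step~1: the claim that every $f\in\mathcal{B}_{\mathcal{F}_2}$ admits a representer $h$ with $\|h\|_{L^2(\tau)}\le 1$ requires that the set of valid representers of $f$ is closed in $L^2(\tau)$, i.e.\ that the map $h\mapsto\int_{\mathbb{S}^d}\sigma(\langle\theta,\cdot\rangle)h(\theta)\,d\tau(\theta)$ is bounded (continuous) so its kernel is closed; this holds here because $\sigma$ is bounded on $[-1,1]$ and $K$ is compact, which you gesture at but could state explicitly. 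The paper's RKHS route sidesteps this by never needing the infimum to be attained.
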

\begin{proof}
\begin{align}
\begin{split} \label{eq:d_f2}
    d^2_{\mathcal{B}_{\mathcal{F}_2}}(\mu,\nu) &= \left(\sup_{f\in \mathcal{B}_{\mathcal{F}_2}}  \mathbb{E}_{x\sim \mu} f(x) - \mathbb{E}_{x\sim \nu} f(x) \right)^2 = \left(\sup_{\|f\|_{\mathcal{F}_2}  \leq 1} \int_K \langle f, k(x, \cdot) \rangle_{\mathcal{F}_2} d(\mu-\nu)(x) \right)^2 \\ &= \left(\sup_{\|f\|_{\mathcal{F}_2}  \leq 1} \left\langle f, \int_K k(x, \cdot) d(\mu-\nu)(x) \right\rangle_{\mathcal{F}_2} \right)^2 = \left\| \int_K k(x, \cdot) d(\mu-\nu)(x) \right\|_{\mathcal{F}_2}^2 \\ &= \iint_{K \times K} k(x,y) \ d(\mu-\nu)(x) d(\mu-\nu)(y) = \iint_{K \times K} k(x,y) \ d(\mu-\nu)(x) d(\mu-\nu)(y) \\ &= \iint_{K \times K} \int_{\mathbb{S}^d} \sigma(\langle x, \theta \rangle) \sigma(\langle y, \theta \rangle) d\tau(\theta) d(\mu-\nu)(x) d(\mu-\nu)(y) \\ &= \int_{\mathbb{S}^d} \left( \int_K \sigma(\langle x, \theta \rangle) d(\mu-\nu)(x) \right)^2 d\tau(\theta).
\end{split}
\end{align}
\end{proof}

\section{Are the $\mathcal{F}_1$ and $\mathcal{F}_2$ IPMs distances?} \label{sec:distances}

In the following we consider unitary Fourier transforms with angular frequency: for $f \in L^1(\R^{d+1})$, we have $\hat{f}(\omega) = \frac{1}{(2\pi)^{(d+1)/2}} \int_{\R^{d+1}} f(x) e^{-i \langle \omega, x \rangle} dx$ and if $\hat{f} \in L^1(\R^{d+1})$, then $f(x) = \frac{1}{(2\pi)^{(d+1)/2}} \int_{\R^d} \hat{f}(\omega) e^{-i \langle \omega, x \rangle} dx$. We denote the space of tempered distributions on $\R^{d+1}$ as $\mathcal{S}'(\R^{d+1})$, i.e., as the dual of the space $\mathcal{S}(\R^{d+1})$ of Schwartz functions, which are functions in $\mathcal{C}^{\infty}(\R^{d+1})$ whose derivatives of any order decay faster than polynomials of all orders. Functions that grow no faster than polynomials can be embedded in $\mathcal{S}'(\R^{d+1})$ by defining $g(\phi) := \int_{\R^{d+1}} \varphi(x) g(x) \ dx$ for any $\varphi \in \mathcal{S}(\R^{d+1})$. The Fourier transform of a tempered distribution $T$ can be defined as the tempered distribution $\hat{T}$ that acts on $\varphi \in \mathcal{S}(\R^{d+1})$ as $\langle \hat{T}, \varphi \rangle = \langle T, \hat{\varphi} \rangle$. Fourier transforms of two-layer neural networks have been used in prior works, e.g. \cite{venturi2021depth}.

\begin{lemma} \label{lem:ft_g}
Let $\hat{\sigma} \in \mathcal{S}'(\R)$ be the Fourier transform of the activation function $\sigma : \R \rightarrow \R$ in the sense of tempered distributions. Let $g(\theta) = \int_K \sigma(\langle x, \theta \rangle) d(\mu-\nu)(x)$. The Fourier transform of $g \in \mathcal{S}'(\R^{d+1})$ in the sense of tempered distributions is the tempered distribution $\hat{g}$ defined as
\begin{align}
    \left\langle \hat{g}, \phi \right\rangle = (2\pi)^{d/2} \int_K \langle \hat{\sigma}, \phi(\cdot x) \rangle \ d(\mu-\nu)(x)
\end{align}
for any $\phi \in \mathcal{S}(\R^{d+1})$.
\end{lemma}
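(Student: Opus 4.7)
The plan is to unfold the definition of the distributional Fourier transform, apply Fubini to swap integration against $d(\mu-\nu)(x)$ with integration over $\R^{d+1}$, and then, for each fixed $x$, compute the Fourier transform of the ridge function $\theta \mapsto \sigma(\langle x,\theta\rangle)$ by separating variables along $x/\|x\|$ and its orthogonal complement.

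First I would verify that $g$ itself lies in $\mathcal S'(\R^{d+1})$. Since $\mu-\nu$ is a finite signed Borel measure on $K$ and $\sigma$ has (at most) polynomial growth, the map $\theta\mapsto g(\theta)$ is continuous and polynomially bounded (automatic when $K$ is compact, as in the applications of this lemma), so the definition $\langle \hat g,\phi\rangle = \langle g,\hat\phi\rangle = \int_{\R^{d+1}} g(\theta)\,\hat\phi(\theta)\,d\theta$ applies. Finiteness of $|\mu-\nu|$, the polynomial growth of $\sigma$, and the Schwartz decay of $\hat\phi$ make the iterated integral absolutely convergent, so Fubini gives
$$\langle \hat g,\phi\rangle = \int_K \left(\int_{\R^{d+1}} \sigma(\langle x,\theta\rangle)\,\hat\phi(\theta)\,d\theta\right) d(\mu-\nu)(x).$$

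The heart of the argument is to show that, for each fixed $x\neq 0$, the inner integral equals $(2\pi)^{d/2}\langle \hat\sigma,\phi(\cdot x)\rangle$, where $\phi(\cdot x)\in\mathcal S(\R)$ denotes the Schwartz function $u\mapsto \phi(ux)$ on the line $\R x\subset\R^{d+1}$. I would verify this first for Schwartz $\sigma$ by a direct calculation: pick an orthonormal basis $\{e_1,\dots,e_{d+1}\}$ of $\R^{d+1}$ with $e_1 = x/\|x\|$, so that $\sigma(\langle x,\theta\rangle) = \sigma(\|x\|\theta_1)$ depends only on the first coordinate. A Schwartz-level Fubini computation of the $(d+1)$-dimensional Fourier transform of this ridge function factors into the one-dimensional Fourier transform of $\sigma(\|x\|\cdot)$ in the $e_1$-direction times $d$ Dirac deltas at $0$ in the orthogonal directions, producing the tempered distribution $\omega\mapsto (2\pi)^{d/2}\|x\|^{-1}\hat\sigma(\omega_1/\|x\|)\otimes \delta_{\omega^\perp=0}$. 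Pairing this against $\phi$ and changing variables $u=\omega_1/\|x\|$, so that $\phi(\omega_1 e_1) = \phi(ux)$, collapses everything to $(2\pi)^{d/2}\int_\R \hat\sigma(u)\,\phi(ux)\,du$, as claimed.

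The main obstacle is extending this identity from Schwartz $\sigma$ to activations such as ReLu-like functions, for which $\hat\sigma$ only exists as a tempered distribution (a homogeneous distribution plus a Dirac term at the origin). I would handle this by a density argument: both sides are linear and jointly continuous in $\sigma\in\mathcal S'(\R)$ under the weak-$*$ topology, because the map $x\mapsto \phi(\cdot x)$ is bounded in every Schwartz seminorm uniformly for $x$ on the compact set $K$, and $|\mu-\nu|(K)<\infty$. Mollifying $\sigma$ by a Schwartz family $\sigma_\varepsilon\to\sigma$ in $\mathcal S'(\R)$ and passing to the limit on both sides extends the identity to the desired class of $\sigma$. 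Combining this with the Fubini step above delivers the lemma.
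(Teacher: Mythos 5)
Your proof is correct and takes the same overall route as the paper — unfold the distributional pairing, use Fubini to move the $x$-integral outside, then compute the Fourier transform of each ridge function $\theta\mapsto\sigma(\langle x,\theta\rangle)$ by separating variables along $\mathrm{span}(x)$ and its orthogonal complement — but you split the ridge-function step into a Schwartz-level computation followed by a mollification and density argument, whereas the paper does the computation directly for general polynomially bounded $\sigma$. The paper's direct version works because after applying Fourier inversion in the $\mathrm{span}(x)^{\perp}$ directions (an operation that only involves the Schwartz test function $\phi$, never $\sigma$), what remains is a single one-dimensional integral of $\sigma$ against a Schwartz function — exactly $\langle\hat\sigma,\phi(\cdot x)\rangle$ by the definition of the distributional Fourier transform, with no regularity assumption on $\sigma$ beyond polynomial growth. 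Your mollification route is equally valid: the family $\{u\mapsto\phi(ux)\}_{x\in K}$ is bounded in $\mathcal S(\R)$ since $K$ is compact, and $|\mu-\nu|(K)<\infty$, so the right-hand side is weak-$*$ continuous in $\sigma$; on the left, the same uniformity combined with Banach--Steinhaus (weak-$*$ convergence is uniform on bounded subsets of $\mathcal S(\R)$) gives $g_\varepsilon\to g$ in $\mathcal S'(\R^{d+1})$, which is the one extra continuity check your sketch leaves implicit (and note that a pure mollification keeps polynomial growth, so you additionally need a smooth cutoff to land in $\mathcal S(\R)$). The direct computation avoids that approximation layer entirely; your route trades it for keeping the core Fourier manipulation at the level of classical Schwartz calculus. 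Both deliver the lemma.
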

\begin{proof}
The Fourier transform of the tempered distribution $T_{x}$ defined as $\langle T_{x}, \phi \rangle = \int_{\R^{d+1}} \phi(\theta) \sigma(\langle x, \theta \rangle) \ d\theta$ is $\hat{T}_{x}$ defined as
\begin{align}
\begin{split}
    &\langle \hat{T}_{x}, \phi \rangle = \int_{\R^{d+1}} \left( \frac{1}{(2\pi)^{(d+1)/2}} \int_{\R^{d+1}} \phi(\theta) e^{-i \langle \omega, \theta \rangle} d\theta \right) \sigma(\langle x, \omega \rangle) \ d\omega \\ &= \int_{\text{span}(x)} \int_{\text{span}(x)^{\perp}}  \left( \frac{1}{(2\pi)^{(d+1)/2}} \int_{\text{span}(x)^{\perp}} \left( \int_{\text{span}(x)} \phi(\theta) e^{-i \langle \omega_{x}, \theta_{x} \rangle} d\theta_{x} \right) e^{-i \langle \omega_{x^{\perp}}, \theta_{x^{\perp}} \rangle} d\theta_{x^{\perp}} \right) d\omega_{x^{\perp}} \sigma(\langle x, \omega_{x} \rangle) d\omega_{x}
    \\ &= \frac{1}{(2\pi)^{1/2}} \int_{\text{span}(x)} (2\pi)^{d/2} \int_{\text{span}(x)} \phi(\theta_{x}) e^{-i \langle \omega_{x}, \theta_{x} \rangle} d\theta_{x} \sigma(\langle x, \omega_{x} \rangle) d\omega_{x}
    \\ &= \frac{1}{(2\pi)^{1/2}} \int_{\R} (2\pi)^{d/2} \int_{\R} \phi(t x) e^{-i \omega t} dt \sigma(\omega) d\omega
    \\ &= (2\pi)^{d/2} \langle \hat{\sigma}, \phi( \cdot x) \rangle 
\end{split}
\end{align}
Here, the first equality holds because by definition, $\langle \hat{T}_{x}, \phi \rangle = \langle T_{x}, \hat{\phi} \rangle$. In the second equality, we rewrite $\R^{d+1} = \text{span}(x) + \text{span}(x)^{\perp}$ and we use Fubini's theorem twice. In the third equality we make the following argument: denoting $h(\theta_{x^{\perp}}, \omega_x) = \int_{\text{span}(x)} \phi(\theta_{x^{\perp}} + \theta_x) e^{-i \langle \omega_{x}, \theta_{x} \rangle} d\theta_{x}$, we have that 
$\int_{\text{span}(x)^{\perp}} \left( \int_{\text{span}(x)^{\perp}} h(\theta_{x^{\perp}}, \omega_x) e^{-i \langle \omega_{x^{\perp}}, \theta_{x^{\perp}} \rangle} d\theta_{x^{\perp}} \right) d\omega_{x^{\perp}} = (2\pi)^{d/2} \int_{\text{span}(x)^{\perp}} \hat{h}(\omega_{x^{\perp}}, \omega_x) d\omega_{x^{\perp}} = (2\pi)^{d} h(0, \omega_x) = (2\pi)^{d} \int_{\text{span}(x)} \phi(\theta_x) e^{-i \langle \omega_{x}, \theta_{x} \rangle} d\theta_{x}$.

Notice that we can write $g$ as a tempered distribution as $g = \int_K T_x \ d(\mu-\nu)(x)$. Thus, by linearity of the Fourier transform, we have that
\begin{align}
    \left\langle \hat{g}, \phi \right\rangle = (2\pi)^{d/2} \int_K \langle \hat{\sigma}, \phi(\cdot x) \rangle \ d(\mu-\nu)(x)
\end{align}
for any $\phi \in \mathcal{S}(\R^{d+1})$.
\end{proof}
We compute $\hat{\sigma}$ for the specific case in which $\sigma: \R \rightarrow \R$ is an $\alpha$-positive homogeneous activation function, i.e. $\sigma(x) = a (x)_{+}^{\alpha} + b (-x)_{+}^{\alpha}$ for some $a, b \in \R$ (equation \eqref{eq:alpha_positive}). 
It is known \citep{erdelyi1954tables, kammler2000afirst} that the Fourier transform of the Heaviside step function $u : \R \to \R$, defined as $u(x) = 1$ if $x \geq 1$ and $u(x) = 0$ if $x < -1$, is $\hat{u}(\omega) = \sqrt{\frac{\pi}{2}} \left( \text{p.v.}\left[\frac{1}{i\pi \omega} \right] + \delta(\omega) \right)$. Here  $\text{p.v.}\left[\frac{1}{\omega} \right] \in \mathcal{S}'(\R)$ is a Cauchy principal value, defined as $\text{p.v.}\left[\frac{1}{\omega} \right](\phi) = \lim_{\epsilon \to 0} \int_{\R \setminus [-\epsilon,\epsilon]} \frac{1}{\omega} \phi(\omega) \ d\omega$ for any $\phi \in \mathcal{S}(\R)$.

Moreover, for any tempered distribution $f \in \mathcal{S}'(\R)$, for $\alpha \geq 0$ integer, the Fourier transform of $x^{\alpha} f(x)$ is $i^{\alpha} \frac{d^{\alpha} \hat{f}(\omega)}{d\omega^{\alpha}}$, where the derivative of a tempered distribution is defined in the weak sense: $\langle \frac{df}{d\omega}, \phi \rangle = - \langle f, \frac{d\phi}{d\omega} \rangle$.
Since $\sigma(x) = (a-(-1)^{\alpha} b) (x)_{+}^{\alpha} + (-1)^{\alpha} b x^{\alpha}$, we have that 
\begin{align}
\begin{split} \label{eq:hat_sigma}
\hat{\sigma}(\omega) &= (a-(-1)^{\alpha} b) i^{\alpha} \sqrt{\frac{\pi}{2}} \frac{d^{\alpha}}{d\omega^{\alpha}} \left( \text{p.v.} \left[\frac{1}{i\pi \omega} \right] + \delta(\omega) \right) + (-1)^{\alpha} b i^{\alpha} \frac{d^{\alpha}}{d\omega^{\alpha}} \delta(\omega) \\ &= A \frac{d^{\alpha}}{d\omega^{\alpha}} \left(\text{p.v.} \left[\frac{1}{i\pi \omega}\right] \right) + B \frac{d^{\alpha}}{d\omega^{\alpha}} \delta(\omega), 
\end{split}
\end{align}
where $A = i^{\alpha - 1} \frac{\alpha!}{\sqrt{2 \pi}} (a-(-1)^{\alpha} b)$ and $B = i^{\alpha} \sqrt{\frac{\pi}{2}} \left(a - (-1)^{\alpha} b \right) + (-i)^{\alpha} b$.

\begin{lemma}[Riesz-Markov theorem] \label{lem:riesz}
Let $X$ be a locally compact Hausdorff space. For any continuous linear functional $\psi$ on $C_0(X)$, there is a unique regular countably additive complex Borel measure $\mu$ on $X$ such that $\forall f \in C_{0}(X), \ \psi (f)= \int_{X}f(x) \,d\mu(x).$
The norm of $\psi$ as a linear functional is the total variation of $\mu$, that is
$\|\psi\| = |\mu|(X)$.
\end{lemma}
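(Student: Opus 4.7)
This is the classical complex Riesz--Markov--Kakutani representation theorem on a locally compact Hausdorff space, so the plan is to follow the textbook reduction chain (complex $\to$ real $\to$ positive) and then invoke the standard outer-measure construction to represent a positive linear functional on $C_c(X)$. Since $C_c(X)$ is dense in $C_0(X)$ under the sup norm and $\psi$ is continuous, it suffices to recover $\psi$ from its restriction to $C_c(X)$; uniqueness of the resulting measure will then be automatic, as a regular Borel measure is determined by its values on open sets, which are in turn determined by $\psi$.

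First I would decompose $\psi$ into its real and imaginary parts $\psi_1, \psi_2$, defined on real-valued $f \in C_c(X)$ by $\psi_j(f) := \tfrac12(\psi(f) + (-1)^{j+1}\overline{\psi(\bar f)})$ for $j = 1,2$. For each bounded real-linear functional $\psi_j$, the Jordan decomposition on $C_c(X)$ is obtained by setting, for $f \geq 0$ in $C_c(X)$,
\[
    \psi_j^+(f) := \sup\{\psi_j(g) : g \in C_c(X),\ 0 \leq g \leq f\},
\]
and a standard verification (e.g.\ Rudin, \emph{Real and Complex Analysis}, Ch.~6) shows $\psi_j^+$ is additive and positively homogeneous on the positive cone, hence extends uniquely to a positive linear functional on $C_c(X;\mathbb{R})$, while $\psi_j^- := \psi_j^+ - \psi_j$ is also positive.

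For each of the four positive linear functionals $\Lambda$ so obtained, I would construct a Radon measure $\mu_\Lambda$ by the classical recipe: set $\mu_\Lambda(V) := \sup\{\Lambda(f) : f \in C_c(X),\ 0\leq f \leq 1,\ \mathrm{supp}\, f \subset V\}$ for open $V$, extend to an outer measure by $\mu_\Lambda^*(E) := \inf\{\mu_\Lambda(V) : E \subset V\ \text{open}\}$, and restrict to the Carath\'eodory $\mu_\Lambda^*$-measurable sets. Using partitions of unity subordinate to open covers one shows $\mu_\Lambda$ is a Radon Borel measure and $\Lambda(f) = \int_X f\, d\mu_\Lambda$ for all $f \in C_c(X)$. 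Assembling these four measures yields a regular complex Borel measure $\mu$ representing $\psi$. For the norm identity, the bound $\|\psi\| \leq |\mu|(X)$ is immediate from $|\psi(f)| \leq \int |f|\, d|\mu|$; the reverse uses the polar decomposition $d\mu = h\, d|\mu|$ with $|h|=1$ $|\mu|$-a.e., plus Lusin's theorem to approximate $\bar h$ in $L^1(|\mu|)$ by a continuous function $g$ with $\|g\|_\infty \leq 1$, so that $\psi(g) = \int g h\, d|\mu|$ is close to $|\mu|(X)$.

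The main obstacle is the representation step for positive functionals: verifying additivity of $\mu_\Lambda$ on disjoint open sets and the reconstruction identity $\Lambda(f) = \int f \, d\mu_\Lambda$ both require delicate partition-of-unity arguments, and establishing inner regularity on all Borel sets requires a separate extension from open sets. In the context of this paper the lemma is invoked as a textbook citation for subsequent use (e.g.\ to identify a continuous functional on $C_0$ with a finite Radon measure when analyzing Fourier transforms of two-layer networks), so rather than reproducing the proof I would simply reference a standard treatment and move on.
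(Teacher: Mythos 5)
Your sketch is a correct outline of the standard proof of the Riesz--Markov--Kakutani representation theorem, and you correctly read the situation: the paper does not prove this lemma at all, but simply states it as a classical textbook fact to be invoked later (in the proof that the $\mathcal{F}_1$ and $\mathcal{F}_2$ IPMs are distances on $\R^d\times\{1\}$). Citing a standard reference such as Rudin's \emph{Real and Complex Analysis}, Chapter~6, is exactly what is appropriate here, and your final paragraph makes precisely that judgment.
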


\begin{restatable}{thm}{thmipmdist} \label{thm:ipmdistbias}
    Let $\sigma: \R \rightarrow \R$ be an $\alpha$-positive homogeneous activation function of the form \eqref{eq:alpha_positive} and assume that $(-1)^{\alpha} a - b \neq 0$. Then, for $K = \R^{d} \times \{1\}$ both the $\mathcal{F}_1$ and the $\mathcal{F}_2$ IPMs are distances.
\end{restatable}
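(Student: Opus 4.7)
The plan. Because $\|f\|_{\mathcal{F}_1} \le \|f\|_{\mathcal{F}_2}$ by Cauchy--Schwarz, we have $\mathcal{B}_{\mathcal{F}_2}\subset \mathcal{B}_{\mathcal{F}_1}$ and $d_{\mathcal{B}_{\mathcal{F}_2}} \le d_{\mathcal{B}_{\mathcal{F}_1}}$. Combining this with Lemmas~\ref{lem:f1_exp} and~\ref{lem:f2_exp}, both IPMs reduce to a single quantity: the continuous ridge function $g(\theta) := \int_K \sigma(\langle x,\theta\rangle)\,d(\mu-\nu)(x)$ on $\mathbb{S}^d$. Indeed, the $\mathcal{F}_1$ IPM vanishes iff $g \equiv 0$ pointwise on $\mathbb{S}^d$, while the $\mathcal{F}_2$ IPM vanishes iff $g = 0$ in $L^2(\tau)$, which upgrades to pointwise vanishing by continuity. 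So I would reduce the whole theorem to the single implication ``$g \equiv 0$ on $\mathbb{S}^d \Longrightarrow \mu = \nu$''.

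The main step is a one-dimensional distributional-derivative calculation. Using $\alpha$-positive homogeneity of $\sigma$, first extend $g$ to an identically-zero function on $\R^{d+1}$. Then, splitting $\theta = (\theta',\theta_0) \in \R^d \times \R$ and $x = (x',1)$, freeze $\theta' \in \R^d \setminus \{0\}$ and consider the identically-zero function $F_{\theta'}(\theta_0) := \int \sigma(\langle x',\theta'\rangle + \theta_0)\,d(\mu-\nu)(x')$ on $\R$. A direct one-dimensional calculation (iterating $\frac{d^\alpha}{dt^\alpha}(t)_+^\alpha = \alpha!\,H(t)$ and its mirror image) gives the distributional identity $\sigma^{(\alpha+1)} = \alpha!\,(a - (-1)^\alpha b)\,\delta$. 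Pairing $F_{\theta'}$ with a test function $\varphi \in C_c^\infty(\R)$, integrating by parts $\alpha+1$ times, and applying Fubini then swaps the $(\alpha+1)$-th distributional derivative with the integral against $\mu-\nu$, identifying $F_{\theta'}^{(\alpha+1)}$ with $\alpha!\,(a - (-1)^\alpha b)$ times the (reflected) pushforward of $\mu-\nu$ under the linear functional $x'\mapsto\langle x',\theta'\rangle$. Since $F_{\theta'} \equiv 0$ and the hypothesis $(-1)^\alpha a - b \neq 0$ is equivalent to $a - (-1)^\alpha b \neq 0$, this pushforward is the zero signed measure for every $\theta' \in \R^d\setminus\{0\}$.

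Finally, vanishing of all one-dimensional projections of $\mu-\nu$ gives $\widehat{\mu-\nu}(r\theta') = 0$ for all $r \in \R,\;\theta'\ne 0$, hence $\widehat{\mu-\nu} \equiv 0$ and $\mu = \nu$ by Cram\'er--Wold (equivalently, the Fourier slice theorem). The main technical obstacle is the Fubini / integration-by-parts exchange justifying that $F_{\theta'}^{(\alpha+1)}$ coincides with a scalar multiple of a pushforward measure; this requires a mild moment assumption on $\mu,\nu$ that is implicit in finiteness of the IPM (automatic when $\sigma$ is bounded, and a finite $\alpha$-th moment otherwise). An equivalent route goes through Lemma~\ref{lem:ft_g} and the explicit formula~\eqref{eq:hat_sigma}: the principal-value coefficient $A = i^{\alpha-1}\tfrac{\alpha!}{\sqrt{2\pi}}(a - (-1)^\alpha b)$ is the Fourier dual of $\sigma^{(\alpha+1)}$ being a nonzero multiple of $\delta$, so the same non-degeneracy hypothesis on $a, b$ drives the argument either way.
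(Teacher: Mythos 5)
Your proof is correct and takes a genuinely different route from the paper's. Both arguments first reduce (via Lemmas~\ref{lem:f1_exp} and~\ref{lem:f2_exp}) to showing that the ridge moment function $g(\theta)=\int_K\sigma(\langle x,\theta\rangle)\,d(\mu-\nu)(x)$ vanishing forces $\mu=\nu$, and both use $\alpha$-positive homogeneity to extend $g$ from $\mathbb S^d$ to $\R^{d+1}$. From there the paper computes the full $(d+1)$-dimensional Fourier transform $\hat g$ as a tempered distribution (Lemma~\ref{lem:ft_g}, formula~\eqref{eq:hat_sigma}), invokes Riesz--Markov to get a test function $\phi$ with $\int\phi\,d(\mu-\nu)\neq 0$, and then constructs a special anisotropic test function $\psi$ on $\R^{d+1}$ that isolates the principal-value part of $\hat\sigma$ and evaluates $\langle\hat g,\psi\rangle$ to a nonzero multiple of $\int\phi\,d(\mu-\nu)$. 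You instead slice $g$ along the bias coordinate $\theta_0$ for each fixed $\theta'\neq 0$, use the one-dimensional distributional identity $\sigma^{(\alpha+1)}=\alpha!\,(a-(-1)^\alpha b)\,\delta$, and recover the one-dimensional marginal $(\langle\cdot,\theta'\rangle)_\#(\mu-\nu)$ (up to reflection) as the $(\alpha+1)$-th derivative of the zero function; Cram\'er--Wold then closes the argument. Your route is more elementary and, importantly, makes the underlying geometry explicit: it is the \emph{bias} coordinate that permits extraction of all one-dimensional marginals of $\mu-\nu$, which is exactly the mechanism that is unavailable on $\mathbb S^d$ (Theorem~\ref{thm:ipmdistsphere}). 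As you correctly observe, the nondegeneracy hypothesis $(-1)^\alpha a-b\neq 0$ plays an identical role in both proofs, being equivalent to $a-(-1)^\alpha b\neq 0$, i.e.\ to the $(\alpha+1)$-th derivative of $\sigma$ being a nonzero multiple of $\delta$, whose Fourier dual is the paper's nonvanishing coefficient $A$ of the principal-value term. Two small points worth tightening: for $\alpha=0$ the activation is discontinuous, so the claim that $g=0$ in $L^2(\tau)$ ``upgrades to pointwise vanishing by continuity'' needs a caveat --- but it is also unnecessary, since $g=0$ a.e.\ already gives $F_{\theta'}\equiv 0$ for a.e.\ $\theta'$, a dense set that suffices for Cram\'er--Wold by continuity of characteristic functions; and the Fubini exchange needs finite $\alpha$-th moments of $\mu,\nu$, an implicit assumption shared (and equally unaddressed) in the paper's Lemma~\ref{lem:ft_g}, where $K=\R^d\times\{1\}$ is unbounded.
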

\begin{proof}
By \autoref{lem:f1_exp} and \autoref{lem:f2_exp} we have $d_{\mathcal{B}_{\mathcal{F}_1}}(\mu,\nu) = \sup_{\theta \in \mathbb{S}^d} \left| \int_K \sigma(\langle x, \theta \rangle) d(\mu-\nu)(x) \right|$ and $d^2_{\mathcal{B}_{\mathcal{F}_2}}(\mu,\nu) = \int_{\mathbb{S}^d} \left( \int_K \sigma(\langle x, \theta \rangle) d(\mu-\nu)(x) \right)^2 d\tau(\theta)$, which means that both are distances if the function $g\rvert_{\mathbb{S}^d} : \mathbb{S}^d \rightarrow \R$ defined as $g\rvert_{\mathbb{S}^d}(\theta) = \int_K \sigma(\langle x, \theta \rangle) d(\mu-\nu)(x)$ is different from zero in the $L^2$ sense when $\mu \neq \nu$. Since $\sigma$ is $\alpha$-positive homogeneous, the $\alpha$-positive homogeneous extension $g : \R^{d+1} \rightarrow \R$ of $g\rvert_{\mathbb{S}^d}$ fulfills 
\begin{align}
    g(\theta) = \int_K \sigma(\langle x, \theta \rangle) d(\mu-\nu)(x) = \|\theta\|_2^\alpha \int_K \sigma(\langle x, \theta/\|\theta\|_2 \rangle) d(\mu-\nu)(x) = \|\theta\|_2^\alpha g(\theta/\|\theta\|_2).
\end{align}
Thus, $g\rvert_{\mathbb{S}^d}$ is different from zero in the $L^2$ sense if and only if $g$ is. And $g$ is different from zero if and only if its Fourier transform $\hat{g}$ in the sense of tempered distributions is different from zero (this follows from $\langle \hat{g}, \phi \rangle = \langle g, \hat{\phi} \rangle$ for all $\phi \in \mathcal{S}(\R)$). By \autoref{lem:ft_g}, we have that
\begin{align} \label{eq:hatgphi}
    \left\langle \hat{g}, \phi \right\rangle = (2\pi)^{d/2} \int_K \langle \hat{\sigma}, \phi(\cdot x) \rangle \ d(\mu-\nu)(x).
\end{align}
By \eqref{eq:hat_sigma}, we have that
\begin{align} \label{eq:hatsigmaphi}
    \langle \hat{\sigma}, \phi(\cdot x) \rangle = (-1)^{\alpha} A \left( \text{p.v.}\left[ \frac{1}{t} \right] \right) \left( \frac{d^{\alpha}}{dt^{\alpha}} \phi(t x) \right) + (-1)^{\alpha} B \frac{d^{\alpha}}{dt^{\alpha}} \phi(t x)\bigg\rvert_{t = 0},  
\end{align}
which means that
\begin{align}
    \left\langle \hat{g}, \phi \right\rangle = (-1)^{\alpha} (2\pi)^{d/2} \int_K \left( A \left( \text{p.v.}\left[ \frac{1}{t} \right] \right) \left( \frac{d^{\alpha}}{dt^{\alpha}} \phi(t x) \right) + B \frac{d^{\alpha}}{dt^{\alpha}} \phi(t x)\bigg\rvert_{t = 0} \right) \ d(\mu-\nu)(x).
\end{align}
Suppose that $\mu \neq \nu$. Since $\mu$ and $\nu$ are Borel measures, they are are regular, and thus $\mu-\nu$ is also regular. By \autoref{lem:riesz}, $\mu-\nu$ can be identified univocally with an element of the dual space $C_0(\R^{d+1})'$ of the space $C_0(\R^{d+1})$ of continuous functions on $\R^{d+1}$ that vanish at infinity. Since $\mu - \nu \neq 0$, there must exist $\phi \in C_0(\R^{d+1})$ such that $\int_K \phi(x) d(\mu-\nu)(x) \neq 0$. Multiplying by the indicator function of a well chosen compact set and using a mollifier sequence, we can further assume that $\phi \in C_c^{\infty}(\R^{d+1}) \subseteq \mathcal{S}(\R^{d+1})$. 

Now, let $\eta$ be a $C_c^{\infty}(\R)$ function such that $\int_{R} \eta(t) \ dt = 1$ and the support of $\eta$ is compact and contained in $[1/2,+\infty)$. We define the function $\psi \in C_c^{\infty}(\R^{d+1}) \subseteq \mathcal{S}(\R^{d+1})$ as 
\begin{align}
    \psi(tx) = \alpha! \ t^{\alpha+1} \phi(x) \eta(t), \quad \forall x \in \R^d \times \{1\}, \forall t \in \R. 
\end{align}
Remark that
\begin{align} \label{eq:psiderivative}
    \frac{d^{\alpha}}{dt^{\alpha}} \psi(t x)\bigg\rvert_{t = 0} = 0,
\end{align}
because $\psi$ is equal to zero in a neighborhood of the origin.
Also, for all $x \in K$,
\begin{align} 
\begin{split} \label{eq:psiphi}
    &(-1)^{\alpha} \left( \text{p.v.}\left[ \frac{1}{t} \right] \right) \left( \frac{d^{\alpha}}{dt^{\alpha}} \psi(t x) \right) = (-1)^{\alpha} \int_{\R} \frac{1}{t} \frac{d^{\alpha}}{dt^{\alpha}} \left( \alpha! \ t^{\alpha+1} \phi(x) \eta(t) \right) \ dt
    \\ &= \int_{\R} \frac{1}{t^{\alpha+1}} t^{\alpha+1} \phi(x) \eta(t) \ dt = \phi(x) \int_{\R} \eta(t) \ dt = \phi(x).
\end{split}
\end{align}
In the first equality we have used that $\lim_{\epsilon \to 0} \int_{\R \setminus [-\epsilon, \epsilon]} \frac{1}{t} \frac{d^{\alpha}}{dt^{\alpha}} \psi(t x) \ dt = \int_{\R} \frac{1}{t} \frac{d^{\alpha}}{dt^{\alpha}} \psi(t x) \ dt$, again because $\psi$ is equal to zero in a neighborhood of the origin. 

Notice that since we have assumed that $(-1)^{\alpha}a - b \neq 0$, we have $A \neq 0$. Hence,
\begin{align}
\begin{split}
    0 &\neq (2\pi)^{d/2} A \int_K \phi(x) d(\mu-\nu)(x) \\ &= 
    (-1)^{\alpha} (2\pi)^{d/2} \int_K \left( A \left( \text{p.v.}\left[ \frac{1}{t} \right] \right) \left( \frac{d^{\alpha}}{dt^{\alpha}} \psi(t x) \right) + B \frac{d^{\alpha}}{dt^{\alpha}} \psi(t x)\bigg\rvert_{t = 0} \right) \ d(\mu-\nu)(x)
    = \langle \hat{g}, \psi \rangle
\end{split}
\end{align}
In the first equality, we have used \eqref{eq:psiphi} and \eqref{eq:psiderivative}. The last equality follows from \eqref{eq:hatgphi} and \eqref{eq:hatsigmaphi}.
We have constructed a function $\psi \in \mathcal{S}(\R^{d+1})$ for which $\hat{g}$ does not evaluate to zero, implying that $\hat{g} \neq 0$ and concluding the proof.
\end{proof}

\begin{restatable}{thm}{thmipmdistsphere} \label{thm:ipmdistsphere}
    Let $\sigma: \R \rightarrow \R$ be an $\alpha$-positive homogeneous activation function of the general form \eqref{eq:alpha_positive}. Then, for $K = \mathbb{S}^d$ both the $\mathcal{F}_1$ and the $\mathcal{F}_2$ IPMs are not distances.
\end{restatable}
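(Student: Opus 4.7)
The plan is to exhibit a single pair of distinct probability measures $\mu\neq\nu$ on $\mathbb{S}^d$ such that $g(\theta) := \int_{\mathbb{S}^d} \sigma(\langle x,\theta\rangle) \, d(\mu-\nu)(x) = 0$ for every $\theta\in\mathbb{S}^d$. By \autoref{lem:f1_exp} and \autoref{lem:f2_exp}, this function $g$ controls both quantities: the $\mathcal{F}_1$ IPM is $\sup_\theta |g(\theta)|$ and the squared $\mathcal{F}_2$ IPM is $\int g^2\,d\tau$, so the same pair will witness the failure of both IPMs to be distances simultaneously. The construction will take $\mu$ and $\nu$ as small opposite perturbations of the uniform measure $\tau$ by a spherical harmonic $Y_k$ of degree $k\geq 1$ whose Funk--Hecke coefficient against $\sigma$ vanishes.

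The key input is the Funk--Hecke formula on $\mathbb{S}^d\subset\R^{d+1}$: for every spherical harmonic $Y_k$ of degree $k$,
\begin{align}
    \int_{\mathbb{S}^d} \sigma(\langle x,\theta\rangle)\,Y_k(x)\,d\tau(x) = \lambda_k(\sigma)\,Y_k(\theta),
\end{align}
where $\lambda_k(\sigma)$ is proportional to $\int_{-1}^1 \sigma(t)\,P_{k,d+1}(t)\,(1-t^2)^{(d-2)/2}\,dt$ (the weight on $[-1,1]$ is the one against which the polynomials $P_{\cdot,d+1}$ are mutually orthogonal, as recalled in the Framework section). Substituting $\sigma(t)=a(t)_+^\alpha+b(-t)_+^\alpha$, changing variables $t\mapsto -t$ on the second summand, and using $P_{k,d+1}(-t)=(-1)^k P_{k,d+1}(t)$, I obtain up to a positive constant
\begin{align}
    \lambda_k(\sigma) \;\propto\; \bigl(a+(-1)^k b\bigr)\int_0^1 t^\alpha\, P_{k,d+1}(t)\,(1-t^2)^{(d-2)/2}\,dt.
\end{align}

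I then fix $k := \alpha+2$: this is at least $1$ and satisfies $k\equiv\alpha\pmod 2$, which makes the integrand above an even function of $t$, so the integral over $[0,1]$ equals half the integral over $[-1,1]$; the latter vanishes by orthogonality of the Legendre polynomials to all polynomials of degree strictly less than $k$, and in particular to $t^\alpha$. Hence $\lambda_k(\sigma)=0$ regardless of $a$ and $b$. To finish, I take any nontrivial $Y_k$ of degree $k$ (for instance the Legendre harmonic $L_{k,d+1}$), let $M:=\|Y_k\|_{L^\infty}<\infty$, pick $\epsilon\in(0,1/M)$, and define $d\mu:=(1+\epsilon Y_k)\,d\tau$ and $d\nu:=(1-\epsilon Y_k)\,d\tau$; these are strictly positive probability measures because any spherical harmonic of positive degree integrates to zero against $\tau$, they are distinct because $Y_k\not\equiv 0$, and Funk--Hecke gives $g(\theta)=2\epsilon\,\lambda_k(\sigma)\,Y_k(\theta)=0$ for every $\theta\in\mathbb{S}^d$, concluding the argument. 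The main obstacle is the parity/orthogonality bookkeeping: the statement allows arbitrary $a,b$, so one cannot rely on the prefactor $a+(-1)^k b$ to vanish and must instead engineer the parity of $k$ so that the \emph{integral} factor alone is zero; this is what the choice $k=\alpha+2$ achieves.
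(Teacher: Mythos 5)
Your proof is correct, and it takes a genuinely different route from the paper's. The paper argues abstractly: it observes that $\sigma(\langle x, \theta \rangle) + (-1)^{\alpha} \sigma(\langle -x, \theta \rangle) = (a + (-1)^{\alpha} b) \langle x, \theta \rangle^{\alpha}$, so the map $\gamma \mapsto \int_K \sigma(\langle x, \cdot \rangle)\,d\gamma(x)$, restricted to signed measures with the same parity as $\alpha$, lands in the finite-dimensional space of polynomials of degree $\alpha$ in $\theta$; an infinite-dimensional domain mapping into a finite-dimensional range forces an infinite-dimensional kernel, and a short case analysis (odd versus even $\alpha$) turns any nonzero element of that kernel into a pair of probability measures $\mu\neq\nu$ with $d_{\mathcal{B}_{\mathcal{F}_1}}(\mu,\nu)=d_{\mathcal{B}_{\mathcal{F}_2}}(\mu,\nu)=0$. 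Your argument is instead fully explicit: via Funk--Hecke you compute the eigenvalue $\lambda_k(\sigma) \propto (a+(-1)^k b)\int_0^1 t^\alpha P_{k,d+1}(t)(1-t^2)^{(d-2)/2}\,dt$, and by choosing $k=\alpha+2$ (so $k>\alpha$ and $k\equiv\alpha\pmod 2$) you force the integral factor itself to vanish by orthogonality of $P_{k,d+1}$ to polynomials of lower degree, killing $\lambda_k(\sigma)$ uniformly in $a,b$. The perturbed pair $d\mu = (1+\epsilon Y_k)\,d\tau$, $d\nu = (1-\epsilon Y_k)\,d\tau$ then does the job with no parity case split. What the paper's approach buys is a cheap, structural proof that also makes clear the kernel is infinite-dimensional (so there are many counterexamples); what yours buys is an explicit, constructive counterexample, with no case analysis, and a nice conceptual link to the separation construction of \autoref{sec:sep_f1_f2_ipm}, which is likewise built from densities $1 \pm (\text{const})\, L_{k,d}$. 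Both proofs rely on the same underlying phenomenon (rotational symmetry of the feature measure makes $\sigma$ "blind" to certain spherical-harmonic directions), just packaged differently.

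One small point worth making explicit if you write this up: when you invoke orthogonality to conclude $\int_{-1}^1 t^\alpha P_{k,d+1}(t)(1-t^2)^{(d-2)/2}\,dt = 0$, you are using not just the mutual orthogonality recalled in the paper's preliminaries but also that the $P_{j,d+1}$ for $j\le\alpha$ span all polynomials of degree $\le\alpha$ (standard for a degree-graded orthogonal family, but worth a word). Also, $P_{k,d+1}(-t)=(-1)^k P_{k,d+1}(t)$ follows from the $k$-homogeneity of $L_{k,d+1}$ together with its invariance under the orthogonal map $\mathrm{diag}(-I_d,1)$ fixing $e_{d+1}$; this is exactly the invariance in the definition of the Legendre harmonic, so it is available, but the one-line justification is worth including.
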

\begin{proof}
Since by \autoref{lem:f1_exp} and \autoref{lem:f2_exp} we have $d_{\mathcal{B}_{\mathcal{F}_1}}(\mu,\nu) = \sup_{\theta \in \mathbb{S}^d} \left| \int_K \sigma(\langle x, \theta \rangle) d(\mu-\nu)(x) \right|$ and $d^2_{\mathcal{B}_{\mathcal{F}_2}}(\mu,\nu) = \int_{\mathbb{S}^d} \left( \int_K \sigma(\langle x, \theta \rangle) d(\mu-\nu)(x) \right)^2 d\tau(\theta)$, it suffices to see that the function $g\rvert_{\mathbb{S}^d}(\theta) = \int_K \sigma(\langle x, \theta \rangle) d(\mu-\nu)(x)$ can be zero in the $L^2$ sense for some pairs $\mu \neq \nu$. We want to study the kernel of the map $\mu \mapsto \int_K \sigma(\langle x, \cdot \rangle) d(\mu-\nu)(x)$. Notice that
\begin{align}
\begin{split}
    &\sigma(\langle x, \theta \rangle) + (-1)^{\alpha} \sigma(\langle -x, \theta \rangle) = a (\langle x, \theta \rangle)_{+}^{\alpha} + b (-\langle x, \theta \rangle)_{+}^{\alpha} + (-1)^{\alpha} (a (\langle -x, \theta \rangle)_{+}^{\alpha} + b (-\langle -x, \theta \rangle)_{+}^{\alpha}) \\ &= (a + (-1)^{\alpha} b) ((\langle x, \theta \rangle)_{+}^{\alpha} + (-1)^{\alpha}(-\langle x, \theta \rangle)_{+}^{\alpha}) = (a + (-1)^{\alpha} b) \langle x, \theta \rangle^{\alpha}.
\end{split}
\end{align}
If $\gamma \in \mathcal{M}(K)$ is an even (i.e., $(x \mapsto -x)_{\#} \gamma = \gamma$) or odd (i.e., $(x \mapsto -x)_{\#} \gamma = -\gamma$) signed measure of the same parity as $\alpha$, this implies that 
\begin{align}
\int_{K} \sigma(\langle x, \theta \rangle) d\gamma(x) = \frac{a + (-1)^{\alpha} b}{2} \int_{K} \langle x, \theta \rangle^{\alpha} \ d|\gamma|(x),
\end{align}
which is a polynomial of degree $\alpha$ on $\theta$. Consider the linear map $L : \mathcal{M}(K) \mapsto C(\mathbb{S}^d)$ defined as $\gamma \mapsto \int_K \sigma(\langle x, \cdot \rangle) d\gamma(x)$. Since $L$ restricted to the measures of the parity of $\alpha$ has an infinite-dimensional domain and a finite-dimensional image, it must have an infinite-dimensional kernel. 
\begin{itemize}[leftmargin=5.5mm]
\item For the case $\alpha$ odd, if $\gamma \in \mathcal{M}(K)$ is an odd measure belonging to the kernel of $L$ with total variation norm ${\|\gamma\|}_{\text{TV}} = 2$ and such that $\gamma = \gamma_{+} - \gamma_{-}$ with $\gamma_{+}, \gamma_{-}$ non-negative, then choosing $\mu = \gamma_{+}$ and $\nu = \gamma_{-}$, we have that $d_{\mathcal{B}_{\mathcal{F}_1}}(\mu,\nu) = d_{\mathcal{B}_{\mathcal{F}_2}}(\mu,\nu) = 0$. 
\item For the case $\alpha$ even, let $\gamma \in \mathcal{M}(K)$ be an even measure belonging to the kernel of $L$ with total variation norm ${\|\gamma\|}_{\text{TV}} = 2$. We must have that $\int_K d\gamma = 0$, because denoting by $\tau$ the uniform distribution over $\mathbb{S}^d$, we have
\begin{align}
    0 = \int_{\mathbb{S}^d} L\gamma (\theta) \ d\tau(\theta) = \int_{\mathbb{S}^d} \int_K \sigma(\langle x, \theta \rangle) \ d\gamma(x) \ d\tau(\theta) = \int_{\mathbb{S}^d} \sigma(\langle x', \theta \rangle) \ d\tau(\theta) \int_K \ d\gamma, 
\end{align}
for all $x' \in K$, and $\int_{\mathbb{S}^d} \sigma(\langle x', \theta \rangle) \ d\tau(\theta)$ is a strictly positive quantity. In the last equality we used Fubini's theorem. Thus, the non-negative components $\gamma_{+}, \gamma_{-}$ of the decomposition $\gamma = \gamma_{+} - \gamma_{-}$ must fulfill $\int_K d\gamma_{+} = \int_K d\gamma_{-} = 1$. Hence, choosing $\mu = \gamma_{+}$ and $\nu = \gamma_{-}$, we have that $d_{\mathcal{B}_{\mathcal{F}_1}}(\mu,\nu) = d_{\mathcal{B}_{\mathcal{F}_2}}(\mu,\nu) = 0$.
\end{itemize}
\end{proof}

\section{Preliminaries on Legendre Polynomials and spherical harmonics} \label{sec:preliminaries}
In the notation of \autoref{sec:sep_f1_f2_ipm}, $P_{k,d}(t)$ denotes the \textit{Legendre polynomial} of degree $k$ in dimension $d$.

It is known (equation (2.78), \cite{atkinson2012spherical}) that 
\begin{align} \label{eq:norm_legendre}
    \int_{-1}^1 P_{k,d}(t)^2 (1-t^2)^{\frac{d-3}{2}} dt = \frac{|\mathbb{S}^{d-1}|}{N_{k,d} |\mathbb{S}^{d-2}|} = \frac{\sqrt{\pi} \Gamma(\frac{d-1}{2})}{N_{k,d} \Gamma(\frac{d}{2})},
\end{align}
where (equation (2.10), \cite{atkinson2012spherical})
\begin{align} \label{eq:n_kd_def}
    N_{k,d} = \frac{(2k + d - 2)(k + d -3)!}{k! (d-2)!} 
\end{align}
is the dimension of the space of homogeneous harmonic polynomials of degree $k$ in $\R^d$.

Also, the following uniform bound holds (equation (2.116), \cite{atkinson2012spherical}):
\begin{align} \label{eq:uniform_bound}
    |P_{k,d}(t)| \leq 1, \quad k \geq 0, \ d \geq 2, \ t \in [-1,1].
\end{align}
The bound is achieved at $t=1, -1$.

There is also a crucial link between Legendre polynomials and their derivatives (equation (2.90), \cite{atkinson2012spherical}):
\begin{align} \label{eq:legendre_derivatives}
    P_{k,d}^{(j)}(t) = \frac{k! (k+j+d-3)! \Gamma(\frac{d-1}{2})}{2^j (k-j)! (k+d-3)! \Gamma(j + \frac{d-1}{2})} P_{k-j, d+2j}(t),
\end{align}
where $k \geq j$ and $d \geq 2$. Note that for $k < j$, $P_{k,d}^{(j)}(t) = 0$.

We recall two important equalities. Let $\{Y_{k,j} \ | \ 1 \leq j \leq N_{k,d}\}$ be an orthonormal basis of the space of homogeneous harmonic polynomials over $\R^d$ of degree $k$, with real coefficients (some works like \cite{atkinson2012spherical} consider complex coefficients and all the results are unchanged up to complex conjugates). That is, $\int_{\mathbb{S}^d} Y_{k,j}(x) Y_{k,i}(x) d\tau(x) = \delta_{ij}$, where $\tau$ is the uniform probability measure over $\mathbb{S}^{d-1}$. Then, the addition theorem (Thm. 4.11, \cite{efthimiou2014spherical}) states that
\begin{align}
    \sum_{j=1}^{N_{k,d}} Y_{k,j}(x) Y_{k,j}(y) = N_{k,d} P_{k,d}(\langle x, y \rangle) 
\end{align}
The Funk-Hecke formula (Thm 2.22, \cite{atkinson2012spherical}) states that when $\int_{-1}^{1} |f(t)| (1-t^2)^{\frac{d-3}{2}} dt < +\infty$, for any linear combination $Y_{k}$ of $\{ Y_{k,j} \ | \ 1 \leq j \leq N_{k,d} \}$ and for any $x \in \mathbb{S}^{d-1}$,
\begin{align}
    \int_{\mathbb{S}^{d-1}} f(\langle x, y \rangle) Y_{k}(y) d\tau(y) = \frac{|\mathbb{S}^{d-2}|}{|\mathbb{S}^{d-1}|} Y_k(x) \int_{-1}^{1} P_{k,d}(t) f(t) (1-t^2)^{\frac{d-3}{2}} \ dt.
\end{align}

\section{Proofs of \autoref{sec:sep_f1_f2_ipm}} \label{sec:proof_sep}

\propqdprob*
\begin{proof}
Clearly, $\frac{d\mu_d}{d\lambda}(x) \geq 0$ and $\frac{d\nu_d}{d\lambda}(x) \geq 0$ for all $x \in \mathbb{S}^{d-1}$. Let $C_{+} = \{x \in \mathbb{S}^{d-1} | L_{k,d}(x) > 0 \}$ and $C_{-} = \{x \in \mathbb{S}^{d-1} | L_{k,d}(x) \leq 0 \}$. For $\mu_d$ and $\nu_d$ to be probability measures, $\gamma_{k,d}$ must fulfill
\begin{align} \label{eq:gamma_cond}
    1 = \gamma_{k,d} \int_{C_{+}} \frac{L_{k,d}(x)}{|\mathbb{S}^{d-1}|}  \ d\lambda(x) \text{ and } 1 = - \gamma_{k,d} \int_{C_{-}} \frac{L_{k,d}(x)}{|\mathbb{S}^{d-1}|}  \ d\lambda(x).
\end{align}
By equation (1.17) of \cite{atkinson2012spherical}, if we parametrize $\mathbb{S}^{d-1}$ as $x = t e_d + (1-t^2)^{1/2} \xi_{(d-1)}$ with $t \in [-1,1]$ and $\xi_{(d-1)} \in \mathbb{S}^{d-2}$, we have 
\begin{align} \label{eq:hausdorff_change}
d\lambda(x) = (1-t^2)^{\frac{d-3}{2}} \ dt \ d\lambda_{(d-2)}(\xi_{(d-1)}),
\end{align}
where $\lambda_{(d-2)}$ denotes the Hausdorff measure of $\mathbb{S}^{d-2}$. Hence,
\begin{align}
\begin{split} \label{eq:int_zero}
    \int_{\mathbb{S}^{d-1}} \frac{L_{k,d}(x)}{|\mathbb{S}^{d-1}|} \ d\lambda(x) &= \int_{\mathbb{S}^{d-2}} \int_{-1}^{1} \frac{L_{k,d}(t e_d + (1-t^2)^{1/2} \xi_{(d-1)})}{|\mathbb{S}^{d-1}|} (1-t^2)^{\frac{d-3}{2}} dt \ d\lambda_{(d-2)}(\xi_{(d-1)}) \\ &= \frac{|\mathbb{S}^{d-2}|}{|\mathbb{S}^{d-1}|} \int_{-1}^{1} P_{k,d}(t) (1-t^2)^{\frac{d-3}{2}} dt = 0
\end{split}
\end{align}
The right-hand side is equal to zero because $P_{0,d} \equiv 1$, and the Legendre polynomials are orthogonal with respect to the scalar product with factor $(1-t^2)^{\frac{d-3}{2}}$ (\autoref{sec:sep_f1_f2_ipm}). Equation \eqref{eq:int_zero} implies that $\int_{C_{+}} \frac{L_{k,d}(x)}{|\mathbb{S}^{d-1}|}  \ d\lambda(x) = - \int_{C_{-}} \frac{L_{k,d}(x)}{|\mathbb{S}^{d-1}|}  \ d\lambda(x)$, which means that conditions \eqref{eq:gamma_cond} are feasible. We also have that \begin{align}
    \gamma_{k,d} = 2 \left(\int_{\mathbb{S}^{d-1}} \frac{|L_{k,d}(x)|}{|\mathbb{S}^{d-1}|} \ d\lambda(x)\right)^{-1} = 2 \left(\int_{\mathbb{S}^{d-1}} |L_{k,d}(x)| \ d\tau(x) \right)^{-1}. 
\end{align} 
\end{proof}

\begin{lemma} \label{lem:d_f2_comp}
For $\mu_d, \nu_d$ with densities given by \eqref{eq:densities_mu_nu}, 
we have
    \begin{align}
        d_{\mathcal{B}_{\mathcal{F}_2}}(\mu_d,\nu_d) = \gamma_{k,d} \frac{|\mathbb{S}^{d-2}|}{|\mathbb{S}^{d-1}|} \frac{1}{\sqrt{N_{k,d}}} \left| \int_{-1}^{1} P_{k,d}(t) \sigma(t) (1-t^2)^{\frac{d-3}{2}} \ dt \right|
    \end{align}
\end{lemma}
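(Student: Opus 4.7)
The plan is to start from the closed-form expression of the $\mathcal{F}_2$ IPM given by Lemma~\ref{lem:f2_exp}, namely
\begin{equation*}
    d^2_{\mathcal{B}_{\mathcal{F}_2}}(\mu_d,\nu_d) = \int_{\mathbb{S}^{d-1}} \left( \int_{\mathbb{S}^{d-1}} \sigma(\langle x, \theta \rangle) \, d(\mu_d-\nu_d)(x) \right)^2 d\tau(\theta),
\end{equation*}
and then exploit the fact that $L_{k,d}$ is (the restriction to the sphere of) a spherical harmonic of order $k$ so that the Funk--Hecke formula applies exactly. The first step is to notice that the densities in \eqref{eq:densities_mu_nu} are supported on complementary regions $C_+, C_-$ of the sphere, so the signed density of $\mu_d-\nu_d$ simplifies across all of $\mathbb{S}^{d-1}$ to the single expression $\tfrac{d(\mu_d-\nu_d)}{d\lambda}(x) = \gamma_{k,d} L_{k,d}(x)/|\mathbb{S}^{d-1}|$. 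Converting from the Hausdorff measure $\lambda$ to the uniform probability measure $\tau$ gives
\begin{equation*}
    \int_{\mathbb{S}^{d-1}} \sigma(\langle x,\theta\rangle)\,d(\mu_d-\nu_d)(x) = \gamma_{k,d} \int_{\mathbb{S}^{d-1}} \sigma(\langle x,\theta\rangle) L_{k,d}(x)\,d\tau(x).
\end{equation*}

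The second step applies the Funk--Hecke formula (recalled in Appendix~\ref{sec:preliminaries}) with $f=\sigma$ and $Y_k = L_{k,d}$, giving
\begin{equation*}
    \int_{\mathbb{S}^{d-1}} \sigma(\langle x,\theta\rangle) L_{k,d}(x)\,d\tau(x) = \frac{|\mathbb{S}^{d-2}|}{|\mathbb{S}^{d-1}|}\, L_{k,d}(\theta) \int_{-1}^{1} P_{k,d}(t)\sigma(t)(1-t^2)^{\frac{d-3}{2}}\,dt.
\end{equation*}
Denote the right-hand $t$-integral by $I_{k,d}(\sigma)$. Substituting back into the expression for $d^2_{\mathcal{B}_{\mathcal{F}_2}}(\mu_d,\nu_d)$, the dependence on $\theta$ is concentrated in $L_{k,d}(\theta)^2$ and pulls a factor $\gamma_{k,d}^2 (|\mathbb{S}^{d-2}|/|\mathbb{S}^{d-1}|)^2 I_{k,d}(\sigma)^2$ out of the integral.

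The third step computes $\int_{\mathbb{S}^{d-1}} L_{k,d}(\theta)^2\,d\tau(\theta)$. Using the slice parametrization $\theta = t e_d + (1-t^2)^{1/2}\xi_{(d-1)}$ together with \eqref{eq:hausdorff_change} and the defining relation $L_{k,d}(\theta) = P_{k,d}(\langle e_d,\theta\rangle)$, this $L^2$ norm equals $\tfrac{|\mathbb{S}^{d-2}|}{|\mathbb{S}^{d-1}|}\int_{-1}^{1} P_{k,d}(t)^2(1-t^2)^{\frac{d-3}{2}}\,dt$, which by the normalization identity \eqref{eq:norm_legendre} collapses to $1/N_{k,d}$. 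Assembling the three factors and taking a square root yields the claimed formula. No step is genuinely hard here: the only bit that requires care is verifying that $\mu_d-\nu_d$ really is $\gamma_{k,d} L_{k,d}/|\mathbb{S}^{d-1}|$ on all of $\mathbb{S}^{d-1}$ (including on the null set $\{L_{k,d}=0\}$), and that the Funk--Hecke hypothesis $\int_{-1}^{1} |\sigma(t)|(1-t^2)^{(d-3)/2}\,dt<\infty$ holds under whatever boundedness assumption is implicit on $\sigma$ (which will be the case in the application of this lemma inside Theorem~\ref{thm:separation_ipm}).
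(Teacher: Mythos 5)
Your proof is correct and follows essentially the same route as the paper: start from Lemma~\ref{lem:f2_exp}, substitute the signed density $\gamma_{k,d}L_{k,d}/|\mathbb{S}^{d-1}|$, apply the Funk--Hecke formula to obtain $\lambda_{k,d}L_{k,d}(\theta)$, and reduce the remaining integral to $\int L_{k,d}^2\,d\tau = 1/N_{k,d}$ via \eqref{eq:norm_legendre}. The extra remark about checking the Funk--Hecke integrability hypothesis on $\sigma$ is a sensible addition but not a deviation.
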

\begin{proof}
By \autoref{lem:f2_exp}, we have
\begin{align}
\begin{split} \label{eq:d_f2_development}
d^2_{\mathcal{B}_{\mathcal{F}_2}}(\mu_d,\nu_d) &= 
\int_{\mathbb{S}^{d-1}} \left( \int_{\mathbb{S}^{d-1}} \sigma(\langle x, \theta \rangle) d(\mu_d-\nu_d)(x) \right)^2 d\tau(\theta)
\\ &= \int_{\mathbb{S}^{d-1}} \left( \int_{\mathbb{S}^{d-1}} \sigma(\langle x, \theta \rangle) \frac{\gamma_{k,d}}{|\mathbb{S}^{d-1}|} L_{k,d}(x) \ d\lambda(x) \right)^2\ d\tau(\theta)
\\ &= \gamma_{k,d}^2 \int_{\mathbb{S}^{d-1}} \left( \int_{\mathbb{S}^{d-1}} \sigma(\langle x, \theta \rangle) L_{k,d}(x) \ d\tau(x) \right)^2\ d\tau(\theta)
\end{split}
\end{align}
Now, we reproduce the argument of \cite{bach2017breaking}. If we define $g(\theta) = \int_{\mathbb{S}^{d-1}} \sigma(\langle x, \theta \rangle) L_{k,d}(x) \ d\tau(x)$, since $L_{k,d}(x)$ is a homogeneous harmonic polynomial of degree $d$, by the Funk-Hecke formula we can write
\begin{align} \label{eq:g_lambda}
    g(\theta) = \frac{|\mathbb{S}^{d-2}|}{|\mathbb{S}^{d-1}|} L_{k,d}(\theta) \int_{-1}^{1} P_{k,d}(t) \sigma(t) (1-t^2)^{\frac{d-3}{2}} \ dt = \lambda_{k,d} L_{k,d}(\theta),
\end{align}
where $\lambda_{k,d} = \frac{|\mathbb{S}^{d-2}|}{|\mathbb{S}^{d-1}|} \int_{-1}^{1} P_{k,d}(t) \sigma(t) (1-t^2)^{\frac{d-3}{2}} \ dt$. Note as well that
\begin{align}
\begin{split} \label{eq:norm_prob}
    &\int_{\mathbb{S}^{d-1}} L_{k,d}(\theta)^2\ d\tau(\theta) \\ &= \frac{1}{|\mathbb{S}^{d-1}|} \int_{\mathbb{S}^{d-2}} \int_{-1}^{1} L_{k,d}(t e_d + (1-t^2)^{1/2} \xi_{(d-1)})^2\ (1-t^2)^{\frac{d-3}{2}} dt \ d\lambda_{(d-2)}(\xi_{(d-1)}) \\ &= \frac{|\mathbb{S}^{d-2}|}{|\mathbb{S}^{d-1}|} \int_{-1}^{1} P_{k,d}(t)^2\ (1-t^2)^{\frac{d-3}{2}} dt = \frac{1}{N_{k,d}}
\end{split}
\end{align}
In the first equality we used the same change of variables as in the proof of Lemma \ref{prop:qd_prob}, in the second equality we used that $P_{k,d}(t) = L_{k,d}(t e_d + (1-t^2)^{1/2} \xi_{(d-1)})$ by definition, and the third equality relies on equation \eqref{eq:norm_legendre}.
Using \eqref{eq:g_lambda} and \eqref{eq:norm_prob}, the right-hand side of \eqref{eq:d_f2_development} becomes:
\begin{align}
\begin{split} \label{eq:long_development}
    &\left(\gamma_{k,d} \lambda_{k,d} \right)^2 \int_{\mathbb{S}^{d-1}} L_{k,d}(\theta)^2 \ d\tau(\theta) 
    = \left(\gamma_{k,d} \lambda_{k,d} \right)^2 \frac{1}{N_{k,d}} \\ &= \gamma_{k,d}^2 \left(\frac{|\mathbb{S}^{d-2}|}{|\mathbb{S}^{d-1}|} \int_{-1}^{1} P_{k,d}(t) \sigma(t) (1-t^2)^{\frac{d-3}{2}} \ dt \right)^2 \frac{1}{N_{k,d}}
\end{split}
\end{align}
\end{proof}

\begin{lemma} \label{lem:d_f1_comp}
For $\mu_d, \nu_d$ with densities given by \eqref{eq:densities_mu_nu}, 
we have
\begin{align} \label{eq:lem_d_f1}
    d_{\mathcal{B}_{\mathcal{F}_1}}(\mu_d,\nu_d) = |\gamma_{k,d}| \frac{|\mathbb{S}^{d-2}|}{|\mathbb{S}^{d-1}|} \left|\int_{-1}^{1} P_{k,d}(t) \sigma(t) (1-t^2)^{\frac{d-3}{2}} \ dt \right|
\end{align}
\end{lemma}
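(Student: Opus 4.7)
\textbf{Proof proposal for Lemma \ref{lem:d_f1_comp}.}

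The plan is to mirror the derivation of Lemma \ref{lem:d_f2_comp}, replacing the $L^2$ norm by an $L^\infty$ norm over the sphere. First I would invoke Lemma \ref{lem:f1_exp} to write
\begin{align}
d_{\mathcal{B}_{\mathcal{F}_1}}(\mu_d,\nu_d) = \sup_{\theta \in \mathbb{S}^{d-1}} \left| \int_{\mathbb{S}^{d-1}} \sigma(\langle x,\theta\rangle) \, d(\mu_d-\nu_d)(x) \right|.
\end{align}
Observe that the piecewise definition of $\mu_d$ and $\nu_d$ in \eqref{eq:densities_mu_nu} is precisely engineered so that the density of the signed measure $\mu_d-\nu_d$ with respect to $\lambda$ is simply $\gamma_{k,d} L_{k,d}(x)/|\mathbb{S}^{d-1}|$ on all of $\mathbb{S}^{d-1}$ (the positive part $\mu_d$ picks up $L_{k,d}$ on $\{L_{k,d} > 0\}$, and subtracting $\nu_d$ adds $L_{k,d}$ on $\{L_{k,d} \leq 0\}$). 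Therefore
\begin{align}
\int_{\mathbb{S}^{d-1}} \sigma(\langle x,\theta\rangle) \, d(\mu_d-\nu_d)(x) = \gamma_{k,d}\int_{\mathbb{S}^{d-1}} \sigma(\langle x,\theta\rangle) L_{k,d}(x) \, d\tau(x).
\end{align}

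Next, I would invoke the Funk--Hecke formula exactly as in the proof of Lemma \ref{lem:d_f2_comp}: since $L_{k,d}$ is a homogeneous harmonic polynomial of degree $k$, its restriction to $\mathbb{S}^{d-1}$ lies in the space of spherical harmonics of order $k$, so the integral above equals $\lambda_{k,d} L_{k,d}(\theta)$, where
\begin{align}
\lambda_{k,d} = \frac{|\mathbb{S}^{d-2}|}{|\mathbb{S}^{d-1}|}\int_{-1}^{1} P_{k,d}(t)\sigma(t)(1-t^2)^{\frac{d-3}{2}} \, dt.
\end{align}
Substituting back,
\begin{align}
d_{\mathcal{B}_{\mathcal{F}_1}}(\mu_d,\nu_d) = |\gamma_{k,d}||\lambda_{k,d}| \sup_{\theta \in \mathbb{S}^{d-1}} |L_{k,d}(\theta)|.
\end{align}

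Finally, I would evaluate $\sup_{\theta \in \mathbb{S}^{d-1}} |L_{k,d}(\theta)|$. Writing $\theta = t e_d + (1-t^2)^{1/2}\xi_{(d-1)}$ with $t = \langle e_d,\theta\rangle \in [-1,1]$, by the definition of the Legendre polynomial we have $L_{k,d}(\theta) = P_{k,d}(t)$, so
\begin{align}
\sup_{\theta \in \mathbb{S}^{d-1}} |L_{k,d}(\theta)| = \sup_{t \in [-1,1]} |P_{k,d}(t)| = 1,
\end{align}
by the uniform bound \eqref{eq:uniform_bound} combined with $P_{k,d}(1) = 1$. Plugging this in recovers the claimed identity \eqref{eq:lem_d_f1}. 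There is no substantive obstacle: the argument is essentially a re-run of Lemma \ref{lem:d_f2_comp} with the key difference that the $L^2$ normalization $\int L_{k,d}^2 d\tau = 1/N_{k,d}$ is replaced by the $L^\infty$ bound $\|L_{k,d}\|_\infty = 1$ on $\mathbb{S}^{d-1}$; indeed this discrepancy between norms is exactly the $\sqrt{N_{k,d}}$ factor that drives the separation in Theorem \ref{thm:separation_ipm}.
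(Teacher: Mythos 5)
Your argument is correct and follows the paper's proof essentially step for step: expand the $\mathcal{F}_1$ IPM via Lemma \ref{lem:f1_exp}, substitute the density $\gamma_{k,d} L_{k,d}/|\mathbb{S}^{d-1}|$ for $\mu_d-\nu_d$, apply Funk--Hecke (with $\lambda_{k,d}$ defined as in Lemma \ref{lem:d_f2_comp}), and use $\sup_{\theta}|L_{k,d}(\theta)|=1$ from \eqref{eq:uniform_bound}. The closing remark correctly identifies that the $L^\infty$-versus-$L^2$ discrepancy for $L_{k,d}$ is what produces the $\sqrt{N_{k,d}}$ separation in Theorem \ref{thm:separation_ipm}.
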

\begin{proof}
Using \autoref{lem:f1_exp}, we have
\begin{align}
\begin{split}
    d_{\mathcal{B}_{\mathcal{F}_1}}(\mu_d,\nu_d) &= \sup_{\theta \in \mathbb{S}^d} \left| \int \sigma(\langle x, \theta \rangle) d(\mu_d-\nu_d)(x) \right| = \sup_{\theta \in \mathbb{S}^d} \left| \int \sigma(\langle x, \theta \rangle) \frac{\gamma_{k,d}}{|\mathbb{S}^{d-1}|} L_{k,d}(x) d\lambda(x) \right| \\ &= \gamma_{k,d} \sup_{\theta \in \mathbb{S}^d} \left| \int \sigma(\langle x, \theta \rangle) L_{k,d}(x) d\tau(x) \right| \\ &= \gamma_{k,d} |\lambda_{k,d}| \sup_{\theta \in \mathbb{S}^d} \left| L_{k,d}(\theta) \right| = |\gamma_{k,d}| |\lambda_{k,d}| \\ &= |\gamma_{k,d}| \frac{|\mathbb{S}^{d-2}|}{|\mathbb{S}^{d-1}|} \left|\int_{-1}^{1} P_{k,d}(t) \sigma(t) (1-t^2)^{\frac{d-3}{2}} \ dt \right|,
\end{split}
\end{align}
In the fourth equality we used \eqref{eq:g_lambda}, in the fifth equality we used $\sup_{\theta \in \mathbb{S}^d} \left| L_{k,d}(\theta) \right| = 1$ by equation \eqref{eq:uniform_bound}, and in the sixth equality we used the definition of $\lambda_{k,d}$.
\end{proof}

\thmsepipm*
\begin{proof}
Plugging the results of \autoref{lem:d_f2_comp} and \autoref{lem:d_f1_comp}, we obtain
\begin{align} \label{eq:distances_ratio}
    \frac{d_{\mathcal{B}_{\mathcal{F}_1}}(\mu_d,\nu_d)}{d_{\mathcal{B}_{\mathcal{F}_2}}(\mu_d,\nu_d)} = \frac{1}{\frac{1}{\sqrt{N_{k,d}}}} = \sqrt{N_{k,d}} = \sqrt{\frac{(2k + d - 2)(k + d -3)!}{k! (d-2)!}}
\end{align}
The last equality follows from \eqref{eq:n_kd_def}. Equation \eqref{eq:log_ratio} follows from Stirling's approximation, which states that $\log n! = n \log n - n + O(\log n)$ and $\log (\Gamma(x)) = x \log x - x + O(\log x)$.

All that is left is checking that \eqref{eq:d_f1_separation} holds. \autoref{prop:qd_prob} states that $\gamma_{k,d} = 2 \left(\int_{\mathbb{S}^{d-1}} |L_{k,d}(x)| \ d\tau(x) \right)^{-1} = 2 \left(\frac{|\mathbb{S}^{d-2}|}{|\mathbb{S}^{d-1}|} \int_{-1}^{1} |P_{k,d}(t)| (1-t^2)^{\frac{d-3}{2}} \ dt \right)^{-1}$ for $\mu_d$ and $\nu_d$ to be probability measures. Thus, \autoref{lem:d_f1_comp} implies that
\begin{align}
d_{\mathcal{B}_{\mathcal{F}_1}}(\mu_d,\nu_d) =  
\frac{2 \left|\int_{-1}^{1} P_{k,d}(t) \sigma(t) (1-t^2)^{\frac{d-3}{2}} \ dt \right|}{\int_{-1}^{1} |P_{k,d}(t)| (1-t^2)^{\frac{d-3}{2}} dt}.
\end{align}

\end{proof}

\section{Proofs of \autoref{sec:sep_sd}} \label{sec:proofs_sep_sd}

\begin{lemma} \label{lem:int_grad}
Let $\nabla$ denote the Riemannian gradient. We have that
\begin{align}
    \int_{\mathbb{S}^d} \|\nabla L_{k,d}(x)\|^2 d\tau(x) = k (k + d -2) \frac{1}{N_{k,d}}
\end{align}
\end{lemma}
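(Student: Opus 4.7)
The strategy is to use the fact that the Legendre harmonic $L_{k,d}$, being the restriction to $\mathbb{S}^{d-1}$ of a homogeneous harmonic polynomial of degree $k$, is an eigenfunction of the Laplace--Beltrami operator $\Delta_{\mathbb{S}^{d-1}}$ on the sphere with eigenvalue $-k(k+d-2)$. From this, integration by parts (Green's identity on a closed manifold) will immediately give $\int \|\nabla L_{k,d}\|^2 \, d\tau = k(k+d-2) \int L_{k,d}^2 \, d\tau$, and then the claimed value follows by plugging in the $L^2$ norm already computed in equation \eqref{eq:norm_prob}, namely $\int_{\mathbb{S}^{d-1}} L_{k,d}^2 \, d\tau = 1/N_{k,d}$.

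More explicitly, the first step is to recall (this is standard: see Thm.~2.14 of \cite{atkinson2012spherical}) that if $H$ is a homogeneous harmonic polynomial of degree $k$ on $\R^d$, then its restriction to $\mathbb{S}^{d-1}$ satisfies $\Delta_{\mathbb{S}^{d-1}} H = -k(k+d-2) H$. Apply this to $H = L_{k,d}$. The second step is to observe that since $\mathbb{S}^{d-1}$ has no boundary, Green's first identity on a compact Riemannian manifold without boundary gives
\begin{equation}
\int_{\mathbb{S}^{d-1}} \|\nabla L_{k,d}(x)\|^2 \, d\tau(x) \;=\; -\int_{\mathbb{S}^{d-1}} L_{k,d}(x) \, \Delta_{\mathbb{S}^{d-1}} L_{k,d}(x) \, d\tau(x).
\end{equation}
Combining this with the eigenvalue equation yields
\begin{equation}
\int_{\mathbb{S}^{d-1}} \|\nabla L_{k,d}(x)\|^2 \, d\tau(x) \;=\; k(k+d-2) \int_{\mathbb{S}^{d-1}} L_{k,d}(x)^2 \, d\tau(x).
\end{equation}
Finally, substituting the value $\int_{\mathbb{S}^{d-1}} L_{k,d}^2 \, d\tau = 1/N_{k,d}$ from equation \eqref{eq:norm_prob} concludes the proof.

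There is no real obstacle here: the only things one needs are (i) the eigenvalue identity, which is standard spherical-harmonic theory, and (ii) Green's identity on a boundaryless manifold. A fully elementary alternative would be to parametrize $x = te_d + (1-t^2)^{1/2}\xi_{(d-1)}$, compute $\|\nabla L_{k,d}\|^2 = (1-t^2)(P_{k,d}'(t))^2$ directly (as in the commented-out helper lemma), and then evaluate $\int_{-1}^1 (1-t^2)(P_{k,d}'(t))^2 (1-t^2)^{(d-3)/2} dt$ using the Legendre--Gegenbauer differential equation $((1-t^2)^{(d-1)/2} P_{k,d}')' = -k(k+d-2)(1-t^2)^{(d-3)/2} P_{k,d}$ and an integration by parts in $t$; this yields the same relation $k(k+d-2) \int P_{k,d}^2 (1-t^2)^{(d-3)/2} dt$ and then \eqref{eq:norm_legendre}. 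Either route gives the stated identity.
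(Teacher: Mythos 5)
Your proof is correct and follows essentially the same route as the paper: apply Green's identity on the boundaryless sphere, use the eigenvalue identity $-\Delta L_{k,d} = k(k+d-2) L_{k,d}$ for spherical harmonics (the paper cites equation (3.19) of \cite{atkinson2012spherical}), and then plug in the $L^2$ norm from \eqref{eq:norm_prob}. The alternative you sketch via the one-dimensional Gegenbauer ODE is a valid elementary variant, but the primary argument you give is the one the paper uses.
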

\begin{proof}
Through integration by parts, we have that
\begin{align} \label{eq:int_parts}
    \int_{\mathbb{S}^d} \|\nabla L_{k,d}(x)\|^2 d\tau(x) = \int_{\mathbb{S}^d} L_{k,d}(x) (-\Delta) L_{k,d}(x) d\tau(x),
\end{align}
where $\Delta$ denotes the Laplace-Beltrami operator. Since the restriction of $L_{k,d}(x)$ to $\mathbb{S}^{d-1}$ is a $k$-spherical harmonic and spherical harmonics are eigenfunctions of the Laplace-Beltrami operator (equation (3.19) of \cite{atkinson2012spherical}), we have
\begin{align} \label{eq:laplacian_eigenval}
    - \Delta L_{k,d}(x) = k (k + d -2) L_{k,d}(x). 
\end{align}
Plugging this into \eqref{eq:int_parts} and using equality \eqref{eq:norm_legendre}, we obtain
\begin{align}
\begin{split}
    &\int_{\mathbb{S}^d} \|\nabla L_{k,d}(x)\|^2 d\tau(x) = k (k + d -2) \int_{\mathbb{S}^d} L_{k,d}(x)^2 d\tau(x) = k (k + d -2) \frac{1}{N_{k,d}}. 
\end{split}
\end{align}
\end{proof}

\begin{lemma} \label{lem:euclidean_riemannian}
Let $\hat{\nabla} L_{k,d}(x)$ be the Euclidean gradient of $L_{k,d} : \R^d \rightarrow \R$ and $\nabla L_{k,d}(x)$ be the Riemannian gradient of $L_{k,d} : \mathbb{S}^{d-1} \rightarrow \R$. Then, 
\begin{align}
    \hat{\nabla} L_{k,d}(x) = \nabla L_{k,d}(x) + k L_{k,d}(x) x.
\end{align}
\end{lemma}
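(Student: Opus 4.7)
The key fact I would use is that for a function $f$ defined on a neighborhood of $\mathbb{S}^{d-1}$ in $\R^d$, the Riemannian gradient of its restriction to the sphere equals the tangential projection of the Euclidean gradient:
\begin{align}
\nabla f(x) = \hat{\nabla} f(x) - \langle \hat{\nabla} f(x), x \rangle\, x \qquad \text{for all } x \in \mathbb{S}^{d-1}.
\end{align}
This is standard and follows from decomposing $\hat{\nabla} f(x)$ along the normal direction $x$ and the tangent space $T_x \mathbb{S}^{d-1} = x^\perp$. Rearranging already gives $\hat{\nabla} L_{k,d}(x) = \nabla L_{k,d}(x) + \langle \hat{\nabla} L_{k,d}(x), x \rangle\, x$, so it only remains to identify the normal component as $k\, L_{k,d}(x)$.

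For this, I would invoke Euler's identity for homogeneous functions. Since $L_{k,d}$ is a homogeneous polynomial of degree $k$ on $\R^d$, we have $L_{k,d}(r x) = r^k L_{k,d}(x)$ for all $r \geq 0$. Differentiating both sides with respect to $r$ at $r = 1$ yields
\begin{align}
\langle \hat{\nabla} L_{k,d}(x), x \rangle = k\, L_{k,d}(x).
\end{align}
Substituting this into the tangential-projection identity gives the claim.

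There is essentially no obstacle: both ingredients (the tangential projection formula and Euler's identity for homogeneous polynomials) are elementary and require no computation specific to Legendre harmonics beyond their defining property of being homogeneous of degree $k$. The entire proof is a one-line application of Euler's identity after recalling the relationship between Euclidean and Riemannian gradients on the sphere.
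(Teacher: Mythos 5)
Your proof is correct and takes essentially the same route as the paper: both write the Riemannian gradient as the tangential projection of the Euclidean gradient and then use the degree-$k$ homogeneity of $L_{k,d}$ (Euler's identity) to identify the normal component $\langle \hat{\nabla} L_{k,d}(x), x\rangle$ with $k\,L_{k,d}(x)$. You simply name Euler's identity explicitly where the paper invokes homogeneity directly.
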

\begin{proof}
By definition, for any $x \in \mathbb{S}^{d-1}$, $\nabla L_{k,d}(x)$ is the projection of $\hat{\nabla} L_{k,d}(x)$ to $\mathbb{S}^{d-1}$ to $T_x \mathbb{S}^{d-1}$. That is,
\begin{align}
    \nabla L_{k,d}(x) = \hat{\nabla} L_{k,d}(x) - \langle \hat{\nabla} L_{k,d}(x), x \rangle x = \hat{\nabla} L_{k,d}(x) - \frac{\partial}{\partial r} \hat{\nabla} L_{k,d}(rx)\bigg\rvert_{r=1} x = \hat{\nabla} L_{k,d}(x) - k L_{k,d}(x) x.
\end{align}
In the last equality we used that $\frac{\partial}{\partial r} \hat{\nabla} L_{k,d}(rx) = \frac{k}{r} L_{k,d}(rx)$, which holds because $L_{k,d}$ is a homogeneous polynomial of degree $k$.
\end{proof}

\begin{lemma} \label{lem:derivative_spherical}
Let $\hat{\nabla} L_{k,d}(x)$ be the Euclidean gradient of $L_{k,d} : \R^d \rightarrow \R$. Each component of $\hat{\nabla} L_{k,d}(x)$ is a $(k-1)$-th spherical harmonic when restricted to $\mathbb{S}^{d-1}$.
\end{lemma}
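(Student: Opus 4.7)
The statement reduces to showing two things about each component $\partial_i L_{k,d}$ of the Euclidean gradient: (i) it is a homogeneous polynomial on $\R^d$ of degree $k-1$, and (ii) it is harmonic in the Euclidean sense. Granted these two facts, $\partial_i L_{k,d}$ is a homogeneous harmonic polynomial of degree $k-1$, and its restriction to $\mathbb{S}^{d-1}$ is by definition a spherical harmonic of order $k-1$.

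\textbf{Step 1: homogeneity and degree.} Since $L_{k,d}$ is by construction a homogeneous polynomial of degree $k$ on $\R^d$, differentiating with respect to any coordinate $x_i$ drops the degree by exactly one. Hence $\partial_i L_{k,d}$ is either identically zero (in which case the statement is trivial, as the zero function is tautologically a spherical harmonic) or a homogeneous polynomial of degree $k-1$.

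\textbf{Step 2: harmonicity.} The key observation is that the Euclidean Laplacian $\hat{\Delta} = \sum_{j=1}^{d} \partial_j^2$ commutes with each partial derivative $\partial_i$, since mixed partials of a polynomial commute. Therefore
\begin{align}
\hat{\Delta}\bigl(\partial_i L_{k,d}\bigr) \;=\; \partial_i\bigl(\hat{\Delta} L_{k,d}\bigr) \;=\; \partial_i 0 \;=\; 0,
\end{align}
where we used that $L_{k,d}$ is a harmonic polynomial (this is part of its defining properties, stated at the beginning of Section~\ref{sec:sep_f1_f2_ipm}). Combining Steps~1 and~2, $\partial_i L_{k,d}$ is a homogeneous harmonic polynomial of degree $k-1$ on $\R^d$, so its restriction to $\mathbb{S}^{d-1}$ is a spherical harmonic of order $k-1$, which is exactly the claim.

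\textbf{Obstacles.} There is essentially no obstacle; the proof is a two-line consequence of the commutativity of partial derivatives with the Laplacian, together with the defining properties (homogeneity and harmonicity) of the Legendre harmonic $L_{k,d}$. The only mild subtlety is to remember that ``spherical harmonic of order $k-1$'' means the restriction to the sphere of a homogeneous harmonic polynomial of degree $k-1$ on the ambient Euclidean space, so one must work with the Euclidean Laplacian $\hat{\Delta}$ (not the Laplace--Beltrami operator on the sphere) when verifying harmonicity, which matches exactly the computation above.
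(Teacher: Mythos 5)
Your proof is correct and takes exactly the same route as the paper's: differentiating a homogeneous polynomial drops the degree by one, and the Euclidean Laplacian commutes with each partial derivative, so $\hat{\Delta}(\partial_i L_{k,d}) = \partial_i(\hat{\Delta} L_{k,d}) = 0$. The only (welcome) addition in your write-up is the explicit remark that the zero function is trivially a spherical harmonic, covering the case $\partial_i L_{k,d} \equiv 0$, which the paper leaves implicit.
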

\begin{proof}
Spherical harmonics of degree $k$ in dimension $d$ can be characterized as the restrictions in $\mathbb{S}^{d-1}$ of homogeneous harmonic polynomials of degree $k$ in $\R^d$ \citep{atkinson2012spherical}, and harmonic functions $f : \R^d \rightarrow \R$ are those such that $\Delta f = \sum_{i=1}^d \partial_{ii} f = 0$. 
Notice that the $i$-th partial derivative of a homogeneous harmonic polynomial $p$ of degree $k$ is a homogeneous harmonic polynomial of degree $k-1$. That is because (i) the derivative of a homogeneous polynomial of degree $k$ is a homogeneous polynomial of degree $k-1$ and (ii) by commutation of partial derivatives, we have 
\begin{align}
    \Delta (\partial_i p) = \sum_{j=1}^{d+1} \partial_{jj} \partial_i p = \sum_{j=1}^{d+1} \partial_i \partial_{jj} p = \partial_i (\Delta p) = 0. 
\end{align}
Thus, the restriction of $\partial_i p$ to $\mathbb{S}^{d-1}$ is a $(k-1)$-th spherical harmonic.
\end{proof}

\begin{lemma} \label{lem:f1_sd_lower}
let $\sigma: \R \rightarrow \R$ be an $\alpha$-positive homogeneous activation function of the form \eqref{eq:alpha_positive}.
\begin{align} \label{eq:f1_sd_lower}
    \text{SD}_{\mathcal{B}_{\mathcal{F}_1^d}}(\mu_d,\nu_d) \geq 
    |a + (-1)^{k+1} b|\gamma_{k,d} \lambda_{k,d}^{(\alpha+1)} \frac{k(d+k-3)}{\alpha + 1},
\end{align}
where
\begin{align}
    \lambda_{k,d}^{(\alpha+1)} =  \frac{|\mathbb{S}^{d-2}|}{|\mathbb{S}^{d-1}|} \int_{-1}^{1} P_{k,d}(t) (t)_{+}^{\alpha+1} (1-t^2)^{\frac{d-3}{2}} \ dt
\end{align}
\end{lemma}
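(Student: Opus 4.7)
My approach is to bound the supremum in the Stein discrepancy from below by choosing one explicit admissible vector field $h\in\mathcal{B}_{\mathcal{F}_1^d}$ and reducing the resulting scalar integral via the Sturm--Liouville form of the Legendre ODE. Because $\mu_d=\tau$ is the uniform measure on $\mathbb{S}^{d-1}$, the Stein identity $\mathbb{E}_\tau[\mathrm{Tr}(\mathcal{A}_\tau h)]=0$ gives $\mathbb{E}_\tau[\mathrm{Tr}(\nabla h)]=(d-1)\mathbb{E}_\tau[x^\top h(x)]$, so the drift contribution $(d-1)x$ appearing in $\mathcal{A}_{\nu_d}$ is absorbed into the divergence term. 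Substituting $s_{\nu_d}=\gamma_{k,d}\nabla L_{k,d}$ collapses the objective to $\gamma_{k,d}\int_{\mathbb{S}^{d-1}}\nabla L_{k,d}(x)^\top h(x)\,d\tau(x)$, and it remains to lower bound this quantity over $h\in\mathcal{B}_{\mathcal{F}_1^d}$.

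I then take the rank-one candidate $h(x)=\sigma(x_d)\,e_d$, i.e.\ $h_d(x)=\sigma(\langle e_d,x\rangle)$ and $h_i\equiv 0$ for $i<d$. The unit point mass $\delta_{e_d}$ is a representing measure for $h_d$, so $\|h_d\|_{\mathcal{F}_1}\le 1$ and hence $\|h\|_{\mathcal{F}_1^d}^2=\|h_d\|_{\mathcal{F}_1}^2\le 1$, placing $h$ in $\mathcal{B}_{\mathcal{F}_1^d}$. Combining Lemma~\ref{lem:euclidean_riemannian} with the homogeneous extension $L_{k,d}(x)=\|x\|^k P_{k,d}(x_d/\|x\|)$, a short calculation gives $(\nabla L_{k,d})_d(x)=(1-x_d^2)\,P_{k,d}'(x_d)$ on $\mathbb{S}^{d-1}$. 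Slicing $x=te_d+\sqrt{1-t^2}\,\xi$ through \eqref{eq:hausdorff_change} reduces the integral to
\[
\frac{|\mathbb{S}^{d-2}|}{|\mathbb{S}^{d-1}|}\int_{-1}^{1} P_{k,d}'(t)\,\sigma(t)\,(1-t^2)^{(d-1)/2}\,dt.
\]

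To evaluate this one-dimensional integral I introduce the antiderivative $\Sigma(t)=\bigl(a(t)_+^{\alpha+1}-b(-t)_+^{\alpha+1}\bigr)/(\alpha+1)$ of $\sigma$, integrate by parts, and invoke the Sturm--Liouville form of the Legendre ODE,
\[
\frac{d}{dt}\!\left[(1-t^2)^{(d-1)/2} P_{k,d}'(t)\right]=-k(k+d-2)\,(1-t^2)^{(d-3)/2}\,P_{k,d}(t).
\]
The boundary terms vanish because of the factor $(1-t^2)^{(d-1)/2}$, and the integral becomes $k(k+d-2)\int_{-1}^{1}\Sigma(t)\,P_{k,d}(t)\,(1-t^2)^{(d-3)/2}\,dt$. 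The substitution $t\mapsto -t$ combined with the parity $P_{k,d}(-t)=(-1)^k P_{k,d}(t)$ collapses the two halves of $\Sigma$ into a single term weighted by $(a+(-1)^{k+1}b)/(\alpha+1)$, leaving exactly $\lambda_{k,d}^{(\alpha+1)}\cdot|\mathbb{S}^{d-1}|/|\mathbb{S}^{d-2}|$. Assembling everything and taking absolute values yields the claimed lower bound (the statement's $k(k+d-3)$ is a looser form of the sharper coefficient $k(k+d-2)$ obtained from the Laplace--Beltrami eigenvalue on $\mathbb{S}^{d-1}$).

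The main technical obstacle is the integration-by-parts step: one cannot usefully differentiate $\sigma$ twice, since $\sigma'$ is singular at the origin when $\alpha=1$, so the derivative must be pushed onto $P_{k,d}'$ via the weight $(1-t^2)^{(d-1)/2}$ and the Sturm--Liouville form of the Legendre equation must be recognised. The parity bookkeeping producing the sign $(-1)^{k+1}$ is also subtle but essential, since it governs when the bound is non-vacuous, which is precisely the hypothesis $a+(-1)^{k+1}b\neq 0$ of Theorem~\ref{thm:sep_sd}.
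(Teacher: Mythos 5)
Your proof is correct, and it takes a genuinely different and cleaner route than the paper's. Both arguments reduce, via the Stein identity, to lower-bounding $\gamma_{k,d}\sup_{h}\int_{\mathbb{S}^{d-1}}\nabla L_{k,d}(x)^{\top}h(x)\,d\tau(x)$ by the single admissible choice $h(x)=\sigma(\langle e_d,x\rangle)\,e_d$; where you diverge is in how the resulting one-dimensional integral is evaluated. The paper first rewrites the exact value of the $\mathcal F_1$ Stein discrepancy as a Euclidean norm of suprema (equations \eqref{eq:sd_f1_development}--\eqref{eq:sup_rewritten2}), applies the Funk--Hecke formula twice --- once to $\hat\nabla L_{k,d}$, which is a $(k-1)$-spherical harmonic, and once to the term $L_{k,d}(x)\,x$ after writing $(\langle\theta,x\rangle)_+^{\alpha}x$ as a $\theta$-gradient --- and then combines the two via an explicit Gamma-function ratio $\lambda_{k-1,d}^{(\alpha)}/\lambda_{k,d}^{(\alpha+1)}$ taken from the closed form \eqref{eq:bach_equality}. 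You bypass all of this: you use $(\nabla L_{k,d})_d(x)=(1-x_d^2)P_{k,d}'(x_d)$ (which follows at once from Lemma~\ref{lem:euclidean_riemannian}), slice via \eqref{eq:hausdorff_change}, integrate by parts against the antiderivative $\Sigma$, and invoke the Sturm--Liouville form of the Gegenbauer/Legendre ODE to replace $\frac{d}{dt}[(1-t^2)^{(d-1)/2}P'_{k,d}]$ by $-k(k+d-2)(1-t^2)^{(d-3)/2}P_{k,d}$. The parity argument that produces $a+(-1)^{k+1}b$ is exactly right, and so is your observation that the coefficient you obtain, $k(k+d-2)/(\alpha+1)$, is slightly sharper than the $k(d+k-3)/(\alpha+1)$ stated in the lemma; in fact for $k=2$, $d=2$, $\alpha=0$ a direct calculation of $\int_{\mathbb{S}^1}\nabla_2 L_{2,2}(x)\,\mathbb{1}[x_2>0]\,d\tau(x)=\tfrac{4}{3\pi}$ confirms your constant and not the paper's, which traces to an off-by-one slip in the paper's evaluation of the ratio $\lambda_{k-1,d}^{(\alpha)}/\lambda_{k,d}^{(\alpha+1)}$ (it should be $\tfrac{k+d+\alpha-1}{\alpha+1}$, not $\tfrac{k+d+\alpha-2}{\alpha+1}$). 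Your route buys an exact evaluation of the chosen direction with fewer moving parts and no reliance on the closed form \eqref{eq:bach_equality}, while the paper's route has the advantage of exposing the full vector $\big(\int\nabla_i L_{k,d}(x)(\langle\theta,x\rangle)_+^\alpha d\tau(x)\big)_i$ for arbitrary $\theta$, a structure that is then reused verbatim in the companion upper bound Lemma~\ref{lem:sd_f2_upper}.
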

\begin{proof}
For simplicity, we begin by considering the case $\sigma(x) = (x)_{+}^{\alpha}$.
Remark that $\mu_d = \tau$, i.e., the uniform Borel probability measure over $\mathbb{S}^{d-1}$. We have
\begin{align}
\begin{split} \label{eq:sd_f1_development}
    \text{SD}_{\mathcal{B}_{\mathcal{F}_1^d}}(\mu_d,\nu_d) &= \sup_{h \in \mathcal{B}_{\mathcal{F}_1^d}} \mathbb{E}_{\mu_d} [\text{Tr}(\mathcal{A}_{\nu_d} h(x))] \\ &= \sup_{h \in \mathcal{B}_{\mathcal{F}_1^d}} \mathbb{E}_{\mu_d} [\text{Tr}(\mathcal{A}_{\nu_d} h(x)) - \text{Tr}(\mathcal{A}_{\mu_d} h(x))] \\ &= \sup_{h \in \mathcal{B}_{\mathcal{F}_1^d}} \mathbb{E}_{\mu_d} \left[ \left( \nabla \log \left(\frac{d\nu_d}{d\tau}(x) \right) - \nabla \log \left(\frac{d\mu_d}{d\tau}(x) \right) \right)^{\top} h(x) \right]
    \\ &= \gamma_{k,d} \sup_{h \in \mathcal{B}_{\mathcal{F}_1^d}} \mathbb{E}_{\mu_d} \left[ \nabla L_{k,d}(x)^{\top} h(x) \right]
    \\ &= \gamma_{k,d} \sup_{\substack{\|\mu_i\|_{\text{TV}} \leq 1 \\ \sum_i z_i^2 = 1}} \sum_{i=1}^{d} z_i \int_{\mathbb{S}^{d-1}} \nabla_i L_{k,d}(x) \int_{\mathbb{S}^{d-1}} (\langle \theta, x \rangle)_{+}^{\alpha} \ d\mu_i(\theta) \ d\tau(x)
    \\ &= \gamma_{k,d} \sup_{\substack{\theta^{(i)} \in \mathbb{S}^{d-1} \\ \sum_i z_i^2 = 1}} \sum_{i=1}^{d} z_i \left| \int_{\mathbb{S}^{d-1}} \nabla_i L_{k,d}(x) (\langle \theta^{(i)}, x \rangle)_{+}^{\alpha} \ d\tau(x) \right|
    \\ &= \gamma_{k,d} \sqrt{\sum_{i=1}^{d} \sup_{\theta^{(i)} \in \mathbb{S}^{d-1}}\left( \int_{\mathbb{S}^{d-1}} \nabla_i L_{k,d}(x) (\langle \theta^{(i)}, x \rangle)_{+}^{\alpha} \ d\tau(x) \right)^2}.
\end{split}
\end{align}
In the second equality, we have applied the Stein identity. The third equality relies on the definition of the Stein operator (equation \eqref{eq:stein_operator}). In the fourth equality, we used that $\mu_d$ is uniform and $\nu_d$ has density given by \eqref{eq:nu_density}. In the sixth equality we have used that for any function $f$ and domain $K$, the supremum of $\int_{K} f d\mu$ over signed measures with total variation norm bounded by 1 is equal to $\sup_{K} f$. In the seventh equality we have used the Cauchy-Schwarz inequality.
At this point, notice that by Lemma \ref{lem:euclidean_riemannian}
\begin{align}
\begin{split} \label{eq:int_euclidean_riemannian}
    \int_{\mathbb{S}^{d-1}} \nabla_i L_{k,d}(x) (\langle \theta, x \rangle)_{+}^{\alpha} \ d\tau(x) &= \left( \int_{\mathbb{S}^{d-1}} \nabla L_{k,d}(x) (\langle \theta, x \rangle)_{+}^{\alpha} \ d\tau(x) \right)_i \\ &= \left( \int_{\mathbb{S}^{d-1}} (\hat{\nabla} L_{k,d}(x)-k L_{k,d}(x) x) (\langle \theta, x \rangle)_{+}^{\alpha} \ d\tau(x) \right)_i
\end{split}
\end{align}
On the one hand, by the Funk-Hecke formula, since $\hat{\nabla} L_{k,d}(x)$ is a $(k-1)$-th spherical harmonic (\autoref{lem:derivative_spherical}), we have that for any $\theta \in \mathbb{S}^{d-1}$,
\begin{align}
\begin{split} \label{eq:fh_f1}
    \int_{\mathbb{S}^{d-1}} \hat{\nabla} L_{k,d}(x) (\langle \theta, x \rangle)_{+}^{\alpha} \ d\tau(x) &= \frac{|\mathbb{S}^{d-2}|}{|\mathbb{S}^{d-1}|} \hat{\nabla} L_{k,d}(\theta) \int_{-1}^{1} P_{k-1,d}(t) (t)_{+}^{\alpha} (1-t^2)^{\frac{d-3}{2}} \ dt \\ &= \lambda_{k-1,d}^{(\alpha)} \hat{\nabla} L_{k,d}(\theta) 
\end{split}
\end{align}
where $\lambda_{k-1,d}^{(\alpha)}$ is defined accordingly. 

On the other hand, since $\hat{\nabla}_{\theta} ((\langle \theta, x \rangle)_{+}^{\alpha+1}) = (\alpha+1) (\langle \theta, x \rangle)_{+}^{\alpha} x$ for $\theta \in \R^d$, we have that for any $\theta \in \R^d$,
\begin{align}
\begin{split} \label{eq:alpha_f1}
    &\int_{\mathbb{S}^{d-1}} L_{k,d}(x) x (\langle \theta, x \rangle)_{+}^{\alpha} \ d\tau(x) \\ &= \frac{1}{\alpha+1} \int_{\mathbb{S}^{d-1}} L_{k,d}(x) \hat{\nabla}_{\theta} ((\langle \theta, x \rangle)_{+}^{\alpha+1}) \ d\tau(x) \\ &= \frac{1}{\alpha+1} \hat{\nabla}_{\theta} \int_{\mathbb{S}^{d-1}} L_{k,d}(x) (\langle \theta, x \rangle)_{+}^{\alpha+1} \ d\tau(x) \\ &= \frac{1}{\alpha+1} \frac{|\mathbb{S}^{d-2}|}{|\mathbb{S}^{d-1}|} \hat{\nabla}_{\theta} (L_{k,d}(\theta) \|\theta\|^{\alpha+1-k}) \int_{-1}^{1} P_{k,d}(t) (t)_{+}^{\alpha+1} (1-t^2)^{\frac{d-3}{2}} \ dt \\ &= \frac{1}{\alpha+1} \lambda_{k,d}^{(\alpha+1)} \hat{\nabla}_{\theta} (L_{k,d}(\theta) \|\theta\|^{\alpha+1-k}),
\end{split}
\end{align}
In the third equality, we used the Funk-Hecke formula, which says that for any $\theta \in \mathbb{S}^{d-1}$, $\int_{\mathbb{S}^{d-1}} L_{k,d}(x) (\langle \theta, x \rangle)_{+}^{\alpha+1} \ d\tau(x) = L_{k,d}(\theta) \int_{-1}^{1} P_{k,d}(t) (t)_{+}^{\alpha+1} (1-t^2)^{\frac{d-3}{2}} \ dt$. To obtain the equality for a general $\theta \in \mathbb{R}^{d}$, we must use add the factor $\|\theta\|^{\alpha+1-k}$ so that the two sides have the same homogeneity parameter, yielding $\int_{\mathbb{S}^{d-1}} L_{k,d}(x) (\langle \theta, x \rangle)_{+}^{\alpha+1} \ d\tau(x) = L_{k,d}(\theta) \|\theta\|^{\alpha+1-k} \int_{-1}^{1} P_{k,d}(t) (t)_{+}^{\alpha+1} (1-t^2)^{\frac{d-3}{2}} \ dt$. 

And for $\theta \in \mathbb{S}^{d-1}$, we have
\begin{align}
\begin{split} \label{eq:alpha_1_k}
    \hat{\nabla}_{\theta} (L_{k,d}(\theta) \|\theta\|^{\alpha+1-k}) &= 
    \hat{\nabla}_{\theta} L_{k,d}(\theta) \|\theta\|^{\alpha+1-k} + (\alpha+1-k) L_{k,d}(\theta) \|\theta\|^{\alpha-1-k} \theta \\ &= \hat{\nabla}_{\theta} L_{k,d}(\theta) + (\alpha+1-k) L_{k,d}(\theta) \theta
\end{split}
\end{align}
Thus, the right-hand side of \eqref{eq:sd_f1_development} can be developed as
\begin{align}
\begin{split} \label{eq:sup_rewritten}
    \gamma_{k,d} \sqrt{\sum_{i=1}^{d} \sup_{\theta^{(i)}} \left( \left( \lambda_{k-1,d}^{(\alpha)} - \frac{k}{\alpha+1} \lambda_{k,d}^{(\alpha+1)} \right) \hat{\nabla}_i L_{k,d}(\theta^{(i)}) -\frac{k(\alpha+1-k)}{\alpha+1} \lambda_{k,d}^{(\alpha+1)} L_{k,d}(\theta^{(i)}) \theta^{(i)}_i \right)^2}   
\end{split}
\end{align}
\cite{bach2017breaking} (App. D.2) shows the following equality
\begin{align}
\begin{split} \label{eq:bach_equality}
    &\frac{|\mathbb{S}^{d-2}|}{|\mathbb{S}^{d-1}|} \int_{-1}^{1} P_{k,d}(t) (t)_{+}^{\alpha} (1-t^2)^{\frac{d-3}{2}} \ dt \\ &= 
    \begin{cases}
    0 &\text{if} \ k \equiv \alpha \ (\text{mod} \ 2), \ k > \alpha\\
    \frac{\Gamma(d/2)}{\sqrt{\pi}\Gamma((d-1)/2)} \frac{\alpha!(-1)^{(k-1-\alpha)/2}}{2^k} \frac{\Gamma\left(\frac{d-1}{2}\right) \Gamma\left( k-\alpha \right)}{\Gamma\left( \frac{k}{2} - \frac{\alpha}{2} + \frac{1}{2} \right)\Gamma\left( \frac{k}{2} + \frac{d}{2} + \frac{\alpha}{2} \right)} &\text{if} \ k \not\equiv \alpha \ (\text{mod} \ 2), \ k \geq \alpha+1
    \end{cases}
\end{split}
\end{align}
Notice that in \cite{bach2017breaking} the factor $\frac{d-1}{2 \pi}$ is a typo, and should instead be $\frac{\Gamma(d/2)}{\sqrt{\pi}\Gamma((d-1)/2)}$. Using equality \eqref{eq:bach_equality}, we get that when $k \not\equiv \alpha \ (\text{mod} \ 2)$ and $k \geq \alpha + 1$,
\begin{align}
\begin{split}
    \lambda_{k,d}^{(\alpha)} &= \frac{|\mathbb{S}^{d-2}|}{|\mathbb{S}^{d-1}|} \int_{-1}^{1} P_{k,d}(t) (t)_{+}^{\alpha} (1-t^2)^{\frac{d-3}{2}} \ dt \\ &= 
    \frac{\Gamma(d/2)}{\sqrt{\pi}\Gamma((d-1)/2)} \frac{\alpha! (-1)^{(k-\alpha-1)/2}}{2^k} \frac{\Gamma((d-1)/2) \Gamma(k-\alpha)}{\Gamma(\frac{k-\alpha+1}{2}) \Gamma(\frac{k+d+\alpha-1}{2})},
\end{split}
\end{align}
and $\lambda_{k,d}^{(\alpha)} = 0$ otherwise.
Thus,
\begin{align}
\begin{split}
    \frac{\lambda_{k-1,d}^{(\alpha)}}{\frac{k}{\alpha + 1}\lambda_{k,d}^{(\alpha+1)}} = \frac{\alpha + 1}{k} 
    \frac{\frac{\alpha!}{2^{k-1}}}{\frac{(\alpha+1)!}{2^{k}}} \frac{\frac{1}{\Gamma(\frac{k+d+\alpha-2}{2})}}{\frac{1}{\Gamma(\frac{k+d+\alpha}{2})}} 
    = \frac{k+d+\alpha-2}{k}
\end{split}
\end{align}
Hence, the arguments of the suprema in \eqref{eq:sup_rewritten} can be rewritten as
\begin{align} 
\begin{split} \label{eq:sup_rewritten2}
    &\frac{k}{\alpha+1} \lambda_{k,d}^{(\alpha+1)} \left( \left(\frac{k+d+\alpha-2}{k} - 1\right) \hat{\nabla}_i L_{k,d}(\theta^{(i)}) -(\alpha+1-k) L_{k,d}(\theta^{(i)}) \theta^{(i)}_i \right) 
    \\ &=  \frac{k}{\alpha+1} \lambda_{k,d}^{(\alpha+1)} \left( \frac{d+\alpha-2}{k} (\nabla_i L_{k,d}(\theta^{(i)}) + k L_{k,d}(\theta^{(i)}) \theta^{(i)}_i) -(\alpha+1-k) L_{k,d}(\theta^{(i)}) \theta^{(i)}_i \right)
\end{split}
\end{align}
If we substitute $i=d$ and $\theta^{(i)} = e_d$ in this expression, and use that $\nabla_d L_{k,d}(e_d) = 0$ (by the fact that $\nabla L_{k,d}(e_d) \in T_{e_d} \mathbb{S}^{d-1}$) and $L_{k,d}(e_d) = 1$ we obtain
\begin{align} \label{eq:dim_d_sd}
    \lambda_{k,d}^{(\alpha+1)} \left(\frac{(d+\alpha-2) k}{\alpha+1} 
    + \frac{k(k-\alpha-1)}{\alpha+1} \right) = \lambda_{k,d}^{(\alpha+1)} \frac{k(d+k-3)}{\alpha + 1},
\end{align}
which means that \eqref{eq:sup_rewritten} is lower-bounded by $\gamma_{k,d} \lambda_{k,d}^{(\alpha+1)} \frac{k(d+k-3)}{\alpha + 1}.$

When $\sigma(x) = (-x)_{+}^{\alpha}$, we reproduce the same argument. In this case, equation \eqref{eq:fh_f1} becomes $\int_{\mathbb{S}^{d-1}} \hat{\nabla} L_{k,d}(x) (- \langle \theta, x \rangle)_{+}^{\alpha} \ d\tau(x) = \lambda_{k-1,d}^{(\alpha)} \hat{\nabla} L_{k,d}(-\theta) = (-1)^{k+1} \lambda_{k-1,d}^{(\alpha)} \hat{\nabla} L_{k,d}(\theta)$, where we have used that $L_{k,d}(-\theta) = (-1)^k L_{k,d}(\theta)$. 
Since $\hat{\nabla}_{\theta} ((-\langle \theta, x \rangle)_{+}^{\alpha+1}) = -(\alpha+1) (-\langle \theta, x \rangle)_{+}^{\alpha} x$, equation \eqref{eq:alpha_f1} becomes $\int_{\mathbb{S}^{d-1}} L_{k,d}(x) x (-\langle \theta, x \rangle)_{+}^{\alpha} \ d\tau(x) = - \frac{1}{\alpha+1} \lambda_{k,d}^{(\alpha+1)} \hat{\nabla}_{\theta} (L_{k,d}(-\theta) \|-\theta\|^{\alpha+1-k})$. Since $L_{k,d}(-\theta) = (-1)^k L_{k,d}(\theta)$, we have that $(\hat{\nabla} L_{k,d})(-\theta) = (-1)^{k+1} \hat{\nabla} L_{k,d}(\theta)$.
Thus, equation \eqref{eq:alpha_1_k} becomes 
\begin{align}
\begin{split}
    -\hat{\nabla}_{\theta} (L_{k,d}(-\theta) \|-\theta\|^{\alpha+1-k}) &= 
    -\hat{\nabla}_{\theta} (L_{k,d}(-\theta)) \|-\theta\|^{\alpha+1-k}  -\hat{\nabla}_{\theta} (\|-\theta\|^{\alpha+1-k}) L_{k,d}(-\theta) \\ &= (\hat{\nabla}_{\theta} L_{k,d})(-\theta) \|-\theta\|^{\alpha+1-k} - (\alpha+1-k) \|-\theta\|^{\alpha-1-k} \theta L_{k,d}(-\theta) \\ &= (-1)^{k+1} \left( \hat{\nabla} L_{k,d}(\theta) + (\alpha+1-k) \theta L_{k,d}(\theta) \right)
\end{split}
\end{align}
Hence, for $\sigma(x) = (-x)_{+}^{\alpha}$ the expression \eqref{eq:sup_rewritten} is unchanged, and the rest of the argument holds in the same way. When $\sigma(x) = a (x)_{+}^{\alpha} + b (-x)_{+}^{\alpha}$, the argument of the square root in expression \eqref{eq:sup_rewritten} gets multiplied by $|a + (-1)^{k+1} b|$, and this factor is carried over for the rest of the argument. This concludes the proof.
\end{proof}

\begin{lemma} \label{lem:sd_f2_upper}
Let $\sigma: \R \rightarrow \R$ be an $\alpha$-positive homogeneous activation function of the form \eqref{eq:alpha_positive}. Then, $\text{SD}_{\mathcal{B}_{\mathcal{F}_2^d}}(\mu_d,\nu_d)$ is upper-bounded by
\begin{align} 
    |a+(-1)^{k+1} b| \gamma_{k,d} \lambda_{k,d}^{(\alpha+1)} \sqrt{\frac{2}{N_{k,d}} \left(k (k + d -2) \left( \frac{d+\alpha-2}{\alpha+1} \right)^2 + \left( \frac{k(d+k-3)}{\alpha + 1} \right)^2 \right) }.
\end{align}
\end{lemma}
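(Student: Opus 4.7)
The argument parallels the proof of Lemma \ref{lem:f1_sd_lower}, with the $\mathcal{F}_1$ dual (an $L^{\infty}$-type norm in $\theta$) replaced by the $\mathcal{F}_2$ dual (an $L^{2}$-type norm). Repeating the first five lines of \eqref{eq:sd_f1_development} verbatim but with $\mathcal{F}_1^d$ replaced by $\mathcal{F}_2^d$, I reduce the problem to upper bounding
\begin{align*}
\gamma_{k,d}\,\sup_{\sum_i \|h_i\|_{\mathcal{F}_2}^2 \leq 1} \int_{\mathbb{S}^{d-1}} \nabla L_{k,d}(x)^{\top} h(x)\, d\tau(x).
\end{align*}
Writing each coordinate $h_i(x) = \int \sigma(\langle\theta,x\rangle) g_i(\theta)\, d\tau(\theta)$ with a representation achieving $\|g_i\|_{L^{2}(\tau)} = \|h_i\|_{\mathcal{F}_2}$, swapping integrals gives $\sum_i \int g_i(\theta)\, F_i(\theta)\, d\tau(\theta)$, where $F_i(\theta) := \int \nabla_i L_{k,d}(x)\, \sigma(\langle\theta,x\rangle)\, d\tau(x)$. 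Applying Cauchy--Schwarz in $\theta$ for each $i$ and then in the index $i$, together with $\sum_i \|g_i\|_{L^{2}(\tau)}^{2} \leq 1$, produces the clean upper bound $\text{SD}_{\mathcal{B}_{\mathcal{F}_2^d}}(\mu_d,\nu_d) \leq \gamma_{k,d}\, \sqrt{ \int \|F(\theta)\|^{2}\, d\tau(\theta)}$.

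The closed form of $F$ has already been computed inside the proof of Lemma \ref{lem:f1_sd_lower}: the chain of identities \eqref{eq:int_euclidean_riemannian}--\eqref{eq:sup_rewritten2}, combined with the parity reduction that brings in the prefactor $a + (-1)^{k+1} b$ for $\sigma = a(\cdot)_{+}^{\alpha} + b(-\cdot)_{+}^{\alpha}$, yields the vector-valued identity
\begin{align*}
F(\theta) = \bigl(a + (-1)^{k+1} b\bigr)\, \frac{k\, \lambda_{k,d}^{(\alpha+1)}}{\alpha+1}\left( \frac{d+\alpha-2}{k}\, \nabla L_{k,d}(\theta) + (d+k-3)\, L_{k,d}(\theta)\, \theta \right).
\end{align*}
Applying the elementary inequality $(A_i + B_i)^{2} \leq 2(A_i^{2} + B_i^{2})$ componentwise, summing over $i$, and using $\|\theta\|^{2}=1$ then decouples the two contributions and introduces the factor of $2$ appearing inside the square root of the stated bound (the cross term, which would actually vanish upon integration because $\langle \nabla L_{k,d}(\theta),\theta\rangle = 0$, is simply discarded at this step).

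Finally, integrating against $\tau$ and substituting the two explicit integrals $\int L_{k,d}^{2}\, d\tau = 1/N_{k,d}$ from \eqref{eq:norm_prob} and $\int \|\nabla L_{k,d}\|^{2}\, d\tau = k(k+d-2)/N_{k,d}$ from Lemma \ref{lem:int_grad}, then collecting constants, gives exactly the quantity under the square root in the lemma; taking the square root concludes. None of the steps is computationally heavy; the only point requiring attention is justifying the $L^{2}(\tau)$ Cauchy--Schwarz step at the level of representations. This is standard RKHS duality: since $F_i$ is obtained by integrating the feature map $\sigma(\langle\cdot,x\rangle)$ against the fixed function $\nabla_i L_{k,d}(x)$, it automatically lies in the orthogonal complement of the kernel of the feature-map operator, so the supremum over $\mathcal{B}_{\mathcal{F}_2^d}$ matches the supremum of $\sum_i \int g_i F_i\, d\tau$ over the $L^{2}(\tau)^d$ unit ball.
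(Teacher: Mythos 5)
Your proof is correct and follows essentially the same route as the paper: reduce to an RKHS/$L^2(\tau)$-dual norm of the vector field $F(\theta)$ via Cauchy--Schwarz (the paper phrases this through the Riesz representation $\mathbb{E}_{\mu_d}[\nabla_i L_{k,d}(x)\,k(x,\cdot)]$ and its $\mathcal{F}_2$ norm, which is identical to your $\|F_i\|_{L^2(\tau)}$), reuse the Funk--Hecke computation of $F$ from Lemma~\ref{lem:f1_sd_lower} with the same $a+(-1)^{k+1}b$ parity factor, split the two terms with $(A+B)^2\le 2A^2+2B^2$, and integrate using Lemma~\ref{lem:int_grad} and \eqref{eq:norm_prob}. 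Your observation that the discarded cross term actually vanishes (pointwise, since $\langle\nabla L_{k,d}(\theta),\theta\rangle=0$) is a correct extra remark not made in the paper.
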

\begin{proof} 
For simplicity, we begin by considering the case $\sigma(x) = (x)_{+}^{\alpha}$.
\begin{align}
\begin{split} \label{eq:sd_f2_development}
    \text{SD}_{\mathcal{B}_{\mathcal{F}_2^d}}(\mu_d,\nu_d) &= \sup_{h \in \mathcal{B}_{\mathcal{F}_2^d}} \mathbb{E}_{\mu_d} [\text{Tr}(\mathcal{A}_{\nu_d} h(x))] \\ &= \sup_{h \in \mathcal{B}_{\mathcal{F}_2^d}} \mathbb{E}_{\mu_d} [\text{Tr}(\mathcal{A}_{\nu_d} h(x)) - \text{Tr}(\mathcal{A}_{\mu_d} h(x))] \\ &= \sup_{h \in \mathcal{B}_{\mathcal{F}_2^d}} \mathbb{E}_{\mu_d} \left[ \left( \nabla \log \left(\frac{d\nu_d}{d\tau}(x) \right) - \nabla \log \left(\frac{d\mu_d}{d\tau}(x) \right) \right)^{\top} h(x) \right]
    \\ &= \gamma_{k,d} \sup_{h \in \mathcal{B}_{\mathcal{F}_2^d}} \mathbb{E}_{\mu_d} \left[ \nabla L_{k,d}(x)^{\top} h(x) \right]
    = \gamma_{k,d} \sup_{\substack{\|h_i\|_{\mathcal{F}_2} \leq 1 \\ \sum_i z_i^2 = 1}} z_i \mathbb{E}_{\mu_d} \left[ \nabla_i L_{k,d}(x) \langle k(x,\cdot), h_i \rangle_{\mathcal{F}_2} \right]
    \\ &= \gamma_{k,d} \sup_{\substack{\|h_i\|_{\mathcal{F}_2} \leq 1 \\ \sum_i z_i^2 = 1}} z_i \left\langle \mathbb{E}_{\mu_d} \left[ \nabla_i L_{k,d}(x) k(x,\cdot) \right], h_i \right\rangle_{\mathcal{F}_2}
    \\ &= \gamma_{k,d} \sqrt{ \sum_{i=1}^{d} \left\| \mathbb{E}_{\mu_d} \left[ \nabla_i L_{k,d}(x) k(x,\cdot) \right] \right\|_{\mathcal{F}_2}^2}. 
\end{split}
\end{align}
And we can rewrite the right-hand side as
\begin{align}
\begin{split} \label{eq:rh_1}
    &\gamma_{k,d} \sqrt{ \sum_{i=1}^{d} \iint_{\mathbb{S}^{d-1} \times \mathbb{S}^{d-1}} \nabla_i L_{k,d}(x) k(x,y) \nabla_i L_{k,d}(y) d\tau(x) d\tau(y)} \\ &= \gamma_{k,d} \sqrt{ \sum_{i=1}^{d} \int_{\mathbb{S}^{d-1}} \left(\int_{\mathbb{S}^{d-1}} \nabla_i L_{k,d}(x) (\langle x, \theta \rangle)_{+}^{\alpha} d\tau(x) \right)^2 d\tau(\theta)}.   
\end{split}
\end{align}
At this point, we express $\int_{\mathbb{S}^{d-1}} \nabla_i L_{k,d}(x) (\langle \theta, x \rangle)^{\alpha}_{+} \ d\tau(x)$ using the development in equations \eqref{eq:int_euclidean_riemannian}, \eqref{eq:fh_f1}, \eqref{eq:alpha_f1}, \eqref{eq:alpha_1_k}:
\begin{align}
    \begin{split}
        &\left(\int_{\mathbb{S}^{d-1}} \nabla_i L_{k,d}(x) (\langle \theta_i, x \rangle)^{\alpha}_{+} \ d\tau(x) \right)^2 \\
        &= \left(\frac{k}{\alpha+1} \lambda_{k,d}^{(\alpha+1)}\right)^2 \left( \left(\frac{k+d+\alpha-2}{k} - 1\right) \hat{\nabla}_i L_{k,d}(\theta) -(\alpha+1-k) L_{k,d}(\theta) \theta_i \right)^2 \\
        &= \left(\frac{k}{\alpha+1} \lambda_{k,d}^{(\alpha+1)} \right)^2 \left( \frac{d+\alpha-2}{k} (\nabla_i L_{k,d}(\theta) + k L_{k,d}(\theta) \theta_i) -(\alpha+1-k) L_{k,d}(\theta) \theta_i \right)^2
        \\ &\leq 2 A^2 (\nabla_i L_{k,d}(\theta))^2 + 2 B^2 \left( L_{k,d}(\theta) \theta_i \right)^2,
    \end{split}
\end{align}
where, using the computations in the proof of Lemma \ref{lem:f1_sd_lower}, 
\begin{align}
\begin{split}
    A &= 
    \lambda_{k,d}^{(\alpha+1)} \frac{d+\alpha-2}{\alpha+1},
    \\ B &= \lambda_{k,d}^{(\alpha+1)} \left(\frac{(d+\alpha-2) k}{\alpha+1} 
    + \frac{k(k-\alpha-1)}{\alpha+1} \right) = \lambda_{k,d}^{(\alpha+1)} \frac{k(d+k-3)}{\alpha + 1}. 
\end{split}    
\end{align}
Thus, the right-hand side of \eqref{eq:rh_1} is upper-bounded by:
\begin{align} \label{eq:rh_2}
    \gamma_{k,d} \sqrt{ 2 A^2 \int_{\mathbb{S}^{d-1}} \|\nabla L_{k,d}(\theta)\|^2 d\tau(\theta) + 2B^2 \int_{\mathbb{S}^{d-1}} L_{k,d}(\theta)^2 d\tau(\theta)}
\end{align}
We can use Lemma \ref{lem:int_grad} to compute the first integral: $\int_{\mathbb{S}^d} \|\nabla L_{k,d}(x)\|^2 d\tau(x) = k (k + d -2) \frac{1}{N_{k,d}}$.
And for the second integral we have $\int_{\mathbb{S}^{d-1}} L_{k,d}(\theta)^2 d\tau(\theta) = \frac{1}{N_{k,d}}$
by equation \eqref{eq:norm_prob}. Substituting everything into \eqref{eq:rh_2} yields
\begin{align} \label{eq:upper_bound_relu}
    \gamma_{k,d} \lambda_{k,d}^{(\alpha+1)} \sqrt{\frac{2}{N_{k,d}} \left(k (k + d -2) \left( \frac{d+\alpha-2}{\alpha+1} \right)^2 + \left( \frac{k(d+k-3)}{\alpha + 1} \right)^2 \right) }.
\end{align}
For the general case $\sigma(x) = a (x)_{+}^{\alpha} + b (-x)_{+}^{\alpha}$, we use arguments analogous to those of \autoref{lem:f1_sd_lower}, and we obtain that the upper-bound \eqref{eq:upper_bound_relu} gets multiplied by a factor $|a + (-1)^{k+1} b|$.
\end{proof}

\thmseparationsd*
\begin{proof}
We obtain \eqref{eq:sd_sep} from \autoref{lem:f1_sd_lower} and \autoref{lem:sd_f2_upper}. Taking the logarithm and using Stirling's approximation yields \eqref{eq:sd_sep_log}. The only relevant factor is $\log(\sqrt{N_{k,d}})$, as the other ones are $O(\log(k+d))$.
\end{proof}

\section{Proofs of \autoref{sec:bounds_f1_f2}} \label{sec:proofs_bounds}

\begin{lemma} [Approximation of Lipschitz-continuous functions on the unit ball by $\mathcal{F}_2$ functions, \cite{bach2017breaking} ] \label{lem:lipschitz_bach}
Let $\sigma(x) = (x)_{+}^{\alpha}$ be the $\alpha$-th power of the ReLu activation function, where $\alpha$ is a non-negative integer. For $\delta$ greater than a constant depending only on $d$, 
for any function $f : \mathbb{R}^d \rightarrow \R$ such that for all $x, y$ such that for any $\|x\|_q \leq R, \ \|y\|_q \leq R$ we have $|f(x)| \leq \eta$ and $|f(x) - f(y)| \leq \eta R^{-1} \|x-y\|_q$, 
there exists $h \in \mathcal{F}_2 (\R^d \times \{R\})$, such that $\|h\|_{\mathcal{F}_2} \leq \delta$ and
\begin{align}
\sup_{\|x\|_q \leq R} |h(x) - f(x)| \leq C(d,\alpha) \eta \left( \frac{R \delta}{\eta} \right)^{-\frac{1}{\alpha+(d-1)/2}} \log \left(\frac{R \delta}{\eta} \right)
\end{align}
\end{lemma}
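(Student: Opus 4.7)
The plan is to adapt Bach (2017)'s approximation argument for $\mathcal{F}_2$ on the sphere to the ball-plus-bias setting used here, via a lifting followed by a spherical harmonic analysis of the kernel. The key observation is that any function on $\{x \in \R^d : \|x\|_q \leq R\}$ can, after identifying $(x, R)$ with a point in $\R^{d+1}$ of bounded Euclidean norm, be extended to a Lipschitz function on the sphere $\mathbb{S}^d$ of an ambient ball, because the hypotheses $|f(x)| \leq \eta$ and $|f(x)-f(y)| \leq \eta R^{-1} \|x-y\|_q$ give a Lipschitz function on a compact subset (after normalizing by the uniform norm of $(x, R)$), which can be extended Lipschitz-continuously to the full sphere by a classical McShane extension and then radially homogenized.

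The second step is to diagonalize the $\mathcal{F}_2$ kernel $k(x,y) = \int_{\mathbb{S}^d} \sigma(\langle x, \theta\rangle) \sigma(\langle y, \theta\rangle)\, d\tau(\theta)$ in the spherical harmonics basis on $\mathbb{S}^d$. Because $\sigma(x) = (x)_+^\alpha$ is positive homogeneous, the Funk--Hecke formula implies that each space of degree-$k$ spherical harmonics is an eigenspace of the integral operator with kernel $k$, and the eigenvalues $\lambda_k$ are (up to constants) squares of the Gegenbauer coefficients of $\sigma$, which decay polynomially as $\lambda_k \sim k^{-(2\alpha + d-1)}$ for the parities of $k$ where they do not vanish identically (cf.\ equation \eqref{eq:bach_equality}). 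Consequently, a function $g = \sum_k g_k$ on $\mathbb{S}^d$ satisfies $\|g\|_{\mathcal{F}_2}^2 = \sum_k \lambda_k^{-1} \|g_k\|_{L^2}^2$, so that any truncation $g^{\leq K} := \sum_{k \leq K} g_k$ lies in $\mathcal{F}_2$ with norm growing polynomially in $K$, roughly as $K^{\alpha + (d-1)/2}$.

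The third step is to control the sup-norm of the high-frequency tail $g - g^{\leq K}$ when $g$ is the lifted Lipschitz function. Here I would use Sobolev embedding on the sphere: the Laplace--Beltrami operator $-\Delta$ acts as $k(k+d-1)$ on degree-$k$ harmonics (see equation \eqref{eq:laplacian_eigenval}), and the Lipschitz assumption gives $\|\sqrt{-\Delta}\, g\|_{L^2} \leq C\eta$. This forces $\|g_k\|_{L^2} \lesssim \eta/k$, and combined with the pointwise bound of Legendre addition and standard harmonic analysis, yields $\|g - g^{\leq K}\|_\infty \lesssim \eta K^{-1} \log K$, where the log factor comes from summing over the $N_{k,d}$ harmonics at each frequency via \eqref{eq:n_kd_def}.

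Finally, balancing the two bounds by setting $\delta \asymp K^{\alpha + (d-1)/2}$ yields the stated approximation rate $C(d,\alpha)\eta (R\delta/\eta)^{-1/(\alpha + (d-1)/2)} \log(R\delta/\eta)$; the factor of $R/\eta$ inside the rate arises from the rescaling $\tilde{g}(\cdot) := \eta^{-1} f(R\,\cdot)$, which reduces to the unit-Lipschitz case on the unit ball. The main obstacle I anticipate is the bookkeeping for the lifting: making the reduction from $\R^d \times \{R\}$ to $\mathbb{S}^d$ compatible with both the $\mathcal{F}_2$ parametrization on $\mathbb{S}^d$ and the $\|\cdot\|_q$ norm in the hypothesis requires careful tracking of constants (captured inside $C(d,\alpha)$) and of the fact that the extension must remain Lipschitz under the Riemannian metric, not just the Euclidean one inherited from $\R^{d+1}$.
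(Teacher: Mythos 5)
The paper does not reprove this lemma: its proof is a one-line citation to Proposition~6 of \cite{bach2017breaking}, plus a remark that the extra factor $R$ in the rate comes from placing the network features on $\mathbb{S}^d$ rather than on a sphere of radius $R^{-1}$. Your proposal is therefore an attempt to reconstruct Bach's argument from scratch, and while the skeleton (lift to $\mathbb{S}^d$, extend Lipschitz-continuously, diagonalize the kernel via Funk--Hecke, balance smoothness against RKHS norm) is the right one, the key quantitative step as you state it does not go through.

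The gap is in the tail estimate. From the Lipschitz/Sobolev bound you only get $\sum_k k^2\|g_k\|_{L^2}^2 \lesssim \eta^2$, hence $\|g_k\|_{L^2}\lesssim \eta/k$; combining this with the addition-formula bound $\|g_k\|_\infty\le\sqrt{N_{k,d}}\,\|g_k\|_{L^2}$ gives $\|g_k\|_\infty\lesssim\eta\, k^{(d-4)/2}$, and the tail $\sum_{k>K} k^{(d-4)/2}$ \emph{diverges} once $d\ge 2$. Equivalently, the Lebesgue constant of the degree-$K$ spherical harmonic partial-sum operator on $\mathbb{S}^d$ grows like $K^{(d-1)/2}$, not like $\log K$; the $\log K$ rate you invoke is special to the circle. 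To recover the claimed rate one must replace the raw truncation $g^{\le K}$ by a suitable smoothing or near-positive summability operator (a de la Vall\'ee Poussin / Ces\`aro-type kernel), and then verify that the image of such an operator still has an efficient $\mathcal{F}_2$ representation --- this is exactly what Bach's Appendix~D.4 does, and it is the content that your sketch leaves out. There is also a secondary issue you do not address: by~\eqref{eq:bach_equality} the Funk--Hecke eigenvalues $\lambda_k$ vanish for an entire parity class of $k>\alpha$, so those spherical harmonic components of a generic $g$ simply cannot be represented in $\mathcal{F}_2$; this is resolved by using the fact that the lifted domain $\R^d\times\{R\}$ avoids antipodal pairs, so $g$ may be modified off the relevant hemisphere by a function of the opposite parity, but the proposal needs to say so.
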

\begin{proof}
See Proposition 6 of \cite{bach2017breaking}. Notice that in \cite{bach2017breaking} the factor in the bound is $\left( \frac{\delta}{\eta} \right)^{-2/(d+1)} \log \left(\frac{\delta}{\eta} \right)$, while we have $\left( \frac{R \delta}{\eta} \right)^{-2/(d+1)} \log \left(\frac{R \delta}{\eta} \right)$. 
The $R$ factor stems from the fact that we consider the neural network features to lie in $\mathbb{S}^d$, while \citet{bach2017breaking} considers them in the hypersphere of radius $R^{-1}$.
\end{proof}

\thmdfonespikedw*
\begin{proof}
We begin with the lower bound.
Let $U^{\star}$ and $f^{\star}$ be the matrix in $\mathcal{V}_k$ and function in $\text{Lip}_1(\R^k)$ where $\overline{\mathcal{W}}_{1,k}(\mu,\nu)$ is achieved (it is known in optimal transport that the supremum is achieved \citep{nilesweed2019estimation}), i.e.
\begin{align}
    \overline{\mathcal{W}}_{1,k}(\mu,\nu) = \mathbb{E}_{x \sim \mu} [f^{\star}(U^{\star} x)] - \mathbb{E}_{x \sim \nu} [f^{\star}(U^{\star} x)].
\end{align}
If $\mu$ (resp. $\nu$) is supported in the closed unit ball in $\R^d$, $\forall x \in \text{supp}(\mu), \ \|U^{\star} x\|_2 \leq \|x\|_2 \leq 1$. Thus, all that matters is the restriction of $f^{\star}$ to the closed unit ball of $\R^{k}$. We can apply \autoref{lem:lipschitz_bach} with $R=1, \eta=1$, which yields the existence of $h \in \mathcal{F}_2(\R^{k})$, such that $\|h\|_{\mathcal{F}_2} \leq \delta$ and
\begin{align}
\sup_{x \in \R^{k}, \|x\|_2 \leq 1} |h(x) - f^{\star}(x)| \leq C(k,\alpha) \delta^{-\frac{1}{\alpha+(k-1)/2}} \log \left(\delta \right),
\end{align}
when $\delta$ is larger than a constant depending on $k$ and $\alpha$. Thus,
\begin{align}
\sup_{x \in \R^{d}, \|x\|_2 \leq 1} |h(U^{\star} x) - f^{\star}(U^{\star} x)| \leq C(k,\alpha) \delta^{-\frac{1}{\alpha+(k-1)/2}} \log \left(\delta \right),
\end{align}
which implies that
\begin{align}
    |\overline{\mathcal{W}}_{1,k}(\mu,\nu) - (\mathbb{E}_{x \sim \mu} [h(U^{\star} x)] - \mathbb{E}_{x \sim \nu} [h(U^{\star} x)])| \leq 2 C(k,\alpha) \delta^{-\frac{1}{\alpha+(k-1)/2}} \log \left(\delta \right).
\end{align}
Now, $h \circ U^{\star}$ belongs to $\mathcal{F}_1(\R^d)$ by the argument of Section 4.6 of \cite{bach2017breaking}. Namely, if $h(x) = \int_{\mathbb{S}^{k}} (\langle \theta, (x,1) \rangle)^{\alpha}_{+} \ d\mu_h(\theta)$, we can write
\begin{align}
\begin{split}
    h(U^{\star} x) &= \int_{\mathbb{S}^{k}} (\langle \theta, (U^{\star} x,1) \rangle)^{\alpha}_{+} \ d\mu_h(\theta) = \int_{\mathbb{S}^{k}} (\langle \theta_{1:k}, U^{\star} x \rangle + \theta_{k+1})^{\alpha}_{+} \ d\mu_h(\theta) \\ &= \int_{\mathbb{S}^{k}} (\langle (U^{\star})^{\top} \theta_{1:k}, x \rangle + \theta_{k+1})^{\alpha}_{+} \ d\mu_h(\theta) = \int_{\mathbb{S}^{k}} (\langle ((U^{\star})^{\top}\theta_{1:k}, \theta_{k+1}), (x,1) \rangle)^{\alpha}_{+} \ d\mu_h(\theta) \\ &= \int_{\mathbb{S}^{k}} (\langle \theta, (x,1) \rangle)^{\alpha}_{+} \ d\tilde{\mu}_h(\theta),
\end{split}
\end{align}
where $\tilde{\mu}_h$ is the pushforward of $\mu_h$ by the map $\theta \mapsto ((U^{\star})^{\top}\theta_{1:k}, \theta_{k+1})$. The last equality follows from the fact that $\|((U^{\star})^{\top}\theta_{1:k}, \theta_{k+1})\|_2^2 = \theta_{1:k}^\top U^{\star} (U^{\star})^{\top} \theta_{1:k} + \theta_{k+1}^2 = \|\theta\|_2^2 = 1$. Moreover, this argument also shows that $h \circ U^{\star}$ has $\mathcal{F}_1$ norm $\gamma_1(h \circ U^{\star}) \leq \gamma_2(h) \leq \delta$. Hence, $\forall \mu, \nu \in \mathcal{P}(B_1(\R^d)),$ for $\delta$ larger than a constant depending on $k$,
\begin{align}
    \delta d_{\mathcal{B}_{\mathcal{F}_1}}(\mu,\nu) \geq \overline{\mathcal{W}}_{1,k}(\mu,\nu) - 2 C(k,\alpha) \delta^{-\frac{1}{\alpha+(k-1)/2}} \log \left(\delta \right).
\end{align}

The upper bound $\overline{\mathcal{W}}_{1,k}(\mu,\nu) \geq d_{\mathcal{B}_{\mathcal{F}_1}}(\mu,\nu)$ follows from 
\begin{align}
\begin{split}
    \overline{\mathcal{W}}_{1,k}(\mu,\nu) &= \max_{U \in \mathcal{V}_k} \sup_{f \in \text{Lip}_1(\R^k)}\mathbb{E}_{x \sim \mu} [f(U x)] - \mathbb{E}_{x \sim \nu} [f(U x)] \\ &\geq \max_{U \in \mathcal{V}_k} \sup_{f \in \mathcal{B}_{\mathcal{F}_1(\R^k)}}\mathbb{E}_{x \sim \mu} [f(U x)] - \mathbb{E}_{x \sim \nu} [f(U x)] \\ &=
    \sup_{f \in \mathcal{B}_{\mathcal{F}_1(\R^k)}} \mathbb{E}_{x \sim \mu} [f(x)] - \mathbb{E}_{x \sim \nu} [f(x)] = d_{\mathcal{B}_{\mathcal{F}_1}}(\mu,\nu)
\end{split}
\end{align}
In the second to last inequality we used once again that for all $f \in \mathcal{F}_1(\R^k)$ such that $\|f\|_{\mathcal{F}_1(\R^k)} = 1$, we have $f \circ U \in \mathcal{F}_1(\R^k)$ and $\|f \circ U\|_{\mathcal{F}_1(\R^k)} = 1$.
\end{proof}

\proptildetau*

\begin{proof}
The measure $d\tilde{\tau}(\sqrt{1-t^2} \xi, t) = \frac{1}{\pi} (1-t^2)^{-1/2} \ dt \ d\tau_{(d-1)}(\xi)$ is normalized because $\int_{\mathbb{S}^{d-1}} \int_{-1}^{1} (1-t^2)^{-1/2} \ dt \ d\tau_{(d-1)}(\xi) = \int_{-1}^{1} (1-t^2)^{-1/2} \ dt = \arcsin(1) - \arcsin(-1) = \pi$, where we used that $\int_{\mathbb{S}^{d-1}} d\tau_{(d-1)}(\xi) = 1$ by definition of $\tau_{(d-1)}$.
The characterization of the uniform measure $\tau$ follows from equation (1.17) of \cite{atkinson2012spherical}: $d\tau(\theta) = \frac{|\mathbb{S}^{d-1}|}{|\mathbb{S}^{d}|} (1-t^2)^{\frac{d-1}{2}} \ dt \ d\tau_{(d-1)}(\xi) = \frac{\Gamma((d+1)/2)}{\sqrt{\pi} \Gamma(d/2)} (1-t^2)^{\frac{d-1}{2}} \ dt \ d\tau_{(d-1)}(\xi)$.
For clarity, if we plug this change of variables into equation \eqref{eq:f2_kernel}, we obtain that the $F_2$ kernel reads:
\begin{align}
\begin{split}
    &k(x,y) = \int_{\mathbb{S}^d} \sigma(\langle (x,1), \theta \rangle) \sigma(\langle (y,1), \theta \rangle) d\tau(\theta) = \frac{\Gamma((d+1)/2)}{\sqrt{\pi} \Gamma(d/2)} \cdot \\ &\int_{\mathbb{S}^{d-1}} \int_{-1}^{1} \sigma\left(\langle (x,1), (\sqrt{1-t^2} \xi_{(d)}, t) \rangle \right) \sigma\left(\langle (y,1), (\sqrt{1-t^2} \xi, t) \rangle \right) (1-t^2)^{\frac{d-1}{2}} \ dt \ d\tau_{(d-1)}(\xi).
\end{split}    
\end{align}
\text
Notice that beyond the normalization factors, the main difference between $\tilde{k}$ and $k$ is the factor $(1-t^2)^{-1/2}$ instead of $(1-t^2)^{\frac{d-1}{2}}$.
\end{proof}

\begin{lemma} \label{lem:tilde_f2_bound}
Let $d_{\mathcal{B}_{\tilde{\mathcal{F}}_2}}$ be as defined in \eqref{eq:def_tilde_f2} and let $K = \{x \in \R^d | \|x\|_2 \leq 1 \} \times \{1\}$. Then, for any $\mu,\nu \in \mathcal{P}(K)$, $d^2_{\mathcal{B}_{\tilde{\mathcal{F}}_2}}(\mu,\nu)$ is lower-bounded by
\begin{align}
\begin{split}
\frac{1}{2 \pi} \frac{5}{6 \alpha 2^{\alpha/2}} \int_{\mathbb{S}^{d-1}} \sup_{\gamma \in [0,2\pi]} \left|\int_K \left(\langle (x,1), (\cos(\gamma) \xi_{(d)}, \sin(\gamma)) \rangle \right)^{\alpha}_{+} d(\mu-\nu)(x) \right|^3 \ d\tau_{(d-1)}(\xi_{(d)}).
\end{split}    
\end{align}
\end{lemma}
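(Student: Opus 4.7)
The plan is to parametrize $\mathbb{S}^d$ in the spherical coordinates naturally adapted to $\tilde{\tau}$, and then for each fixed base direction $\xi \in \mathbb{S}^{d-1}$ reduce the problem to a one-dimensional estimate: the $L^2$-norm in $\gamma$ of a Lipschitz function is bounded below by (a constant times) the cube of its $L^\infty$-norm divided by its Lipschitz constant. Concretely, I parametrize $\theta = (\cos\gamma\,\xi,\sin\gamma)$ with $\gamma \in [0,2\pi)$ and $\xi \in \mathbb{S}^{d-1}$, which is a $2$-to-$1$ cover of $\mathbb{S}^d$ (the pair $(\xi,\gamma)$ and $(-\xi,\pi-\gamma)$ map to the same point). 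By \autoref{prop:tildetau} and the change of variables $t=\sin\gamma$, $(1-t^2)^{-1/2}\,dt = d\gamma$ on each sheet, so accounting for the double cover one obtains
\begin{equation*}
d^2_{\mathcal{B}_{\tilde{\mathcal{F}}_2}}(\mu,\nu) \;=\; \frac{1}{2\pi}\int_{\mathbb{S}^{d-1}}\int_0^{2\pi} F(\xi,\gamma)^2\, d\gamma\, d\tau_{(d-1)}(\xi),
\end{equation*}
where $F(\xi,\gamma) := \int_K \bigl(\langle (x,1),(\cos\gamma\,\xi,\sin\gamma)\rangle\bigr)_{+}^{\alpha}\, d(\mu-\nu)(x)$.

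Next, for each $\xi$ I would control $F(\xi,\cdot)$ as a Lipschitz function of $\gamma$. Differentiating under the integral (valid because the non-smooth set of $\sigma$ has $\gamma$-measure zero for each $x$), the integrand satisfies $|\partial_\gamma(\cos\gamma\langle x,\xi\rangle + \sin\gamma)_{+}^{\alpha}| \le \alpha(1 + \langle x,\xi\rangle^2)^{\alpha/2} \le \alpha\,2^{\alpha/2}$ for $x$ with $\|x\|_2 \le 1$; combined with $|\mu-\nu|(K)\le 2$ this gives the Lipschitz constant $L := 2\alpha\,2^{\alpha/2}$. Setting $M(\xi) := \sup_\gamma |F(\xi,\gamma)|$, attained at some $\gamma^\star$, the Lipschitz cone bound $|F(\xi,\gamma)| \ge (M(\xi)-L|\gamma-\gamma^\star|)_{+}$ together with the sanity check $M(\xi)/L \le 1/\alpha < \pi$ (so the cone fits in one period) yields
\begin{equation*}
\int_0^{2\pi} F(\xi,\gamma)^2\, d\gamma \;\ge\; 2\int_0^{M(\xi)/L}(M(\xi)-Lu)^2\,du \;=\; \frac{2M(\xi)^3}{3L} \;=\; \frac{M(\xi)^3}{3\alpha\,2^{\alpha/2}}.
\end{equation*}
Inserting this into the first display and applying Fubini gives
$d^2_{\mathcal{B}_{\tilde{\mathcal{F}}_2}}(\mu,\nu) \ge \frac{1}{6\pi\alpha\,2^{\alpha/2}} \int_{\mathbb{S}^{d-1}} M(\xi)^3\, d\tau_{(d-1)}(\xi)$, which matches the form of the lemma.

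The spherical parametrization in Step~1 and the Lipschitz estimate in Step~2 are routine; I expect the main obstacle to be squeezing out the exact numerical constant $\tfrac{5}{6}$ in the lemma statement, since the bare cone bound above produces only $\tfrac{1}{3}$ (so the claim is sharper by a factor of $5/2$). Closing this gap should require a refined one-dimensional envelope that exploits more structure of $F(\xi,\cdot)$—for example, the fact that by positive homogeneity and the $(\cdot)_{+}$ truncation, $\partial_\gamma F$ cannot simultaneously saturate its bound on both sides of $\gamma^\star$, or a piecewise second-order Taylor estimate on the half-circle where the integrand is active; either way I expect only the numerical constant to improve, with the overall scaling $M(\xi)^3/(\alpha\,2^{\alpha/2})$ unchanged.
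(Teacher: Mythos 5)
Your parametrization, the Lipschitz estimate in $\gamma$, and the cone lower bound are exactly the steps of the paper's proof, so the two arguments are the same. The issue you flagged at the end — your constant $\tfrac{1}{3}$ versus the lemma's $\tfrac{5}{6}$ — is not a gap you need to close by a sharper envelope argument: the paper's $\tfrac{5}{6}$ is an algebra error, and your $\tfrac{1}{3}$ is what the argument actually gives. Concretely, the paper expands
\begin{equation}
\int_{0}^{s/L}\left(s-\gamma L\right)^2 d\gamma
\end{equation}
using $(s-\gamma L)^2 = s^2 - \gamma L s + (\gamma L)^2$, i.e.\ it drops the factor $2$ from the cross term. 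With the correct expansion the integral is $\tfrac{s^3}{3L}$, not $\tfrac{5 s^3}{6L}$. The paper also states the Lipschitz constant as $\alpha 2^{\alpha/2}$ after writing the intermediate bound $2\alpha\|(x,1)\|_2^{\alpha}$, but $2\alpha\|(x,1)\|_2^{\alpha} \le 2\alpha\,2^{\alpha/2}$ for $\|x\|_2\le1$, which is your $L$. Your final constant $\tfrac{1}{6\pi\alpha 2^{\alpha/2}}$ (two-sided cone, $L=2\alpha 2^{\alpha/2}$, cone fitting verified since $M(\xi)/L\le 1/\alpha<\pi$) is therefore the correct one, and the lemma's $\tfrac{5}{12\pi\alpha 2^{\alpha/2}}$ overstates the bound by a factor $5/2$; the downstream constant in \autoref{thm:d_tilde_f2} inherits the same correction, but the polynomial equivalence with $\underline{\mathcal{W}}_{1,1}$ is of course unaffected. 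You do not need the refined envelope in your last paragraph.
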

\begin{proof}
Using the change of variables $t = \sin(\gamma)$, we have
\begin{align}
\begin{split} \label{eq:d_tilde_f2_change}
    &d^2_{\mathcal{B}_{\tilde{\mathcal{F}}_2}}(\mu,\nu) \\ &= \frac{1}{\pi} \int_{\mathbb{S}^{d-1}} \int_{-1}^{1} \left( \int_K \left(\langle (x,1), (\sqrt{1-t^2} \xi_{(d)}, t) \rangle \right)^{\alpha}_{+} d(\mu-\nu)(x,1) \right)^2 (1-t^2)^{-1/2} \ dt \ d\tau_{(d-1)}(\xi_{(d)}) \\ &= \frac{1}{\pi} \int_{\mathbb{S}^{d-1}} \int_{-\pi/2}^{\pi/2} \left( \int_K \left(\langle (x,1), (\cos(\gamma) \xi_{(d)}, \sin(\gamma)) \rangle \right)^{\alpha}_{+} d(\mu-\nu)(x,1) \right)^2 \ d\gamma \ d\tau_{(d-1)}(\xi_{(d)}) \\ &= \frac{1}{2 \pi} \int_{\mathbb{S}^{d-1}} \int_{0}^{2\pi} \left( \int \left(\langle (x,1), (\cos(\gamma) \xi_{(d)}, \sin(\gamma)) \rangle \right)^{\alpha}_{+} d(\mu-\nu)(x,1) \right)^2 \ d\gamma \ d\tau_{(d-1)}(\xi_{(d)}).
\end{split}
\end{align}
We want to compute the Lipschitz constant of $\gamma \mapsto \int \left(\langle (x,1), (\cos(\gamma) \xi_{(d)}, \sin(\gamma)) \rangle \right)^{\alpha} d(\mu-\nu)(x)$. For $\alpha \geq 1$, the derivative of this mapping is:
\begin{align}
\begin{split}
    \int \alpha \ \left(\langle (x,1), (\cos(\gamma) \xi_{(d)}, \sin(\gamma)) \rangle \right)^{\alpha-1}_{+} \langle (x,1), (-\sin(\gamma) \xi_{(d)}, \cos(\gamma)) \rangle d(\mu-\nu)(x,1),
\end{split}
\end{align}
and its absolute value is upper-bounded by
\begin{align}
\begin{split}
    &2 \alpha |\langle (x,1), (\cos(\gamma) \xi_{(d)}, \sin(\gamma)) \rangle|^{\alpha-1} |\langle (x,1), (-\sin(\gamma) \xi_{(d)}, \cos(\gamma)) \rangle| \leq 2 \alpha \|(x,1)\|_2^{\alpha-1} \|(x,1)\|_2 \\ &= 2 \alpha \|(x,1)\|_2^{\alpha} \leq \alpha 2^{\alpha/2},
\end{split}
\end{align}
where we used that $\|x\|_2 \leq 1$ for $x$ in the support of $\mu$ or $\nu$. Thus, if we denote $s = \sup_{\gamma \in [0,2\pi]} \left|\int \left(\langle (x,1), (\cos(\gamma) \xi_{(d)}, \sin(\gamma)) \rangle \right)^{\alpha}_{+} d(\mu-\nu)(x) \right|$, we have
\begin{align}
\begin{split}
    &\int_{0}^{2\pi} \left( \int \left(\langle (x,1), (\cos(\gamma) \xi_{(d)}, \sin(\gamma)) \rangle \right)^{\alpha}_{+} d(\mu-\nu)(x) \right)^2 \ d\gamma \geq \int_{0}^{\frac{s}{\alpha 2^{\alpha/2}}} \left(s - \gamma \alpha 2^{\alpha/2} \right)^2 d\gamma \\ &= \int_{0}^{\frac{s}{\alpha 2^{\alpha/2}}} \left( s^2 - \gamma \alpha 2^{\alpha/2} s + \left( \gamma \alpha 2^{\alpha/2} \right)^2 \right) d\gamma = \frac{s^3}{\alpha 2^{\alpha/2}} - \frac{\alpha 2^{\alpha/2} s}{2} \left( \frac{s}{\alpha 2^{\alpha/2}} \right)^2 + \frac{\left( \alpha 2^{\alpha/2} \right)^2}{3} \left( \frac{s}{\alpha 2^{\alpha/2}} \right)^3 \\ &= \frac{5 s^3}{6 \alpha 2^{\alpha/2}}.
\end{split}
\end{align}
Hence,
\begin{align}
\begin{split}
    d^2_{\mathcal{B}_{\tilde{\mathcal{F}}_2}}(\mu,\nu) \geq \frac{1}{2 \pi} \frac{5}{6 \alpha 2^{\alpha/2}} \int_{\mathbb{S}^{d-1}} \sup_{\gamma \in [0,2\pi]} \left|\int \left(\langle (x,1), (\cos(\gamma) \xi_{(d)}, \sin(\gamma)) \rangle \right)^{\alpha}_{+} d(\mu-\nu)(x) \right|^3 \ d\tau_{(d-1)}(\xi_{(d)}).
\end{split}    
\end{align}
\end{proof}

\thmdtildeftwo*
\begin{proof}
We begin with the lower bound \eqref{eq:tilde_f2_lower}.
By the definition of the integral $1$-dimensional projection robust Wasserstein distance and the fact that the Stiefel manifold for $k=1$ is $\mathcal{S}^{d-1}$, we have
\begin{align}
    \underline{\mathcal{W}}_{1,1}(\mu,\nu) = \int_{\mathbb{S}^{d-1}} \mathcal{W}_1(u_{\#}\mu, u_{\#}\nu) d\tau(u),
\end{align}
where $u_{\#}\mu$ denotes the pushforward of $\mu$ by the map $\theta \mapsto \langle u, \theta \rangle$ and thus, $\mathcal{W}_1(u_{\#}\mu, u_{\#}\nu) = \min_{\pi \in \Gamma(\mu, \nu)} \int \|Ux-Uy\| d\pi(x,y)$. By the dual characterization of the 1-Wasserstein distance, for any $u \in \mathbb{S}^{d-1}$ we can write
\begin{align}
    \mathcal{W}_1(u_{\#}\mu, u_{\#}\nu) = \mathbb{E}_{x \sim \mu} [f^{\star}_u (\langle u, x \rangle)] - \mathbb{E}_{x \sim \nu} [f^{\star}_u(\langle u, x \rangle)]
\end{align}
for some function in $\text{Lip}_1(\R)$. Using the same argument as in \autoref{thm:d_f1_spiked_w}, \autoref{lem:lipschitz_bach} with $R=1, \eta=1$ yields the existence of $h_u \in \mathcal{F}_2(\R)$ such that $\|h_u\|_{\mathcal{F}_2} \leq \delta$ and
\begin{align}
\sup_{x \in \R, |x| \leq 1} |h_u(x) - f^{\star}_u(x)| \leq C(1,\alpha) \delta^{-\frac{1}{\alpha}} \log \left(\delta \right),
\end{align}
when $\delta$ is larger than a constant depending on $k$ and $\alpha$. Thus,
\begin{align}
\sup_{x \in \R^{d}, \|x\|_2 \leq 1} |h_u(\langle u, x \rangle) - f^{\star}_u(\langle u, x \rangle)| \leq C(1,\alpha) \delta^{-\frac{1}{\alpha}} \log \left(\delta \right),
\end{align}
which implies that
\begin{align}
\begin{split} \label{eq:wasserstein_approx}
    &|\mathcal{W}_1(u_{\#}\mu, u_{\#}\nu) - (\mathbb{E}_{x \sim \mu} [h_u(\langle u, x \rangle)] - \mathbb{E}_{x \sim \nu} [h_u(\langle u, x \rangle)])| \leq 2 C(1,\alpha) \delta^{-\frac{1}{\alpha}} \log \left(\delta \right) \\
    &\implies \mathbb{E}_{x \sim \mu} [h_u(\langle u, x \rangle)] - \mathbb{E}_{x \sim \nu} [h_u(\langle u, x \rangle)] \geq \mathcal{W}_1(u_{\#}\mu, u_{\#}\nu) - 2 C(1,\alpha) \delta^{-\frac{1}{\alpha}} \log \left(\delta \right).
\end{split}
\end{align}
And since $h_u(y) = \int_{\mathbb{S}^1} \sigma(\langle \theta, (y,1) \rangle) d\mu_{h_u}(\theta)$ for some $\mu_{h_u} \in \mathcal{M}(\mathbb{S}^1)$ such that $\|\mu_{h_u}\|_{\text{TV}} \leq \delta$, we have
\begin{align}
\begin{split} \label{eq:sup_bound}
    \mathbb{E}_{x \sim \mu} [h_u(\langle u, x \rangle)] - \mathbb{E}_{x \sim \nu} [h_u(\langle u, x \rangle)] &= \int \int_{\mathbb{S}^1} (\langle \theta, (\langle u, x \rangle,1) \rangle)^{\alpha}_{+} \ d\mu_{h_u}(\theta) \ d(\mu-\nu)(x) \\ &= \int_{\mathbb{S}^1} \int (\langle (\theta_1 u, \theta_2), (x,1) \rangle)^{\alpha}_{+} \ d(\mu-\nu)(x) \ d\mu_{h_u}(\theta) \\ &\leq \delta \sup_{\theta \in \mathbb{S}^1} \left|\int (\langle (\theta_1 u, \theta_2), (x,1) \rangle)^{\alpha}_{+} \ d(\mu-\nu)(x) \right|
    \\ &= \delta \sup_{\gamma \in [0,2\pi]} \left|\int (\langle (\cos(\gamma) u, \sin(\gamma)), (x,1) \rangle)^{\alpha}_{+} \ d(\mu-\nu)(x) \right|.
\end{split}
\end{align}
Hence, \eqref{eq:wasserstein_approx} and \eqref{eq:sup_bound} yield
\begin{align}
\begin{split} \label{eq:sup_gamma}
    &\delta \int_{\mathbb{S}^{d-1}} \sup_{\gamma \in [0,2\pi]} \left|\int (\langle (\cos(\gamma) u, \sin(\gamma)), (x,1) \rangle)^{\alpha}_{+} \ d(\mu-\nu)(x) \right| \ d\tau(\nu) \\ &\geq \underline{\mathcal{W}}_{1,1}(\mu,\nu) - 2 C(1,\alpha) \delta^{-\frac{1}{\alpha}} \log \left(\delta \right).
\end{split}    
\end{align}
If we use the Hölder inequality in the left-hand side of \eqref{eq:sup_gamma}, we obtain
\begin{align}
\begin{split} \label{eq:holder}
    &\delta \left( \int_{\mathbb{S}^{d-1}} \sup_{\gamma \in [0,2\pi]} \left|\int (\langle (\cos(\gamma) u, \sin(\gamma)), (x,1) \rangle)^{\alpha}_{+} \ d(\mu-\nu)(x) \right|^3 \ d\tau(\nu) \right)^{1/3} \\ &\geq \underline{\mathcal{W}}_{1,1}(\mu,\nu) - 2 C(1,\alpha) \delta^{-\frac{1}{\alpha}} \log \left(\delta \right).
\end{split}    
\end{align}
By \autoref{lem:tilde_f2_bound}, we have
\begin{align}
\begin{split}
    d^2_{\mathcal{B}_{\tilde{\mathcal{F}}_2}}(\mu,\nu) \geq \frac{1}{2 \pi} \frac{5}{6 \alpha 2^{\alpha/2}} \int_{\mathbb{S}^{d-1}} \sup_{\gamma \in [0,2\pi]} \left|\int \left(\langle (x,1), (\cos(\gamma) \xi_{(d)}, \sin(\gamma)) \rangle \right)^{\alpha}_{+} d(\mu-\nu)(x) \right|^3 \ d\tau_{(d-1)}(\xi_{(d)}).
\end{split}    
\end{align}
Hence, combining this bound with \eqref{eq:holder} we conclude that
\begin{align}
    \delta d^{2/3}_{\tilde{\mathcal{F}}_2}(\mu,\nu) \geq \left(\frac{5}{12 \pi \alpha 2^{\alpha/2}} \right)^{1/3} \left( \underline{\mathcal{W}}_{1,1}(\mu,\nu) - 2 C(1,\alpha) \delta^{-\frac{1}{\alpha}} \log \left(\delta \right) \right).
\end{align}
The upper bound $\pi d^2_{\mathcal{B}_{\tilde{\mathcal{F}}_2}}(\mu,\nu) \leq \underline{\mathcal{W}}_{1,1}(\mu,\nu)$ follows from 
\begin{align}
\begin{split}
    \underline{\mathcal{W}}_{1,1}(\mu,\nu) &= \int_{\mathbb{S}^{d-1}} \left( \sup_{f \in \text{Lip}_1(\R)}\mathbb{E}_{x \sim \mu} [f(\langle u, x \rangle)] - \mathbb{E}_{x \sim \nu} [f(\langle u, x \rangle)] \right) \ d\tau(u)
    \\ &\geq \frac{1}{2} \int_{\mathbb{S}^{d-1}} \left( \sup_{f \in \text{Lip}_1(\R)} \mathbb{E}_{x \sim \mu} [f(\langle u, x \rangle)] - \mathbb{E}_{x \sim \nu} [f(\langle u, x \rangle)] \right)^2 \ d\tau(u)
    \\ &\geq \frac{1}{2} \int_{\mathbb{S}^{d-1}} \left( \sup_{f \in \mathcal{B}_{\mathcal{F}_2(\R)}}\mathbb{E}_{x \sim \mu} [f(\langle u, x \rangle)] - \mathbb{E}_{x \sim \nu} [f(\langle u, x \rangle)] \right)^2 \ d\tau(u) 
    \\ &= \frac{1}{2} \int_{\mathbb{S}^{d-1}} \int_{\mathbb{S}^1} \left( \int (\langle (\langle u, x \rangle,1), \theta \rangle)^{\alpha}_{+} d(\mu-\nu)(x) \right)^2 \ d\tau_{(1)}(\theta) \ d\tau(u)
    \\ &= \frac{1}{2} \int_{\mathbb{S}^{d-1}} \int_{0}^{2\pi} \left( \int \left(\langle (x,1), (\cos(\gamma) u, \sin(\gamma)) \rangle \right)^{\alpha}_{+} d(\mu-\nu)(x) \right)^2 \ d\gamma \ d\tau(u)
    \\ &= \pi d^2_{\mathcal{B}_{\tilde{\mathcal{F}}_2}}(\mu,\nu)
\end{split}
\end{align}
In the first inequality, we used that $|\sup_{f \in \text{Lip}_1(\R)}\mathbb{E}_{x \sim \mu} [f(\langle u, x \rangle)] - \mathbb{E}_{x \sim \nu} [f(\langle u, x \rangle)]| \leq 2$ since $\|x\|_2 \leq 1$ for all $x$ in the support of $\mu$ or $\nu$.
In the second inequality, we used that $\mathcal{B}_{\mathcal{F}_2(\R)} \subseteq \mathcal{B}_{\mathcal{F}_1(\R)} \subseteq \text{Lip}_1(\R)$. The next equality follows from \eqref{eq:d_f2}. The last equality is from \eqref{eq:d_tilde_f2_change}.
\end{proof}

\section{Experimental details} \label{sec:details_exp}
For the the figures, the experiments were run with CPUs from a cluster, using a different 15GB RAM node for each dimension and repetition. The experiments for \autoref{fig:f1_f2_ipm_separation_figure} were the most computationally expensive taking about 40 hours to complete. We need to use a high amount of Monte Carlo samples from the measures to reduce the variance of the estimator, and samples are computationally expensive to obtain because the rejection rate for rejection sampling, which was the method we chose for simplicity, was high. The code would be faster if we had used MCMC methods to obtain the samples, but we are not too concerned about the speed because the only purpose is to plot figures, not to design an algorithm that can be implemented.

\paragraph{Details on \autoref{fig:f1_f2_ipm_separation_figure}.}
To get the theoretical $\mathcal{F}_1$ IPM estimate (which is \textit{still} an estimate, i.e. not a closed form expression), we use \eqref{eq:distances_ratio0}, which states that 
\begin{align}
d_{\mathcal{B}_{\mathcal{F}_1}}(\mu_d,\nu_d) =
\frac{2 \left|\int_{-1}^{1} P_{k,d}(t) \sigma(t) (1-t^2)^{\frac{d-3}{2}} \ dt \right|}{\int_{-1}^{1} |P_{k,d}(t)| (1-t^2)^{\frac{d-3}{2}} dt}.
\end{align}
To approximate this quantity, we observe that it can be expressed as $2\mathbb{E}_{t}[\sigma(t) \text{sign}(P_{k,d}(t))]$ when the distribution of $t$ has a density proportional to $|P_{k,d}(t)| (1-t^2)^{\frac{d-3}{2}}$ restricted to $[-1,1]$. We sample from this density using rejection sampling and obtain the desired estimate as the Monte Carlo estimate of $2\mathbb{E}_{t}[\sigma(t) \text{sign}(P_{k,d}(t))]$. 

The empirical $\mathcal{F}_1$ IPM estimate in the left plot is computed by writing, per \autoref{lem:f1_exp}, $d_{\mathcal{B}_{\mathcal{F}_1}}(\mu_d,\nu_d) = \sup_{\theta \in \mathbb{S}^{d-1}} \left| \int \sigma(\langle x, \theta \rangle) d(\mu_d-\nu_d)(x) \right|$. Since this supremum is attained at $\theta = e_d$ (see the proof of \autoref{lem:d_f1_comp} in \autoref{sec:proof_sep}), we rely on the Monte Carlo estimate $d_{\mathcal{B}_{\mathcal{F}_1}}(\mu_d,\nu_d) \approx \frac{1}{M}|\sum_{i=1}^M \sigma(\langle x_i, e_d \rangle) - \sigma(\langle y_i, e_d \rangle)|$, where $(x_i)_{i=1}^M$ and $(y_i)_{i=1}^M$ are i.i.d. samples from $\mu_d$ and $\nu_d$ respectively. Analogously, the $\mathcal{F}_2$ IPM estimate in the left plot is computed by writing, per \autoref{lem:f2_exp}, $d^2_{\mathcal{B}_{\mathcal{F}_2}}(\mu_d,\nu_d) = \int_{\mathbb{S}^{d-1}} \left( \int \sigma(\langle x, \theta \rangle) d(\mu_d-\nu_d)(x) \right)^2 d\tau(\theta)$. We use Monte Carlo estimates to approximate the integrals over $\mu_d-\nu_d$ and over $\tau$, i.e. 
\begin{align}
d^2_{\mathcal{B}_{\mathcal{F}_2}}(\mu_d,\nu_d) \approx \frac{1}{N} \sum_{j=1}^N \left( \frac{1}{M} \sum_{i=1}^M \sigma(\langle x_i, \theta_j \rangle) - \sigma(\langle y_i, \theta_j \rangle) \right)^2.
\end{align} 
In \autoref{fig:f1_f2_ipm_separation_figure} we used $M = 6000000, N = 10000$ and we obtained the samples $(x_i)_{i=1}^M$ and $(y_i)_{i=1}^M$ using rejection sampling. The curves for the empirical estimates in the left plot are obtained by running the Monte Carlo estimate 10 times; thick lines show the average, and error bars indicate the minimum and maximum values over the 10 repetitions. The empirical ratio in the right plot is obtained by dividing the $\mathcal{F}_1$ IPM estimate over the $\mathcal{F}_2$ IPM estimate, and its error bars are obtained by dividing the minimum value (resp. maximum) for the $\mathcal{F}_1$ IPM over the 10 repetitions by the maximum value (resp. minimum) for the $\mathcal{F}_2$ IPM.

\paragraph{Details on \autoref{fig:f1_f2_sd_separation_figure}.} The $\mathcal{F}_1$ SD estimate in the left plot is computed using that $\text{SD}_{\mathcal{B}_{\mathcal{F}_1^d}}(\mu_d,\nu_d)$ is equal to
\begin{align}
\begin{split}
\gamma_{k,d} \frac{k}{\alpha+1} \lambda_{k,d}^{(\alpha+1)} \sqrt{\sum_{i=1}^{d} \sup_{\theta^{(i)} \in \mathbb{S}^{d-1}} \left( \frac{d+\alpha-2}{k} \hat{\nabla}_i L_{k,d}(\theta^{(i)}) -(\alpha+1-k) L_{k,d}(\theta^{(i)}) \theta^{(i)}_i \right)^2}.
\end{split}
\end{align}
by equations \eqref{eq:sd_f1_development},  \eqref{eq:sup_rewritten} and \eqref{eq:sup_rewritten2}. In \eqref{eq:dim_d_sd} we lower-bound the supremum for $i=d$, which suffices for the lower bound in \autoref{lem:f1_sd_lower}. However, we need a procedure to approximate the suprema for $i = 1,\dots,d$.
By 
the fact that $L_{k,d}(x) = \|x\|^k P_{k,d}(\langle e_d, x \rangle/\|x\|)$ for any $x \in \R^d$ (see the second paragraph of \autoref{sec:sep_f1_f2_ipm}), we have that for all $\theta^{(i)} \in \mathbb{S}^{d-1}$, 
\begin{align} 
    \hat{\nabla}_i L_{k,d}(\theta^{(i)}) &= \hat{\nabla}_i \left( \|\theta^{(i)}\|^k P_{k,d}(\langle e_d, \theta^{(i)} \rangle/\|\theta^{(i)}\|) \right) 
    \\ &= \mathds{1}_{i=d} P'_{k,d}(\langle e_d, \theta^{(i)} \rangle) - \langle e_d, \theta^{(i)} \rangle 
    P'_{k,d}(\langle e_d, \theta^{(i)} \rangle) \theta^{(i)}_i + k P_{k,d}(\langle e_d, \theta^{(i)} \rangle) \theta^{(i)}_i
    \\ &= 
    \frac{k(k+d-2)}{d-1} P_{k-1,d+2}(\langle e_d, \theta^{(i)} \rangle) (\mathds{1}_{i=d} -\langle e_d, \theta^{(i)} \rangle \theta^{(i)}_i) + k P_{k,d}(\langle e_d, \theta^{(i)} \rangle) \theta^{(i)}_i
\end{align}
In the last equality we have used \eqref{eq:legendre_derivatives}. Thus, for $i \neq d$,
\begin{align} \label{eq:euclidean_grad_i}
    &\frac{d+\alpha-2}{k} \hat{\nabla}_i L_{k,d}(\theta^{(i)}) -(\alpha+1-k) L_{k,d}(\theta^{(i)}) \theta^{(i)}_i \\ &= - \frac{(d+\alpha-2)(k+d-2)}{d-1}\langle e_d, \theta^{(i)} \rangle 
    P_{k-1,d+2}(\langle e_d, \theta^{(i)} \rangle) \theta^{(i)}_i + (d+k-3) P_{k,d}(\langle e_d, \theta^{(i)} \rangle) \theta^{(i)}_i
\end{align}
We want to find $\theta^{(i)}$ that maximizes the absolute value of \eqref{eq:euclidean_grad_i} within $\mathbb{S}^{d-1}$, which via the change of variables $t = \langle e_d, \theta^{(i)} \rangle$ is equivalent to minimizing the one-dimensional function
\begin{align} \label{eq:1-d_problem}
    \left| - \frac{(d+\alpha-2)(k+d-2)}{d-1} t (1-t^2)^{1/2}
    P_{k-1,d+2}(t) + (d+k-3) (1-t^2)^{1/2} P_{k,d}(t) \right|
\end{align}
over $[-1,1]$. Here, we have used that the absolute value of \eqref{eq:euclidean_grad_i} is maximized when $\theta^{(i)} = \theta^{(i)}_i + \langle e_d, \theta^{(i)} \rangle e_d$, which implies that $\theta^{(i)}_i = \pm (1-t^2)^{1/2}$.
We can optimize \eqref{eq:1-d_problem} over $[-1,1]$ via brute force, since it is a one dimensional problem. On the other hand, when $i=d$ we have
\begin{align} \label{eq:euclidean_grad_ii}
    &\frac{d+\alpha-2}{k} \hat{\nabla}_i L_{k,d}(\theta^{(i)}) -(\alpha+1-k) L_{k,d}(\theta^{(i)}) \theta^{(i)}_i \\ &= \frac{(d+\alpha-2)(k+d-2)}{d-1} P_{k-1,d+2}(\langle e_d, \theta^{(i)} \rangle) (1-\langle e_d, \theta^{(i)} \rangle \theta^{(i)}_i) + (d+k-3) P_{k,d}(\langle e_d, \theta^{(i)} \rangle) \theta^{(i)}_i
\end{align}
We again the change of variables $t = \langle e_d, \theta^{(i)} \rangle$, which in this case implies that $t = \theta^{(i)}_i$. Thus, the problem to be solved for $i=d$ is
\begin{align} \label{eq:1-d_problem2}
    \left|\frac{(d+\alpha-2)(k+d-2)}{d-1}
    P_{k-1,d+2}(t) (1-t^2) + (d+k-3) P_{k,d}(t) t \right|.
\end{align}
The theoretical lower bound on the $\mathcal{F}_1$ SD is obtained directly by evaluating the right-hand side of \eqref{eq:f1_sd_lower}.

The $\mathcal{F}_2$ SD estimate is obtained as a Monte-Carlo estimate of the right-hand side of \eqref{eq:rh_1}. Namely, if $(\theta_j)_{j=1}^N$ and $(x_l)_{l=1}^M$ are uniform i.i.d. samples over $\mathbb{S}^{d-1}$, 
\begin{align}
    d^2_{\mathcal{B}_{\mathcal{F}_2}}(\mu_d,\nu_d) \approx \gamma_{k,d}^2 \sum_{i=1}^{d} \frac{1}{N} \sum_{j=1}^N \left(\frac{1}{M} \sum_{l=1}^{M} \nabla_i L_{k,d}(x_l) (\langle x_l, \theta_j \rangle)_{+}^{\alpha} \right)^2.
\end{align}

\paragraph{Details on \autoref{fig:f1_f2_wasserstein_figure}.} Denoting by $\mu_d$ the standard $d$-variate Gaussian and by $\nu_d$ the $d$-variate Gaussian with unit variance in all directions except for $e_d$ with variance 0.1,  we have taken $M$ samples $(x_i)_{i=1}^M$ of $\mu_d$ and $M$ samples $(y_i)_{i=1}^M$ of $\mu_d$. We used the same estimate for $d_{\mathcal{B}_{\mathcal{F}_1}}(\mu_d, \nu_d)$ and $d_{\mathcal{B}_{\mathcal{F}_2}}(\mu_d, \nu_d)$ as in \autoref{fig:f1_f2_ipm_separation_figure}, although in this case with bias term and with $M = 100000$ and $N=10000$. To obtain an estimate $d_{\mathcal{B}_{\tilde{\mathcal{F}}_2}}(\mu_d, \nu_d)$, we sample $N$ uniform samples $(\theta_i)_{i=1}^N$ from $\mathbb{S}^{d-1}$ and $N$ uniform samples $(t_i)$ from $[-1,1]$ and we compute
\begin{align}
d^2_{\mathcal{B}_{\mathcal{F}_2}}(\mu_d,\nu_d) \approx \frac{1}{N} \sum_{j=1}^N \left( \frac{1}{M} \sum_{i=1}^M \sigma(\langle (x_i,1), (\sqrt{1-t_j^2} \theta_j, t_j) \rangle) - \sigma(\langle (y_i,1), (\sqrt{1-t_j^2} \theta_j, t_j) \rangle) \right)^2.
\end{align}
Let $\mathcal{W}_{1}((\theta)_{\#} \mu, (\theta)_{\#}\nu)$ be the one-dimensional Wasserstein distance between the projections of $\mu$ and $\nu_d$ to the one dimensional subspace spanned by $\theta \in \mathbb{S}^{d-1}$. Let $\hat{\mu}_d = \frac{1}{M} \sum_{i=1}^M \delta_{x_i}$ and $\hat{\nu}_d = \frac{1}{M} \sum_{i=1}^M \delta_{y_i}$.
To estimate the max-sliced Wasserstein $\overline{\mathcal{W}}_{1,1}(\hat{\mu}_d, \hat{\nu}_d)$ we compute $\overline{\mathcal{W}}_{1,1}(\mu_d, \nu_d) \approx \mathcal{W}_{1}((e_d)_{\#} \hat{\mu}_d, (e_d)_{\#} \hat{\nu}_d)$ to $e_d$, because we know that in theory $e_d$ is the direction of maximal discrepancy. The one-dimensional Wasserstein distance can be computed quickly.

To estimate the sliced Wasserstein distance $\underline{\mathcal{W}}_{1,1}(\mu_d, \nu_d)$ we sample $N$ uniform samples $(\theta_i)_{i=1}^N$ from $\mathbb{S}^{d-1}$ and we compute
\begin{align}
    \underline{\mathcal{W}}_{1,1}(\mu_d, \nu_d) \approx \frac{1}{N} \sum_{j=1}^N \mathcal{W}_{1}((\theta_j)_{\#} \hat{\mu}_d, (\theta_j)_{\#} \hat{\nu}_d).
\end{align}

\end{document}